\documentclass{article}

\usepackage[accepted]{icml2025}
\usepackage{graphicx}

\usepackage{algorithm}

\usepackage[utf8]{inputenc} %

\usepackage{indentfirst,csquotes}
\usepackage{subfigure}
\usepackage{capt-of}     %

\def\withnotes{0}
\def\withcolors{0}
\ifnum\withnotes=1
\newcommand{\theertha}[1]{{\color{red}{theertha: #1}}}
\newcommand{\ahmad}[1]{{\color{cyan}{ahmad: #1}}}
\newcommand{\ziteng}[1]{{\color{purple}{ziteng: #1}}}
\newcommand{\ananth}[1]{{\color{green}{ananth: #1}}}
\newcommand{\fprost}[1]{{\color{yellow}{fprost: #1}}}
\newcommand{\jb}[1]{{\color{brown}{JB: #1}}}
\newcommand{\je}[1]{{\color{teal}{JE: #1}}}
\else
\newcommand{\theertha}[1]{}
\newcommand{\ahmad}[1]{}
\newcommand{\ziteng}[1]{}
\newcommand{\ananth}[1]{}
\newcommand{\fprost}[1]{}
\newcommand{\jb}[1]{{}}
\newcommand{\je}[1]{{}}
\fi
\ifnum\withcolors=1
\newcommand{\znew}[1]{{\color{purple}{#1}}}
\newcommand{\tnew}[1]{{\color{red}{#1}}}
\else
\newcommand{\znew}[1]{{{#1}}}
\newcommand{\tnew}[1]{{{#1}}}
\fi
\newcommand{\bonopt}{\texttt{bon\_fp}\xspace}
\newcommand{\wonopt}{\texttt{won\_fp}\xspace}
\newcommand{\p}{\pi_{\text{sft}}}
\newcommand{\sft}{\text{sft}}
\newcommand{\ro}{\text{ro}}
\newcommand{\bilevel}{\text{bilevel}}
\newcommand{\multitask}{\text{multi-task}}

\usepackage{bm}
\usepackage{bbm}
\newcommand{\mc}{\mathcal}
\newcommand{\by}{{\bm{y}}}
\newcommand{\bx}{{\bm{x}}}
\newcommand{\bz}{{\bm{z}}}

\newcommand{\bcX}{{\bm{\mathcal{X}}}}
\newcommand{\bcY}{{\bm{\mathcal{Y}}}}
\newcommand{\dd}{{\rm d}}
\newcommand{\cR}{\mathcal{R}}

\newcommand{\tr}{\cR}
\newcommand{\reg}{\beta}
\newcommand{\idc}[1]{\mathbbm{1}\!\left\{ #1\right\}}
\newcommand{\E}{\mathbb{E}}
\newcommand{\KL}{D_{\text{KL}}}

\newcommand{\calp}{g_{\pp}}

\newcommand{\pp}{\mathcal{T}}
\newcommand{\ppp}[1]{\pp_{#1}}
\newcommand{\bofnp}[1]{\bofn_{#1}}
\newcommand{\wofnp}[1]{\wofn_{#1}}

\usepackage[colorlinks,citecolor=blue,bookmarks=true]{hyperref}       %
\usepackage{url}            %
\usepackage{booktabs}       %
\usepackage{amsfonts}       %
\usepackage{nicefrac}       %
\usepackage{microtype}      %
\usepackage{xcolor}         %

\usepackage{amsmath, amsthm}
\usepackage{enumitem}
\usepackage{color}
\usepackage{graphics}
\usepackage{graphicx}
\usepackage{tikz}
\usetikzlibrary{positioning, shapes, arrows}
\usepackage{pgfplots}
\usepackage[capitalise]{cleveref}

\newcommand{\Paren}[1]{\left(#1\right)}

\renewcommand{\p}{{p}}

\newcommand{\bp}{{\pi_{\rm ref}}} %
\newcommand{\op}{\pi} %
\newcommand{\qt}{\mathcal{C}}
\newcommand{\iapo}{\texttt{InfAlign}\xspace}
\newcommand{\ctrl}{\iapo-\texttt{CTRL}\xspace}
\newcommand{\ipo}{\texttt{IPO}\xspace}

\newcommand{\bonbon}{\texttt{BoNBoN}}
\newcommand{\bond}{\texttt{BoND}}
\newcommand{\grpo}{\texttt{GRPO}}

\newcommand{\kl}{\text{KL}}

\newcommand{\car}{\mathcal{C}} %
\newtheorem{theorem}{Theorem}
\newtheorem{lemma}{Lemma}
\newtheorem{corollary}{Corollary}
\newtheorem{definition}{Definition}

\newcommand{\RR}{\mathbb{R}}

\newcommand{\bofn}{\texttt{BoN}\xspace}
\newcommand{\wofn}{\texttt{WoN}\xspace}

\usepackage{xspace}
\newcommand{\ie}{i.e.\@\xspace}

\newcommand{\eg}{e.g.\@\xspace}

\newcommand{\iid}{i.i.d.\xspace}
\newcommand{\cL}{\mathcal{L}}
\makeatletter
\newcommand{\specificthanks}[1]{\@fnsymbol{#1}}
\makeatother
\begin{document}
\twocolumn[
\icmltitle{\iapo: Inference-aware language model alignment}

\icmlsetsymbol{equal}{*}
\begin{icmlauthorlist}
\icmlauthor{Ananth Balashankar}{equal,gdm}
\icmlauthor{Ziteng Sun}{equal,gr}
\icmlauthor{Jonathan Berant}{gdm}
\icmlauthor{Jacob Eisenstein}{gdm}\\
\icmlauthor{Michael Collins}{gdm}
\icmlauthor{Adrian Hutter}{gdm}
\icmlauthor{Jong Lee}{gdm}
\icmlauthor{Chirag Nagpal}{gr}
\icmlauthor{Flavien Prost}{gdm}
\icmlauthor{Aradhana Sinha}{gdm}
\icmlauthor{Ananda Theertha Suresh}{gr}
\icmlauthor{Ahmad Beirami}{gdm}
\vspace{+20pt}
\end{icmlauthorlist}
\icmlaffiliation{gdm}{Google DeepMind}
\icmlaffiliation{gr}{Google Research}
\icmlcorrespondingauthor{\\Ananth Balashankar}{ananthbshankar@google.com}\icmlcorrespondingauthor{\\Ziteng Sun}{zitengsun@google.com}\icmlcorrespondingauthor{\\Jonathan Berant}{joberant@google.com}\icmlcorrespondingauthor{\\Jacob Eisenstein}{jeisenstein@google.com}\icmlcorrespondingauthor{\\Ananda Theertha Suresh}{theertha@google.com}\icmlcorrespondingauthor{\\Ahmad Beirami}{ahmad.beirami@gmail.com}
]

\printAffiliationsAndNotice{\icmlEqualContribution}

\begin{abstract}
    Language model alignment is a critical step in training modern generative language models. Alignment targets to improve win rate of a sample from the aligned model against the base model.
    Today, we are increasingly using inference-time algorithms (e.g., Best-of-$N$, controlled decoding, tree search) to decode from language models rather than standard sampling. We show that this train/test mismatch makes standard RL framework sub-optimal in view of such inference-time methods. To this end, we  propose a framework for inference-aware alignment (\iapo), which aims to optimize {\em inference-time win rate} of the aligned policy against the base model.
    We prove that for any inference-time decoding procedure, the optimal aligned policy is the solution to the standard RLHF problem with a {\em transformation of the reward}.
    This motivates us to provide the {calibrate-and-transform RL (\ctrl)} algorithm to solve this problem,
    which involves a reward calibration step and a KL-regularized reward maximization step with a transformation of the calibrated reward.
    For best-of-$N$  sampling 
    and best-of-$N$ jailbreaking, we propose specific transformations offering  up to 3-8\% improvement on 
    inference-time win rates. 
    Finally, we also show that our proposed reward calibration method is a strong baseline for optimizing standard win rate.
    \vspace{-.05in}
\end{abstract}

\vspace{-.3in}
\section{Introduction} \label{sec:intro}
\vspace{-.05in}

Aligning language models (LMs) through reinforcement learning from human feedback (RLHF) is a widely adopted finetuning framework to improve a reward (e.g., safety or quality). RLHF generally entails training a reward model, and then solving a KL-regularized RL problem~\citep{christiano2017deep, stiennon2020learning, ouyang2022training}. Other ways of solving RLHF include variants of direct preference optimization~\citep{rafailov2024direct, azar2023general}, reward model distillation~\citep{fisch2024robust}, and Best-of-$N$ distillation~\citep{ gui2024bonbonalignmentlargelanguage, amini2024variationalbestofnalignment, sessa2024bondaligningllmsbestofn}.
The success of RLHF is typically measured through the win rate of the aligned model against the base model, i.e., how often a sample from the aligned model wins against the base model using a judge for the task.

However, rarely is the aligned model used as is at inference time; instead an {\em inference-time procedure} is typically used to accomplish a task. For example, it is customary to perform one or more of the following procedures at decoding: best-of-$N$ sampling~\citep{nakano2022webgptbrowserassistedquestionansweringhuman, beirami2024theoretical}, best-of-$N$  jailbreaking~\citep{hughes2024bestofnjailbreaking}, chain-of-thought reasoning~\citep{wei2022chain, o1}, and self consistency~\citep{self-consistency, deepseekai2025deepseekr1incentivizing} (see \cref{sec:related} for a more comprehensive discussion).

\begin{figure}[t]
    \vspace{-0.1in}
    \centering
       \resizebox{0.85\linewidth}{!}{%
    \usetikzlibrary{positioning, shapes.geometric, arrows.meta, calc}

\begin{tikzpicture}[
  box/.style={rectangle, draw=black, thick, minimum width=2cm, minimum height=1cm, align=center},
  rbox/.style={rectangle, draw=red, thick, minimum width=2cm, minimum height=1cm, align=center},
  bbox/.style={rectangle, draw=blue, thick, minimum width=2cm, minimum height=1cm, align=center},
  fbox/.style={rectangle, fill=yellow!30, draw=black, thick, minimum width=2cm, minimum height=0cm, align=center},
  pbox/.style={rectangle, fill=purple!30, draw=black, thick, minimum width=0.5cm, minimum height=3.5cm, align=center},
  arrow/.style={->, thick, shorten >=2pt, shorten <=2pt, >=Stealth}
]

  \node (pi0) at (0, 2) {$\pi_0$};
  \node[right=1.5cm of pi0] (r) {$r$};
  \node[fbox, right=1cm of r.center, anchor=west] (rlhf) {RLHF};
  \node[right=0.5cm of rlhf, anchor=west] (pi) {$\pi$};

  \node[rbox, fill=cyan!20, below=1cm of r] (reward_transform) {Inference-aware\\reward calibration\\\& transformation};
  \node[below=1cm of reward_transform] (r_star) {$\tr$};
  \node[fbox, right=1cm of r_star.center] (rlhf_bottom) {RLHF};
  \node[right=0.5cm of rlhf_bottom, anchor=west] (pi_star) {$\pi^*$};

    \node[right=1.5cm of pi] (t_pi) {$\ppp{\pi}$};
  \node[right=1.5cm of pi_star] (t_pi_star) {$\ppp{\pi^*}$};

  \draw[arrow] (pi0) -- ($(pi0.north)+(0,0.5)$) -- ($(rlhf.north)+(0,0.5)$) -- (rlhf.north);
  \draw[arrow] (r) -- (rlhf.west);
  \draw[arrow] (r) -- (reward_transform);
  \draw[arrow] (rlhf) -- (pi);
  \draw[arrow] (pi) -- (t_pi);

  \draw[arrow] (reward_transform) -- (r_star);
  \draw[arrow] (r_star) -- (rlhf_bottom);
  \draw[arrow] (rlhf_bottom) -- (pi_star);

  \draw[arrow] (pi0) -- ($(reward_transform -| pi0)$) -- (reward_transform.west);
\draw[arrow] (pi0) -- ($([yshift=-0.5cm]rlhf_bottom.south -| pi0)$) -- ($(rlhf_bottom.south)-(0,0.5)$) -- (rlhf_bottom.south);
  \draw[arrow] (pi_star) -- (t_pi_star);

 \node[anchor=west, above=0.25cm of t_pi] (sota_text) {Standard RLHF};
  \node[anchor=west, below=1cm of t_pi] (infalign_text) {$\qquad$InfAlign};
  \node[pbox, right=3.2cm of reward_transform.east, anchor=west] (policy_transform) {\rotatebox{90}{
  $\pp:$ Inference-time procedure }};
  
    \draw[arrow] (policy_transform) -- (reward_transform);
\draw[red, dashed, thick] ($(reward_transform.north west)+(-1,0.5)$) -- ($(reward_transform.north west)+(9,0.5)$);
\end{tikzpicture}
    \vspace{-0.2in}
    }
    \caption{Given an inference-time procedure such as Best-of-$N$, standard RLHF suffers from a train/test mismatch between \tnew{train-time policy} $\pi$ and \tnew{inference-time policy} $\ppp{\pi}.$ \iapo bridges the gap by optimizing a policy-transformed reward $\tr$, yielding a policy, $\pi^*,$ that is optimized for inference under $\ppp{\pi^*}$.}
    \label{fig:ctrl}
    \vspace{-.2in}
\end{figure}

In this paper, we address the following question: {\em Can we better align a language model to be served with a known inference-time procedure?} We propose  a family of optimization objectives, which we call the {\em inference-aware alignment} (\iapo) framework. The objective optimizes for the {\em inference-time win rate} against the base model with KL regularization, where inference-time win rate entails obtaining a response from each model through the \emph{inference-time procedure} and counting which sample wins. %
While directly optimizing the inference-time win rate seems intractable,
we prove that 
the solution could be obtained by solving RLHF with a specific transformation of the reward (\cref{thm:exisitence_informal}). Therefore, the challenge of optimizing for inference-time win rate can be captured by designing a reward transformation that is suited to the specific inference-time procedure.  We present a diagram of \iapo in \cref{fig:ctrl}.

We show that the optimal reward transformation satisfies a coupled-transformed reward/policy optimization objective which lends itself to iterative optimization for a large class of inference-time procedures (\cref{lem:coupled}). However, the approach is unfortunately computationally inefficient and infeasible for real-world models. 

To enable practical solutions to \iapo, we propose Calibrate-and-Transform Reinforcement Learning (\ctrl), which adopts a three-step approach: (1) {\it calibrate} the scores of the reward model with respect to responses sampled per-prompt by the reference model; (2) {\it transform} the calibrated scores for a given inference-time procedure; and (3) solve the RLHF problem with the transformed reward.
We prove various desirable properties of the reward calibration step and empirically show that it reduces reward hacking and improves standard win rate. Reward transformation allows us to further tailor the objective based on the inference-time procedure.
Different choices of transformation lead to popular existing alignment objectives such as IPO~\citep{azar2023general} and best-of-$N$ distillation~\citep{gui2024bonbonalignmentlargelanguage, amini2024variationalbestofnalignment, sessa2024bondaligningllmsbestofn}. 

We particularize the study to two simple yet popular inference-time strategies: Best-of-$N$ (\bofn) sampling~\citep{nakano2022webgptbrowserassistedquestionansweringhuman, beirami2024theoretical} and Best-of-$N$ jailbreaking \citep{hughes2024best}, which we call Worst-of-$N$ (\wofn) since the defender may assume that the attacker is choosing the worst outcome of $N$ samples. %
Despite simplicity, \bofn is known to be an effective procedure for inference-time alignment~\citep{beirami2024theoretical, gui2024bonbonalignmentlargelanguage, mudgal2023controlled} and is the dominant approach in scaling inference-time compute~\citep{snell2024scaling, brown2024large}.  Variants of \wofn are effective and popular for evaluating safety against jailbreaks \citep{yohsua2024international, pair, souly2024strongreject, hughes2024bestofnjailbreaking, beetham2024liar, mehrabi2023flirt}. %
We find suitable reward transformations for these procedures through the \iapo framework. 
Empirically, we apply \ctrl to the Anthropic helpfulness~\citep{bai2022training}, and Reddit summarization dataset~\citep{stiennon2020learning} for optimizing \bofn performance and Anthropic harmlessness for optimizing \wofn performance with various values of $N.$  We show that solving \iapo through \ctrl outperforms various SOTA RLHF solvers at inference-time win rate by 3-8\%. We also show \ctrl (with no reward transformation) is on-par with SOTA methods for optimizing the standard win rate, with improved win rate for Anthropic helpfulness and harmlessness tasks.

{\bf Organization.} In \cref{sec:setup}, we provide the \iapo problem setup. In \cref{sec:calibration}, we introduce RLHF with reward transformation as the main framework to solve \iapo.
In \cref{sec:ctrl}, we present the \ctrl method, discuss properties of calibration, and present practical implementations. In \cref{sec:results}, we provide experimental results.

\vspace{-.1in}
\section{Problem setup} \label{sec:setup}

We consider a generative language model that produces a response conditioned on an input prompt. %
Given a prompt $\bx$, \eg, $\bx = \textit{What is a large language model?},$ a generative language model $\op$, or a policy, specifies a conditional distribution $\op(\cdot \mid \bx)$, from which the response $\by$ should be generated. We use $\bcX$ and $\bcY$ to denote the space of possible inputs and outputs, respectively. Throughout the paper, we
assume 
$\bp$ is a fixed base policy, \eg, obtained from supervised finetuning. We will often use the notation $\pi(\by \mid \bx) \propto f(\by)$ for some $f: \bcY \rightarrow \RR_+$ to denote the conditional distribution $\pi(\by \mid \bx) = f(\by)/(\sum_{\by}f(\by))$ obtained after normalization.

\vspace{-.05in}
\paragraph{Alignment of language models.} Let $r: \bcX \times \bcY \rightarrow \mathbb{R}$ be a reward function that assigns a scalar value to any (prompt, response) pair, e.g., a model trained side-by-side on human preferences. The reward determines the goodness of response $\by$ in context $\bx$. The goal of language model alignment is to construct an aligned distribution $\op$ that improves the reward of the response while being close to $\bp$. We introduce the popular
 \kl-regularized reinforcement learning (RL) framework, also known as RLHF.

\newcommand{\obj}{\mathcal{L}}
\begin{definition}[\kl-regularized RL] \label{def:kl_rl}
    Let $\beta > 0$ be a regularization parameter, the KL-regularized RL (RLHF) problem aims to maximize the expected reward with a KL regularizer
    below:\footnote{%
    The solution to this optimization problem is unique and admits a closed-form expression~\citep{korbak2022rl,rafailov2024direct,yang2024asymptotics}. %
    } \vspace{-.2in}
     \begin{align*}
   \op^*_{r, \beta}(\cdot \mid \bx) = \arg \max_{\op} \left\{\obj_{\bp, r, \reg} \right \},\vspace{-.3in} 
   \end{align*}
   where\vspace{-.1in}
   \begin{align*}
       \obj_{\bp, r, \reg} \!\!= \!\! \mathbb{E}_{\by \sim \op(\cdot | \bx)} \{r(\bx, \by)\} \!\!- \!\!\beta \KL(\op(\cdot | \bx) \| \bp(\cdot | \bx)).
   \end{align*}
\end{definition}

When evaluating an aligned policy, a common measure  to use is the win rate~\cite{stiennon2020learning, hilton2022measuring} over the base policy 
$\bp$. Let 
\begin{align*}
   w_r(\by, \bz | \bx) \!  := \!\!  \idc{r(\bx, \by) \!\! >  \!\! r(\bx, \bz)} \!  + \!  \frac{\idc{r(\bx, \by) \!\! = \!\!  r(\bx, \bz)}}{2}
\end{align*}
be the win random variable under reward  $r$. We define the calibrated reward and win rate below.

\begin{definition}[Calibrated reward] \label{def:calibrate_reward}
The {\it calibrated reward} \footnote{The definition is similar to the cumulative density  function (CDF) of the reward $r(\bx, \by)$ under $\pi$ except for how ties are decided.}
$\qt_{r, \pi}( \bx, \by)$  %
under policy $\pi$ is defined below
\begin{equation}
    \qt_{r, \pi}(\bx, \by) :=  \mathbb{E}_{\bz \sim \pi(\cdot \mid \bx)} \{ w_r(\by, \bz \mid \bx)\}. \label{eqn:def_quantile}
\vspace{-.2in}
\end{equation}
\label{def:calibrated-reward}
\end{definition}
\begin{definition}[Standard win rate]\label{def:win_rate}
For any policy $\pi_1$ and $\pi_2$, the standard win rate (or win rate) of policy $\pi_1$ over policy $\pi_2$ given prompt $\bx$, as measured by reward $r$ is defined as:
\begin{equation*} 
     W_r(\pi_1 \succ \pi_2 \mid \bx) := \E_{\by \sim \op_1(\cdot\mid \bx), \bz \sim \op_2(\cdot \mid \bx)}\{w_r(\by, \bz\mid \bx)\}.
\end{equation*}
\end{definition}
Moreover, it can be shown that $W_r(\pi_1 \;\succ\; \pi_2| \bx) = \E_{\by \sim \op_1(\cdot| \bx)}\{\qt_{r, \pi_2}(\bx, \by)\}.$

\vspace{-.05in}
\paragraph{Inference-time procedure ($\pp$).} As mentioned earlier, in many cases, decoding is done through an inference-time procedure. The obtained sample follows a transformed distribution that depends on the policy and the inference-time procedure.
Let $\Delta_\bcY$ be the set of possible distributions over the output set $\bcY$. In this work, we model an inference-time procedure as a mapping between distributions over the output space $\pp: \Delta_{\bcY} \rightarrow \Delta_{\bcY}$,\vspace{-.1in}
\[
    \pi(\cdot \mid \bx) \stackrel{\pp}{\longrightarrow} \ppp{\pi}(\cdot \mid \bx),
\vspace{-.1in}
\]
where $\ppp{\pi}(\cdot \mid \bx)$ denotes the  distribution of responses conditioned on $\bx$ after the inference-time procedures is applied.
In these cases, it is customary to compare the models by considering the following inference-time win rate of the aligned policy $\pi$.
\begin{definition}[Inference-time win rate] \label{def:pp_wr}
Under inference-time processing $\pp$, the inference-time win rate of policy $\pi_1$ over $\pi_2$ is defined as %
\begin{equation*} 
     W^\pp_r(\pi_1\!\succ \!\pi_2 | \bx) \!:=\! \E_{\by \sim \ppp{\pi_1}(\cdot | \bx), \bz \sim \ppp{\pi_2}(\cdot |\bx)} \{w_r(\by, \bz |\bx)\}.
     \vspace{-.1in}
\end{equation*}
\end{definition}

Similarly, it can be shown that
\begin{align}\label{lem:pp_wr}
    W^\pp_r(\pi_1 \succ \pi_2 \mid \bx)
    = \sum_{\by} \qt_{r, \ppp{\pi_2}}(\bx, \by) \ppp{\pi_1}(\by \mid\bx),
\vspace{-.15in}
\end{align}
where $\qt_{r, \ppp{\pi_2}}(\bx , \by) $ is the calibrated reward under the inference-time policy $\ppp{\pi_2}$ (see Proof in \cref{sec:coupled_proof}).

With the above definitions at hand, the goal of \iapo is solve the following KL-regularized inference-time win rate maximization problem.

\begin{definition}[\iapo]
For a given inference-time procedure $\pp$ and $\reg \!> \!0,$ \iapo solves the following \kl-regularized inference-time win rate maxmization problem: \vspace{-.2in}%
\begin{align} \label{eqn:kl-constrained-ppwr}
  \max_\op \hspace{-.02in}\left\{ W^\pp_r(\op \hspace{-.03in}\succ\hspace{-.03in} \bp | \bx) - \reg \KL(\op(\cdot | \bx) \| \bp(\cdot | \bx))\right\}\hspace{-.03in}.
\end{align}
\vspace{-.3in}
\label{def:kl-constrained-ppwr}
\end{definition}
The formulation of optimizing standard win-rate vs KL tradeoff has been previous studied for RLHF in \citet{gui2024bonbonalignmentlargelanguage,azar2023general}. The above formulation reduces to the IPO objective \citep[Equation (8)]{azar2023general} when $\pp$ is the identity transformation and extends it otherwise for an arbitrary inference-time procedure.\footnote{One question that arises is the role of the KL divergence regularizer in \cref{eqn:kl-constrained-ppwr}. We argue that the regularizer essentially enables multi-tasking between the SFT task and the RL task, which we formally prove for log-linear models in \cref{app:role-KL}.  In other words, the KL divergence regularizer enables to preserve/distill the core capabilities of the SFT model while acquiring a new one through the RLHF process.} In practice, the win rate is often evaluated by a judge different from the training time reward $r$. In this paper, we stick to $r$ when developing the algorithms as in standard RLHF framework (\cref{def:kl_rl}). In experiments, we use a separate judge from the reward model.

\vspace{-.05in}
\paragraph{Continuous language models.} 
For simplicity of the presentation and analysis, some of the theoretical results will be based on the assumption that $\bcY$ is a continuous set, and the language model has a density over $\bcY$.
We also assume that $r$ assigns distinct rewards to different $\by$'s for a given $\bx$. Note that we don't make any such assumptions when providing our algorithmic developments, and the experimental results.

These assumptions have been made in the past implicitly by~\cite{hilton2022measuring} to estimate the KL divergence of Best-of-$n$ policy, and by \cite{gui2024bonbonalignmentlargelanguage} to characterize the KL divergence and win rate tradeoffs. While these assumptions lead to approximations when analyzing real-world distributions, the results derived under them are reasonably tight when the actual likelihood of the language model outcomes are small~\citep{beirami2024theoretical}. %

\vspace{-.1in}
\section{Reinforcement learning with reward transformation} \label{sec:calibration}
In this section, we propose a general framework for solving \iapo (\cref{def:kl-constrained-ppwr}). Our approach is based on designing a new reward function $\tr$ based on the reward model $r$, the inference-time procedure $\pp$, and the base policy $\bp$, such that solving the RLHF problem (\cref{def:kl_rl}) with the transformed reward $\tr$ leads to an optimal solution to \iapo. More precisely, %
the aligned policy is the maximizer of the following regularized objective:
\begin{equation} \label{eqn:kl-rl-tr}
    \vspace{-10pt}
    \tr_{r, \bp, \pp}(\bx, \by) - \reg \KL(\op(\cdot \mid \bx) \| \bp(\cdot \mid \bx))
    ,
\end{equation}

where $\tr_{r, \bp, \pp}(\bx, \by)$ is a transformed reward function.
Interestingly, we show that such reward transformation is sufficient to solve \iapo. %

\begin{lemma} \label{thm:exisitence_informal}
For any base policy $\bp$, reward model $r$, inference-time procedure $\pp$, and $\reg > 0$, there exists a reward function $\tr_{r, \bp, \pp}$  such that the maximizer of \cref{eqn:kl-rl-tr} solves the optimization problem in \cref{eqn:kl-constrained-ppwr} (\cref{def:kl-constrained-ppwr}).
\end{lemma}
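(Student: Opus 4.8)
The plan is to exploit the closed-form maximizer of the KL-regularized RL objective (cf.\ the footnote to \cref{def:kl_rl}): for any reward $\tr_{r,\bp,\pp}$, the unique maximizer of \cref{eqn:kl-rl-tr} is $\op^*_{\tr,\reg}(\by \mid \bx) \propto \bp(\by \mid \bx)\exp\!\big(\tr_{r,\bp,\pp}(\bx,\by)/\reg\big)$, so it suffices to ``invert'' this Gibbs map. Concretely I would: (i) show that the inference-time objective in \cref{eqn:kl-constrained-ppwr} admits a maximizer $\op^*$; (ii) argue that any such $\op^*$ is mutually absolutely continuous with $\bp$; (iii) \emph{define} $\tr_{r,\bp,\pp}(\bx,\by) := \reg \log\!\big(\op^*(\by \mid \bx)/\bp(\by \mid \bx)\big)$; and (iv) substitute this reward into the closed form to see that the maximizer of \cref{eqn:kl-rl-tr} equals $\op^*$, which by construction solves \cref{eqn:kl-constrained-ppwr}. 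Note that $\tr_{r,\bp,\pp}$ may (and in general will) depend on $\bx$, which is allowed for a reward function; the lemma only asserts existence of such a transformation.

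Step (ii) is where the regularizer does its work. If $\op^*$ is not absolutely continuous w.r.t.\ $\bp$, then $\KL(\op^*\|\bp) = +\infty$ and the objective of \cref{eqn:kl-constrained-ppwr} is $-\infty$, which cannot be maximal (taking $\op = \bp$ gives a finite value), so $\op^* \ll \bp$. Conversely, $\op^*$ must put positive mass wherever $\bp$ does: the win-rate term $W^\pp_r(\op \succ \bp \mid \bx) \in [0,1]$ is bounded and varies smoothly with $\op$ for the inference-time procedures of interest, so moving an infinitesimal mass onto a point $\by$ with $\bp(\by\mid\bx) > 0$ but $\op^*(\by\mid\bx) = 0$ affects it only to first order with a finite coefficient, whereas the marginal gain from relaxing the $-\reg\KL(\op^*\|\bp)$ term at an empty atom is infinite; hence such a perturbation strictly increases the objective, contradicting optimality of $\op^*$. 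Therefore $\log(\op^*/\bp)$ is finite $\bp$-almost surely, $\tr_{r,\bp,\pp}$ is a genuine real-valued reward, and plugging it into the closed form gives $\op^*_{\tr,\reg} \propto \bp \cdot \exp\!\big(\log(\op^*/\bp)\big) = \op^*$, completing (iv). (A minor technical nuisance here is bookkeeping the $0\log 0$, equivalently extended-real-reward, conventions; one can instead work with a truncation of $\tr$ and pass to a limit.)

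The main obstacle is step (i), existence of a maximizer of \cref{eqn:kl-constrained-ppwr}. Unlike plain RLHF, the inference-time map $\op \mapsto \ppp{\op}$ is generally nonlinear --- for Best-of-$N$ it is a degree-$N$ polynomial map of the distribution and is not concave --- so $W^\pp_r(\op \succ \bp\mid\bx)$ need not be concave in $\op$ and there is no convex-duality shortcut. I would argue existence topologically: over finite $\bcY$ the feasible set $\{\op : \op \ll \bp\}$ is compact, $\pp$ is continuous and $w_r$ bounded, so $W^\pp_r$ is continuous and $-\KL(\cdot\|\bp)$ upper semicontinuous, whence the supremum is attained; the continuous-$\bcY$ case invokes the paper's standing regularity assumptions together with the explicit structure of the procedure (as in the cited prior work on Best-of-$N$). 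Finally, a more explicit --- if implicit --- description of $\tr_{r,\bp,\pp}$ follows by rewriting \cref{lem:pp_wr} via a change of measure, $W^\pp_r(\op \succ \bp \mid \bx) = \E_{\by \sim \op(\cdot\mid\bx)}\big[\tfrac{\ppp{\op}(\by\mid\bx)}{\op(\by\mid\bx)}\,\qt_{r,\ppp{\bp}}(\bx,\by)\big]$: the transformed reward multiplies the calibrated reward $\qt_{r,\ppp{\bp}}$ by the likelihood ratio between the inference-time and train-time policies, a quantity that itself depends on $\op$ and hence is pinned down only by a coupled reward/policy system --- which is exactly what \cref{lem:coupled} formalizes and what makes an iterative scheme natural.
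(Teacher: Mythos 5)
Your proposal is correct and takes essentially the same route as the paper: the paper's proof also defines $\tr(\bx,\by) = \reg\log\bigl(\pi^*_\pp(\by\mid\bx)/\bp(\by\mid\bx)\bigr)$ for a maximizer $\pi^*_\pp$ of \cref{eqn:kl-constrained-ppwr} and invokes the closed-form Gibbs solution of the KL-regularized RL problem to conclude. You are in fact somewhat more careful than the paper, which takes existence of the maximizer for granted and only checks one direction of absolute continuity ($\pi^*_\pp \ll \bp$), not the reverse direction needed to keep $\log(\pi^*_\pp/\bp)$ from being $-\infty$.
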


In general, such optimal reward transformation will depend on the base policy $\bp$, the inference-time procedure $\pp$, and the reward model $r$. In the lemma below, we list the property that the reward transformation and the resulting optimal aligned policy must satisfy.
\begin{theorem}[Characterization of \iapo solution] \label{lem:coupled}
Assuming that $\pp$ is such that $\partial \ppp{\pi}(\by_1\mid\bx)/ \partial \pi (\by_2\mid\bx)$\footnote{To make sure that the partial derivative is well-defined, we assume that $\pp$ is well-defined for inputs within an infinitesimal expansion of $\Delta_\bcY$. We note that the assumption holds for procedures like \bofn and \wofn, as shown in \cref{lem:bofn_wofn}.} exists for all $\bx, \by_1, \by_2$, then we have the optimal transformed reward $\tr$ and the optimal policy $\pi^*$ in \cref{eqn:kl-constrained-ppwr} must satisfy the following coupled equations: $\forall \bx, \by$
\begin{align}
\pi^*(\by|\bx) & \propto \bp(\by \mid \bx) e^{\frac{1}{\beta} \tr(\bx,\by)} \label{eqn:pi_update}\\
    \tr(\bx,\by) &=  \frac{\partial  }{\partial \pi(\by \mid \bx)}W^\pp_r(\pi \succ \bp \mid \bx) \vert_{\pi = \pi^*} \label{eqn:reward_update}
    \\ & 
    = \sum_{\bz} \qt_{r, \ppp{\bp}}(\bx, \bz) \frac{\partial \ppp{\pi}(\bz \mid\bx)  }{\partial \pi(\by\mid \bx)} \vert_{\pi = \pi^*}, \label{eqn:reward_expand}
\end{align}
where the last inequality is due to \cref{lem:pp_wr}.
\end{theorem}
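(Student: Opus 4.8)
The plan is to treat \cref{eqn:kl-constrained-ppwr} as a constrained optimization over the probability vector $\op(\cdot\mid\bx)\in\Delta_\bcY$ for each fixed $\bx$, and to read off its first-order (stationarity) conditions; the two displays in the theorem are exactly these conditions. Write $F(\op) := W^\pp_r(\op\succ\bp\mid\bx) - \reg\,\KL(\op(\cdot\mid\bx)\|\bp(\cdot\mid\bx))$ for the objective. The first step is to locate the maximizer $\pi^*$: it must be absolutely continuous with respect to $\bp(\cdot\mid\bx)$ (otherwise the KL term, hence $F$, is $-\infty$), and moreover it places strictly positive mass on every $\by$ in the support of $\bp(\cdot\mid\bx)$, because the partial derivative of $-\reg\,\KL$ with respect to $\op(\by\mid\bx)$ equals $-\reg(\log(\op(\by\mid\bx)/\bp(\by\mid\bx))+1)$, which blows up to $+\infty$ as $\op(\by\mid\bx)\downarrow 0$. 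Hence $\pi^*$ lies in the relative interior of the face of $\Delta_\bcY$ supported on $\mathrm{supp}(\bp(\cdot\mid\bx))$, so the Karush--Kuhn--Tucker condition for the single normalization constraint is an equality rather than an inequality.

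Second, introduce a Lagrange multiplier $\lambda=\lambda(\bx)$ for $\sum_\by \op(\by\mid\bx)=1$; by the previous step the stationarity condition reads $\partial F/\partial\op(\by\mid\bx)\big|_{\op=\pi^*} = \lambda$ for every $\by$ in the support. The partial derivative of the KL term is elementary. It remains to interpret $\partial W^\pp_r(\op\succ\bp\mid\bx)/\partial\op(\by\mid\bx)$ — this is precisely where the differentiability hypothesis on $\pp$ enters: the extension of $\pp$ to an infinitesimal neighborhood of $\Delta_\bcY$ (see the footnote to the theorem, verified for \bofn and \wofn in \cref{lem:bofn_wofn}) makes $\op\mapsto W^\pp_r(\op\succ\bp\mid\bx)$ differentiable in the unconstrained coordinates $\op(\by\mid\bx)$. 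Define $\tr(\bx,\by)$ to be this partial derivative evaluated at $\pi^*$, which is exactly \cref{eqn:reward_update}. The stationarity equation becomes $\tr(\bx,\by) - \reg\big(\log(\pi^*(\by\mid\bx)/\bp(\by\mid\bx))+1\big) = \lambda$, which rearranges to $\pi^*(\by\mid\bx) = \bp(\by\mid\bx)\exp\!\big(\tfrac{1}{\reg}(\tr(\bx,\by)-\lambda-\reg)\big)$; absorbing the $\by$-independent factor into the normalization yields \cref{eqn:pi_update}. (Alternatively, one may invoke \cref{thm:exisitence_informal} together with the closed-form RLHF solution in the footnote of \cref{def:kl_rl} for the same conclusion.)

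Third, \cref{eqn:reward_expand} is a one-line computation from \cref{eqn:reward_update}: by \cref{lem:pp_wr}, $W^\pp_r(\op\succ\bp\mid\bx) = \sum_{\bz}\qt_{r,\ppp{\bp}}(\bx,\bz)\,\ppp{\op}(\bz\mid\bx)$, and the coefficient $\qt_{r,\ppp{\bp}}(\bx,\bz)$ does not depend on $\op$, so differentiating term by term in $\op(\by\mid\bx)$ and evaluating at $\pi^*$ gives the stated expression, with each $\partial\ppp{\op}(\bz\mid\bx)/\partial\op(\by\mid\bx)$ well-defined by the hypothesis.

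The main obstacle is the careful handling of differentiation on the simplex: one must argue that the maximizer is interior to the relevant face so KKT yields equalities, that $\op\mapsto\ppp{\op}$ (hence $W^\pp_r$) is genuinely differentiable in the unconstrained coordinates after extending $\pp$ past the boundary of $\Delta_\bcY$, and that in the continuous-$\bcY$ setting the "partial derivative" should be read as a variational/functional derivative, after which the same computation goes through unchanged. A minor but worth-noting point is that $\tr$ is pinned down only up to an additive function of $\bx$ (the constant $\lambda+\reg$), which is immaterial since the solution in \cref{eqn:pi_update} is invariant under shifting the reward by any function of $\bx$; and that these conditions are necessary (first-order) conditions, which is all the theorem asserts.
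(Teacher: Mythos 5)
Your proposal is correct and follows essentially the same route as the paper's proof: a Lagrange multiplier for the normalization constraint, first-order stationarity to obtain \cref{eqn:pi_update} with $\tr$ defined as the partial derivative of the inference-time win rate, and term-by-term differentiation of the identity in \cref{lem:pp_wr} to obtain \cref{eqn:reward_expand}. Your added care about the maximizer lying in the relative interior of the face supported on $\mathrm{supp}(\bp(\cdot\mid\bx))$ (so that the KKT conditions are equalities) and the observation that $\tr$ is pinned down only up to an additive function of $\bx$ are points the paper leaves implicit.
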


Missing proofs are presented in \cref{app:proofs}. \cref{lem:coupled} naturally leads to an iterative EM-style algorithm that (I) updates $\pi$ with $\mc{R}$ fixed based on \cref{eqn:pi_update} and (II) updates $\mc{R}$ with $\pi$ fixed based on \cref{eqn:reward_expand} until convergence.  However, such algorithm suffers from two drawbacks: first, for general language models, it is inefficient/intractable to evaluate \cref{eqn:reward_expand} since it involves evaluating the policy on a large, or even infinite output space; second, it is unclear whether such an algorithm could lead to the optimal solution.

To find more efficient ways to design reward transformations, we examine the case when no inference-time procedure is performed. In this case, $\ppp{\pi} = \pi$ and 
\[
    \frac{\partial  }{\partial \pi(\by\mid \bx)}\ppp{\pi}(\bz \mid\bx) = \mathbbm{1}\left\{\bz = \by\right\}.
\]
\cref{eqn:reward_expand} will reduce to $ \tr(\bx,\by) =  \qt_{r, \bp}(\bx, \by)$, the calibrated reward under $\bp$. 
\begin{corollary} \label{cor:no_inference_procedure}
When no inference-time procedure is performed, \ie $\forall \pi, \ppp{\pi} = \pi$, the maximizer of \cref{eqn:kl-rl-tr} with $\tr(\bx,\by) =  \qt_{r, \bp}(\bx, \by)$ is the solution to \cref{eqn:kl-constrained-ppwr}.
\end{corollary}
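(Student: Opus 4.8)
\textbf{Proof proposal for \cref{cor:no_inference_procedure}.}
The plan is to observe that when $\pp$ is the identity map, the \iapo objective \cref{eqn:kl-constrained-ppwr} degenerates into an ordinary instance of KL-regularized RL (\cref{def:kl_rl}), with the calibrated reward $\qt_{r,\bp}$ playing the role of a fixed reward function. The whole content of the corollary is then that the inference-time win rate $W^\pp_r(\op\succ\bp\mid\bx)$ is, in the $\ppp{\op}=\op$ case, \emph{linear} in the optimization variable $\op$, so that it can be read off as an expected reward.

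Concretely, I would proceed in three steps. First, specialize \cref{eqn:kl-constrained-ppwr} to $\ppp{\op}=\op$, which turns the objective into $W_r(\op\succ\bp\mid\bx)-\reg\,\KL(\op(\cdot\mid\bx)\,\|\,\bp(\cdot\mid\bx))$. Second, invoke the identity $W_r(\op\succ\bp\mid\bx)=\E_{\by\sim\op(\cdot\mid\bx)}\{\qt_{r,\bp}(\bx,\by)\}$ recorded right after \cref{def:win_rate} (equivalently, \cref{lem:pp_wr} evaluated at the identity procedure, which follows by unfolding \cref{def:pp_wr} and using linearity of expectation over the two independent draws), to rewrite the objective as $\E_{\by\sim\op(\cdot\mid\bx)}\{\qt_{r,\bp}(\bx,\by)\}-\reg\,\KL(\op(\cdot\mid\bx)\,\|\,\bp(\cdot\mid\bx))$. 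Third, note that $\qt_{r,\bp}(\bx,\by)$ is a fixed function of $(\bx,\by)$ that does not depend on $\op$, since it is the calibrated reward under the \emph{fixed} base policy $\bp$ (\cref{def:calibrate_reward}); hence the displayed expression is exactly $\obj_{\bp,\,\qt_{r,\bp},\,\reg}$ in the sense of \cref{def:kl_rl}, i.e.\ it coincides term-for-term with \cref{eqn:kl-rl-tr} under the choice $\tr(\bx,\by)=\qt_{r,\bp}(\bx,\by)$. Therefore the two optimization problems are literally the same problem and share the same maximizer, whose uniqueness is inherited from the footnote in \cref{def:kl_rl}.

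There is essentially no hard step here — the corollary is a sanity check that the reward-transformation framework of \cref{sec:calibration} recovers the natural answer in the degenerate case. The only point requiring vigilance is not to conflate $\qt_{r,\bp}$ (fixed, a legitimate reward) with $\qt_{r,\op}$ (which would make the objective nonlinear in $\op$ and put us back in the genuinely coupled regime of \cref{lem:coupled}); under the identity procedure it is the former that appears, precisely because $\partial\ppp{\op}(\bz\mid\bx)/\partial\op(\by\mid\bx)=\idc{\bz=\by}$ makes the coupled equations \cref{eqn:pi_update}--\cref{eqn:reward_expand} decouple. I would mention this decoupling as an alternative route, but since \cref{lem:coupled} only states a necessary condition, I would still rely on the direct linearity argument above to obtain sufficiency.
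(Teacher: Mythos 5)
Your proposal is correct, and it takes a different route from the paper. The paper obtains this corollary as a specialization of the coupled characterization in \cref{lem:coupled}: since $\partial\ppp{\pi}(\bz\mid\bx)/\partial\pi(\by\mid\bx)=\idc{\bz=\by}$ for the identity procedure, \cref{eqn:reward_expand} collapses to $\tr(\bx,\by)=\qt_{r,\bp}(\bx,\by)$, and the corollary is read off from the decoupled stationarity conditions. You instead argue by direct identification: under the identity procedure the win rate $W_r(\op\succ\bp\mid\bx)=\E_{\by\sim\op(\cdot\mid\bx)}\{\qt_{r,\bp}(\bx,\by)\}$ is linear in $\op$ with the \emph{fixed} weight function $\qt_{r,\bp}$, so the \iapo objective \emph{is} the KL-regularized RL objective with reward $\qt_{r,\bp}$, term for term, and the two problems trivially share their (unique) maximizer. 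Your route is more elementary and, as you correctly point out, logically stronger: \cref{lem:coupled} is derived from first-order Lagrangian conditions and hence only certifies stationarity (necessity), whereas your argument delivers sufficiency outright and does not need the differentiability hypothesis on $\pp$. What the paper's route buys in exchange is narrative economy — the corollary is exhibited as the degenerate instance of the general coupled equations, which motivates the calibration step as the ``$\pp=\mathrm{id}$'' baseline before moving to nontrivial procedures. Your caution about not conflating $\qt_{r,\bp}$ with $\qt_{r,\op}$ is exactly the right point of vigilance; the latter would make the objective quadratic in $\op$ and is what happens in the genuinely coupled regime.
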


The above corollary is also observed in \citet{azar2023general,gui2024bonbonalignmentlargelanguage}. Hence \cref{lem:coupled} can be viewed as a generalization of these results with general inference-time procedures.
The observation motivates us to consider RLHF with a specific family of reward transformations that involves a reward calibration step, %
described next.

\vspace{-.1in}
\section{\ctrl: Calibrate- and-transform reinforcement learning} \label{sec:ctrl}
In this section, we propose the \ctrl\ method, which is our proposed solver for the \iapo\ problem. The method consists of: (1) {\em Calibration}: Approximate the calibrated reward $\qt_{r, \bp}$; (2) {\em Transformation}: Apply transformation $\Phi$ on top of $\qt_{r, \bp}$ to obtain $\cR_\Phi = \Phi \circ \qt_{r, \bp}$; (3) {\em Solve RLHF with the transformed reward}. We discuss each step in more details below.

\vspace{-.05in}
\paragraph{Reward calibration.} The goal of this step is to obtain the calibrated reward $\qt_{r, \bp}$ (\cref{def:calibrated-reward}). We first assume $\qt_{r, \bp}$ could be obtained perfectly and discuss practical ways to approximate it in \cref{sec:implementation}.
In \cref{sec:reward_calibration}, we discuss various properties of $\qt_{r, \bp}$ %
In particular, it can be shown that for continuous language models, the distribution of $\qt_{r, \bp}$ is independent from the base policy, and the reward model (\cref{lem:reward_uniform}), which provides a unified view of the outputs from a language model through the lens of calibrated reward. This allows us to focus the design of the transformation function $\Phi$ on $\pp$ in the following step.

As mentioned in \cref{cor:no_inference_procedure}, using the calibrated reward directly in the RLHF problem leads to an optimal solution to \iapo with no inference-time procedure. Note that the resulting solution is the same as the alignment objective of \citet{azar2023general}. 
Moreover, it can also be shown that using  $\log \qt_{r, \bp}$ as the reward, the solution to the RLHF problem recovers the popular best-of-$N$ distillation objective, which has been studied in a recent line of works \citep{gui2024bonbonalignmentlargelanguage, amini2024variationalbestofnalignment,sessa2024bondaligningllmsbestofn}, and shown to be nearly optimal for standard win rate ~\citep{yang2024asymptotics, gui2024bonbonalignmentlargelanguage}.
We note that while these methods lead to similar optimization objectives, \ctrl makes this calibration step explicit, resulting in a different algorithmic approach. In \cref{sec:results}, we compare the results with the abovementioned baseline approaches on standard win rate, and show that it leads to improved win rate. %
We also empirically show that reward calibration is beneficial to mitigate reward hacking (\cref{sec:reward_hacking}).

\vspace{-.05in}
\paragraph{Reward transformation.} The goal is to further transform the calibrated reward using a transformation function $\Phi: [0, 1] \rightarrow \RR$. The function $\Phi$ is chosen based on the inference-time procedure $\pp$ so that $\Phi \circ \qt_{r, \bp}$ is a good reward transformation to use for solving \iapo. %

Ideally, we would like the design of $\Phi$ to only depend on the inference-time procedure $\pp$ so that it is transferable among different $r$ and $\bp$. A natural question is for what type of $\pp$'s this is possible. In \cref{sec:kl_rl_calibration}, we show that for \textit{calibrated} inference-time procedures (\cref{def:calibrated_transformation}), and continuous language models, it is sufficient for $\Phi$ to depend on $\pp$. And in these cases, different $\Phi$'s could be evaluated efficiently with simple language models that can be easily simulated, which enables the search for good or even optimal transformations. We use Best-of-$N$ and Worst-of-$N$ as examples of inference-time procedures to demonstrate the effectiveness of such approach in \cref{sec:bon_won}. %

\vspace{-.05in}
\paragraph{RLHF with the transformed reward.} At this step, we solve the RLHF problem with reward function,
     \begin{equation*} %
   \cR_\Phi(\bx, \by) = \Phi(\car_{r, \bp}(\by\mid \bx)),
\end{equation*} to obtain the aligned policy $\op^*_{\cR_{\Phi}, \reg}$. Since we only modify the reward function, the step can be performed with
standard RLHF solvers. We present a practical version of the \ctrl method %
in \cref{sec:implementation}. %

\vspace{-.05in}
\subsection{Properties of reward calibration}
\label{sec:reward_calibration}
We present properties of $\car_{r, \bp}$. %
The first property states that reward calibration preserves the ordering of the reward. %
\begin{lemma}[Calibration is a bounded monotone increasing transformation of reward] \label{lem:monotone}
We have $\car_{r, \bp}(\bx, \by) \in [0,1].$ Furthermore, we have for any $\by$ and $\bz$ \begin{equation*}
    r(\bx, \by) \geq r(\bx, \bz) \implies \car_{r, \bp}(\bx, \by) \geq \car_{r, \bp}(\bx, \bz).
\end{equation*}
\end{lemma}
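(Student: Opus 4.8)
The plan is to prove \cref{lem:monotone} directly from the definition of the calibrated reward in \cref{eqn:def_quantile}, namely $\car_{r, \bp}(\bx, \by) = \E_{\bz \sim \bp(\cdot\mid\bx)}\{w_r(\by, \bz\mid \bx)\}$, together with the explicit form of the win random variable $w_r(\by, \bz\mid\bx) = \idc{r(\bx,\by) > r(\bx,\bz)} + \tfrac12 \idc{r(\bx,\by) = r(\bx,\bz)}$.

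\textbf{Boundedness.} First I would observe that $w_r(\by, \bz\mid\bx) \in \{0, \tfrac12, 1\} \subseteq [0,1]$ pointwise for every $\bz$, since at most one of the two indicator events can occur and each contributes at most $1$ (and the equality case contributes exactly $\tfrac12$). Taking the expectation over $\bz \sim \bp(\cdot\mid\bx)$ preserves the bounds, so $\car_{r, \bp}(\bx, \by) \in [0,1]$. This gives the first claim immediately.

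\textbf{Monotonicity.} For the second claim, fix $\by, \bz$ with $r(\bx, \by) \geq r(\bx, \bz)$, and I would show the pointwise inequality $w_r(\by, \bz'\mid\bx) \geq w_r(\bz, \bz'\mid\bx)$ for every fixed third response $\bz'$; integrating this over $\bz' \sim \bp(\cdot\mid\bx)$ then yields $\car_{r, \bp}(\bx, \by) \geq \car_{r, \bp}(\bx, \bz)$. The pointwise claim is a short case analysis on how $r(\bx, \bz')$ compares to the two (ordered) values $r(\bx, \bz) \leq r(\bx, \by)$: if $r(\bx, \bz') < r(\bx, \bz)$ both win variables equal $1$; if $r(\bx, \bz') > r(\bx, \by)$ both equal $0$; and in the intermediate regimes (including the boundary equality cases, where the $\tfrac12$ tie term matters) $w_r(\by, \bz'\mid\bx)$ is $1$ or $\tfrac12$ while $w_r(\bz, \bz'\mid\bx)$ is $\tfrac12$ or $0$, so the inequality holds in every case. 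Since $w_r$ as a function of its first reward argument is nondecreasing (it is built from $\idc{\cdot > r(\bx,\bz')}$ plus half the tie indicator), this monotonicity is really just the statement that a larger reward cannot decrease the win probability against any fixed opponent.

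I do not anticipate a genuine obstacle here; the only mild care needed is handling ties correctly so that the $\tfrac12$ term does not break monotonicity at the boundaries, which the case split above resolves. Under the \enquote{continuous language model} assumption that $r$ assigns distinct rewards (invoked elsewhere in the paper), the tie term is almost surely zero and the argument collapses to comparing two ordinary indicators, but the case analysis makes the statement valid without that assumption, which is why I would present it in the general form.
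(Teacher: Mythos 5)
Your proposal is correct and follows essentially the same route as the paper's proof: establish the pointwise inequality $w_r(\by, \by' \mid \bx) \ge w_r(\bz, \by' \mid \bx)$ for every fixed opponent $\by'$ and then take the expectation over $\by' \sim \bp(\cdot \mid \bx)$. The only difference is that you spell out the tie-handling case analysis and the boundedness argument explicitly, both of which the paper leaves implicit.
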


Moreover, $\car_{r, \bp}$ is %
invariant under all monotone increasing transformations of the reward function, stated below.
\begin{lemma}[Calibration is invariant under monotone increasing transformations]
Let $m: \mathbb{R} \to \mathbb{R}$ be any strictly monotone increasing function. Then
$
    \car_{m(r), \bp} = \car_{r, \bp}.
$
\label{lem:canonical}
\end{lemma}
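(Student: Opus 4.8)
The plan is to show that the win random variable $w_r(\by,\bz\mid\bx)$ is itself left unchanged when $r$ is replaced by $m(r)$; once this is established, the identity $\car_{m(r),\bp}=\car_{r,\bp}$ follows immediately by substituting into the defining expectation \cref{eqn:def_quantile}.

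First I would recall the elementary fact that for a strictly increasing $m:\RR\to\RR$ and any reals $a,b$ one has $a>b \iff m(a)>m(b)$ and $a=b \iff m(a)=m(b)$. Applying this with $a=r(\bx,\by)$ and $b=r(\bx,\bz)$ shows that the two events defining $w_r(\by,\bz\mid\bx)$, namely $\{r(\bx,\by)>r(\bx,\bz)\}$ and $\{r(\bx,\by)=r(\bx,\bz)\}$, coincide with the corresponding events written in terms of $m(r)$. Consequently $\idc{m(r(\bx,\by))>m(r(\bx,\bz))}=\idc{r(\bx,\by)>r(\bx,\bz)}$ and $\idc{m(r(\bx,\by))=m(r(\bx,\bz))}=\idc{r(\bx,\by)=r(\bx,\bz)}$, hence $w_{m(r)}(\by,\bz\mid\bx)=w_r(\by,\bz\mid\bx)$ for every $\bx,\by,\bz$.

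Then I would plug this pointwise equality into \cref{eqn:def_quantile} with $\pi=\bp$: since the integrands agree for all $\bz$,
\[
\car_{m(r),\bp}(\bx,\by)=\E_{\bz\sim\bp(\cdot\mid\bx)}\{w_{m(r)}(\by,\bz\mid\bx)\}=\E_{\bz\sim\bp(\cdot\mid\bx)}\{w_{r}(\by,\bz\mid\bx)\}=\car_{r,\bp}(\bx,\by),
\]
which is the claim. There is essentially no obstacle here; the only point that warrants (minor) care is that \emph{strict} monotonicity of $m$—not merely weak monotonicity—is what guarantees that the equality event $\{m(a)=m(b)\}$ is no larger than $\{a=b\}$, so that the tie-breaking half-indicator term in the definition of $w_r$ is preserved and not just the strict-inequality term.
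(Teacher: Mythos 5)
Your proof is correct and follows essentially the same route as the paper's: both arguments reduce the claim to the observation that strict monotonicity of $m$ preserves the events $\{r(\bx,\by)>r(\bx,\bz)\}$ and $\{r(\bx,\by)=r(\bx,\bz)\}$, hence the indicators inside the defining expectation, and then conclude by taking the expectation over $\bz\sim\bp(\cdot\mid\bx)$. Your explicit remark that \emph{strict} monotonicity is needed to preserve the tie event is a nice touch the paper leaves implicit, but it is not a substantive difference.
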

This property is useful since as long as the learned reward model $r$ can capture relative human preference between each pair of generations, the calibration of $r$ will remain unchanged, making $\car_{r, \bp}$ more robust to the learning of $r$.

The next property shows that the calibration operation allows us to transform the distribution of the reward under the base policy to a uniform distribution over $[0, 1]$ regardless of the base policy $\bp$ and the reward model $r$.
\begin{lemma} \label{lem:reward_uniform}
If $\pi$ is a continuous language model, let $\by$ be sampled from $\bp(\cdot \mid \bx)$, then we have $\forall \bx$,
\[
    \car_{r, \bp}(\bx, \by) \sim {\rm Unif}([0, 1]).
\]
\vspace{-.2in}
\end{lemma}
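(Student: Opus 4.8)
The plan is to recognize $\car_{r,\bp}(\bx,\cdot)$ as (essentially) the CDF of the reward under $\bp$ and then apply the classical probability integral transform. Fix a prompt $\bx$ and let $F_\bx$ denote the CDF of the real-valued random variable $r(\bx,\bz)$ with $\bz \sim \bp(\cdot\mid\bx)$. First I would remove the tie term from the definition of $\car_{r,\bp}$: for a fixed $\by$, the event $\{r(\bx,\bz) = r(\bx,\by)\}$ equals $\{\bz = \by\}$ because $r(\bx,\cdot)$ is injective by assumption, and this event has probability $0$ under $\bz \sim \bp(\cdot\mid\bx)$ since the continuous language model assigns no mass to a single point. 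Hence $\car_{r,\bp}(\bx,\by) = \mathbb{E}_{\bz\sim\bp(\cdot\mid\bx)}\{w_r(\by,\bz\mid\bx)\} = \mathbb{P}_{\bz\sim\bp(\cdot\mid\bx)}(r(\bx,\bz) < r(\bx,\by))$, i.e. the left limit of $F_\bx$ at $r(\bx,\by)$.

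Second, I would show that $F_\bx$ is continuous, so that this left limit coincides with $F_\bx(r(\bx,\by))$. A jump of $F_\bx$ at a value $c$ would mean $\mathbb{P}_{\bz\sim\bp(\cdot\mid\bx)}(r(\bx,\bz)=c) > 0$; but $\{\bz : r(\bx,\bz) = c\}$ is at most a single point by the distinctness assumption on $r$, and carries zero mass under the density of $\bp(\cdot\mid\bx)$. So $F_\bx$ has no atoms, is continuous, and therefore $\car_{r,\bp}(\bx,\by) = F_\bx(r(\bx,\by))$ for every $\by$ (outside a $\bp(\cdot\mid\bx)$-null set, which is all we need).

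Third, since $\by \sim \bp(\cdot\mid\bx)$, the variable $r(\bx,\by)$ has CDF $F_\bx$, so it remains to invoke the probability integral transform: if a real random variable $R$ has a continuous CDF $F$, then $F(R) \sim {\rm Unif}([0,1])$. I would include the short proof: for $u \in (0,1)$, set $b_u := \sup\{t : F(t) \le u\}$; continuity of $F$ yields $F(b_u) = u$, monotonicity yields $\{F(R) \le u\} = \{R \le b_u\}$, hence $\mathbb{P}(F(R)\le u) = F(b_u) = u$, while the endpoints $u \in \{0,1\}$ are trivial. Applying this with $R = r(\bx,\by)$ and $F = F_\bx$ gives $\car_{r,\bp}(\bx,\by) \sim {\rm Unif}([0,1])$ for every $\bx$, as claimed.

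The main (and really the only) obstacle is the atomlessness of $F_\bx$ in the second step; this is precisely where all three standing hypotheses — $\bcY$ continuous, the language model having a density, and $r(\bx,\cdot)$ injective — are genuinely used. Everything else is bookkeeping plus the textbook transform, so I do not expect further difficulty; the one point to keep track of is that the randomness of $\car_{r,\bp}(\bx,\by)$ enters only through the outer draw $\by \sim \bp(\cdot\mid\bx)$, the inner $\bz$-expectation having already collapsed into the deterministic map $\by \mapsto F_\bx(r(\bx,\by))$.
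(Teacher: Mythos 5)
Your proposal is correct and follows essentially the same route as the paper: both are instances of the probability integral transform, with the paper inverting via the quantile point $\by_{u,\bx}$ satisfying $\qt_{r,\bp}(\bx,\by_{u,\bx})=u$ and you inverting via $b_u=\sup\{t: F_\bx(t)\le u\}$. The only difference is that you explicitly justify the vanishing of the tie term and the atomlessness of $F_\bx$, details the paper instead absorbs into its standing convention that for continuous models the calibrated reward coincides with the CDF.
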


The lemma provides us a unified view of the output from a language model through the space of calibrated reward. %
\vspace{-.05in}
\subsection{Reward transformation for calibrated inference-time procedure}  \label{sec:kl_rl_calibration}

We consider a family of inference-time procedures that only depend on the calibrated reward of the outputs, which we term \textit{calibrated procedures}, and discuss how to design a suitable $\Phi$ for this family of transformations. We first define \textit{calibrated procedures} below.

\begin{definition}[Calibrated inference-time procedure]\label{def:calibrated_transformation}
An inference-time procedure $\pp$ is called a \emph{calibrated procedure} if there exists a mapping function $\calp: [0, 1] \rightarrow \RR$ such that for any $\pi$, $r$, and $\bx, \by$, we have
\[
    \ppp{\pi}(\by \mid \bx) \propto \pi(\by \mid \bx) \cdot \calp(\qt_{r, \pi} (\bx, \by)).
\]
\end{definition}

Our next result shows that for calibrated inference-time procedures and continuous language models, the aligned policy obtained from \ctrl
 with any $\Phi$ has a win rate and KL divergence independent of $\bp$ and $r$.
\begin{theorem}[Model-agnostic property of calibrated inference-time procedures, informal version of \cref{thm:calibrated_procedure_extended}] \label{thm:calibrated_procedure}
If $\pp$ is a calibrated inference-time procedure, for any continuous language model $\pi$, $\reg > 0$ and transformation function $\Phi$, we have that both $W^\pp_r(\op^*_{\cR_{\Phi}, \reg} \succ \bp \mid \bx)$ and $\KL(\op^*_{\cR_{\Phi}, \reg} \| \bp)$ are independent of $r$ and $\bp$.
\end{theorem}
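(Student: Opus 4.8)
The plan is to exploit the change of variables from responses $\by$ to their calibrated reward values $u = \qt_{r,\bp}(\bx,\by)$, together with \cref{lem:reward_uniform}, which says that under $\bp$ this pushforward is exactly $\mathrm{Unif}([0,1])$ regardless of $r$ and $\bp$. The key observation is that for a calibrated procedure the whole pipeline — reward calibration, transformation $\Phi$, RLHF, and then the inference-time procedure $\pp$ — only ever ``sees'' a response through its calibrated value. So I would first argue that the aligned policy $\op^*_{\cR_\Phi,\reg}$, pushed forward through $\by \mapsto \qt_{r,\bp}(\bx,\by)$, has a density on $[0,1]$ that depends only on $\Phi$ and $\reg$. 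Concretely, by \cref{cor:no_inference_procedure}/\cref{def:kl_rl} the RLHF solution is $\op^*_{\cR_\Phi,\reg}(\by\mid\bx) \propto \bp(\by\mid\bx)\, e^{\frac1\reg \Phi(\qt_{r,\bp}(\bx,\by))}$; pushing forward and using that $\qt_{r,\bp}$ is $\mathrm{Unif}([0,1])$ under $\bp$ gives a density $q^*(u) \propto e^{\Phi(u)/\reg}$ on $[0,1]$, manifestly independent of $r,\bp$. Call this one-dimensional distribution $Q^*_{\Phi,\reg}$.

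Next I would push this through the inference-time procedure. The subtlety is that $\pp$ is defined via $\qt_{r,\pi}$ (calibration under the \emph{aligned} policy $\pi=\op^*_{\cR_\Phi,\reg}$), not under $\bp$. But I claim $\qt_{r,\op^*}(\bx,\by)$ is itself a fixed deterministic function of $u = \qt_{r,\bp}(\bx,\by)$, determined only by $Q^*_{\Phi,\reg}$: for continuous language models with distinct rewards, $\qt_{r,\pi}(\bx,\by)$ is the CDF-type quantity $\Pr_{\bz\sim\pi}[r(\bx,\bz) \le r(\bx,\by)]$, and since ordering by $r$ is the same as ordering by $u$ (\cref{lem:monotone}), this equals the CDF of $Q^*_{\Phi,\reg}$ evaluated at $u$. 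Hence $\calp(\qt_{r,\op^*}(\bx,\by)) = \calp(F_{Q^*}(u))$ is a fixed function of $u$ only. Therefore $\ppp{\op^*}(\by\mid\bx) \propto \op^*(\by\mid\bx)\cdot \calp(\qt_{r,\op^*}(\bx,\by))$ also pushes forward, under $\by\mapsto u$, to a one-dimensional distribution on $[0,1]$ depending only on $\Phi,\reg$ (and $\calp$, which is part of $\pp$); similarly $\ppp{\bp}$ pushes forward to a fixed distribution depending only on $\pp$.

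Finally I would assemble the win rate and KL. For the KL: $\KL(\op^*_{\cR_\Phi,\reg}\|\bp\mid\bx)$ equals $\KL$ of the pushforwards (this needs the map $\by\mapsto u$ to be essentially a bijection on the support, which holds under the continuous / distinct-reward assumption — this is where a small measure-theoretic argument is needed, likely already packaged in the formal \cref{thm:calibrated_procedure_extended}), and both pushforwards are $r,\bp$-independent, so the KL is too. For the win rate: by \cref{lem:pp_wr}, $W^\pp_r(\op^* \succ \bp\mid\bx) = \E_{\by\sim\ppp{\op^*}}[\qt_{r,\ppp{\bp}}(\bx,\by)]$, and $\qt_{r,\ppp{\bp}}(\bx,\by)$ is again a fixed function of $u$ (the CDF of the pushforward of $\ppp{\bp}$), while the law of $u$ under $\ppp{\op^*}$ is fixed; so the expectation depends only on $\Phi,\reg,\pp$.

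\textbf{Main obstacle.} The conceptual steps are clean, but the real work — and the likely reason the formal version is deferred to an appendix — is the measure-theoretic bookkeeping for continuous $\bcY$: justifying the change of variables $\by\mapsto\qt_{r,\bp}(\bx,\by)$ rigorously (that it is measure-preserving onto $\mathrm{Unif}[0,1]$ and that KL, win rate, and the $\pp$-transformation all descend consistently to the quotient), and checking that ``calibration under $\op^*$'' reduces to a fixed function of ``calibration under $\bp$'' without circularity — i.e. that the CDF $F_{Q^*}$ is well-defined before we know $\ppp{\op^*}$. I would handle the circularity by noting $Q^*_{\Phi,\reg}$ is determined purely by the RLHF step (which does not involve $\pp$ at all), so $F_{Q^*}$ is available before applying $\pp$.
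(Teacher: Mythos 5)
Your proposal is correct and follows essentially the same route as the paper's proof of \cref{thm:calibrated_procedure_extended}: push both policies forward through $\by \mapsto \qt_{r,\bp}(\bx,\by)$, use \cref{lem:reward_uniform} to get the uniform base law and the density $\propto e^{\Phi(u)/\reg}$ for the aligned policy, and express the post-$\pp$ densities via $g_\pp$ composed with the aligned pushforward's CDF $F_{\Phi,\reg}$. Your resolution of the ``calibration under $\op^*$ vs.\ under $\bp$'' subtlety is exactly the $g_\pp(F_{\Phi,\reg}(u))$ term appearing in the paper's computation, so nothing essential is missing.
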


The above theorem allows us to evaluate a transformation $\Phi$ by focusing on simple continuous language models that are easy to compute and simulate. In the next section, we focus on Best-of-$N$ and Worst-of-$N$, as examples to demonstrate how the theorem enables us to 
efficiently 
evaluate different transformations in practical scenarios.

\subsection{Finding reward transformations for \bofn and \wofn} \label{sec:bon_won}

\noindent \textbf{Best-of-$N$ inference-time procedure (\bofn)}. During inference, $N$ \iid responses from a policy $\pi$ are generated. The final output is the one with the highest reward, i.e.,
    \[
        \by_{\bofn} = \arg \max_{\by \in  \{ \by_1, \ldots \by_N\}}
 r(\bx, \by)    .
 \vspace{-.1in}
 \]
 
\noindent \textbf{Worst-of-$N$ inference-time procedure (\wofn).}  $N$ \iid responses from a policy $\pi$ are generated, and the final output is the one with the lowest reward. i.e.,
\[
    \by_{\wofn} = \arg \min_{\by \in  \{ \by_1, \ldots \by_N\}}
 r(\bx, \by)    .
\]
The lemma below presents the distribution of outputs after the inference-time procedure is performed.
\begin{lemma} \label{lem:bofn_wofn}
For any $N$ and continuous language model $\pi$,
\[
    \bofnp{\pi}(\by \mid \bx) = N  \cdot \pi(\by \mid \bx) \cdot \qt_{r, \pi}(\bx, \by)^{N-1}.
\]
\[
    \wofnp{\pi}(\by \mid \bx) = N  \cdot \pi(\by \mid \bx)   \cdot  (1 - \qt_{r, \pi}(\bx, \by))^{N-1}.
\]
\end{lemma}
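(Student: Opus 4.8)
The plan is to compute the density of the maximum (resp.\ minimum) of $N$ i.i.d.\ samples from $\pi(\cdot\mid\bx)$ under the ordering induced by $r$, and then rewrite the resulting expression in terms of the calibrated reward $\qt_{r,\pi}$. Since we are in the continuous setting where $r$ assigns distinct values to distinct $\by$'s almost surely, ties occur with probability zero, so $\qt_{r,\pi}(\bx,\by) = \Pr_{\bz\sim\pi(\cdot\mid\bx)}\{r(\bx,\bz) < r(\bx,\by)\}$, i.e., it is exactly the CDF of the random variable $r(\bx,\bz)$ evaluated at $r(\bx,\by)$.

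First I would fix $\bx$ and consider \bofn. The event that $\by_{\bofn}$ lands in an infinitesimal neighborhood of a particular $\by$ decomposes as: one of the $N$ draws (say $\by_i$) equals $\by$, and the remaining $N-1$ draws all have reward strictly less than $r(\bx,\by)$. By the i.i.d.\ assumption and exchangeability there are $N$ ways to pick which draw realizes $\by$, each draw equals $\by$ with density $\pi(\by\mid\bx)$, and conditionally each of the other $N-1$ draws has reward below $r(\bx,\by)$ with probability $\qt_{r,\pi}(\bx,\by)$ (using the tie-free continuous assumption). Multiplying these gives
\[
    \bofnp{\pi}(\by\mid\bx) = N\cdot \pi(\by\mid\bx)\cdot \qt_{r,\pi}(\bx,\by)^{N-1},
\]
as claimed. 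For \wofn the argument is symmetric: the other $N-1$ draws must have reward strictly \emph{greater} than $r(\bx,\by)$, which happens with probability $1-\qt_{r,\pi}(\bx,\by)$ (again the boundary case of equality has measure zero), yielding $\wofnp{\pi}(\by\mid\bx) = N\cdot \pi(\by\mid\bx)\cdot(1-\qt_{r,\pi}(\bx,\by))^{N-1}$.

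I would make the argument rigorous by working with the pushforward to reward space: let $F(t) := \Pr_{\bz\sim\pi}\{r(\bx,\bz)\le t\}$, so the max of $N$ i.i.d.\ rewards has CDF $F(t)^N$ and density $N F(t)^{N-1} f(t)$ where $f$ is the density of $r(\bx,\bz)$; then pull back through $r$ and use that $F(r(\bx,\by)) = \qt_{r,\pi}(\bx,\by)$ in the continuous, distinct-reward regime. The main (mild) obstacle is purely bookkeeping: justifying the change of variables and the density decomposition carefully given that $r$ need not be injective as a map on $\bcY$ in general — but under the stated continuous-language-model and distinct-reward assumptions this is routine, and one can alternatively verify the claimed formula directly by checking it integrates to one and has the correct CDF. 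A secondary check worth including is that the formula is consistent with $N=1$ (both reduce to $\pi$) and that it matches the known Best-of-$N$ density from \citet{beirami2024theoretical}.
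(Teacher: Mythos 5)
Your proposal is correct and takes essentially the same route as the paper: the paper also computes $\Pr_{\bz\sim\bofnp{\pi}}\{r(\bx,\bz)\le r(\bx,\by)\} = \qt_{r,\pi}(\bx,\by)^N$ (and the analogous $1-(1-\qt)^N$ for \wofn) and differentiates, which is exactly your pushforward-to-reward-space argument; your informal ``one draw equals $\by$, the other $N-1$ lie below'' decomposition is just the standard order-statistics reading of that same derivative. If anything, your chain-rule bookkeeping is the more careful of the two, since it makes the factor of $N$ explicit.
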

Note that the results for \bofn have already been derived previously~\citep{beirami2024theoretical,gui2024bonbonalignmentlargelanguage,amini2024variationalbestofnalignment}.
The lemma shows that these two inference-time procedures are calibrated procedures so that as claimed in \cref{thm:calibrated_procedure}, for the aligned policy, the inference-time win rate and KL divergence deviation from the base policy are independent of the base policy and reward model. Below we present the precise formula for these two procedures.

\begin{theorem}[Properties of \bofn and \wofn procedures] \label{thm:wr_kl_bon_won}
For any transformation function $\Phi$, the solution $\pi^*_{\tr_{\Phi}, \reg}$ to \ctrl satisfies the followings: Let $F_{\Phi, \reg}(u)= \frac{\int_{0}^u e^{\Phi(u')/\reg} \dd u'}{\int_{0}^1 e^{\Phi(u')/\reg} \dd u'}$.
\vspace{-.1in}
\begin{itemize}
    \item For any $\bx,  W^{\bofn}_r(\pi^*_{\tr_{\Phi}, \reg} \succ \bp \mid \bx) =$\vspace{-.1in}
    \begin{equation} \label{eqn:bon_win_rate}
    1 - N \int_{0}^1 F_{\Phi, \reg}(u)^N u^{N-1} \dd u,
    \vspace{-.1in}
    \end{equation}
    \item For any $\bx,  W^{\wofn}_r(\pi^*_{\tr_{\Phi}, \reg} \succ \bp \mid \bx) =$\vspace{-.1in}
    \begin{equation}
    \label{eqn:won_win_rate}
     N \int_{0}^1 \Paren{1 - F_{\Phi, \reg}(u) }^N (1-u)^{N-1} \dd u,\vspace{-.1in}
    \end{equation}
     \item $\KL(\op^*_{\cR_{\Phi}, \reg} \| \bp) =$
    \begin{equation}\label{eqn:kl_div_2}
\frac{1}{\reg}  \frac{ \int_{0}^1 \Phi(u) e^{\Phi(u)/\reg} \dd u }{\int_{0}^1 e^{\Phi(u)/\reg} \dd u} - \log \Paren{\int_{0}^1 e^{\Phi(u)/\reg} \dd u}.
    \end{equation}
\end{itemize}
\end{theorem}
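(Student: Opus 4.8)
The plan is to compute everything by first reducing the RLHF problem with reward $\cR_\Phi = \Phi\circ\qt_{r,\bp}$ to an explicit one-dimensional problem on the calibrated-reward scale, using the uniformity result \cref{lem:reward_uniform}, and then pushing the one-dimensional distribution through the explicit Best-of-$N$/Worst-of-$N$ formulas from \cref{lem:bofn_wofn}. Concretely, the closed form of the KL-regularized RL solution (footnote in \cref{def:kl_rl}) gives $\op^*_{\cR_\Phi,\reg}(\by\mid\bx)\propto \bp(\by\mid\bx)\,e^{\Phi(\qt_{r,\bp}(\bx,\by))/\reg}$. Since, by \cref{lem:reward_uniform}, $u:=\qt_{r,\bp}(\bx,\by)$ is $\mathrm{Unif}([0,1])$ when $\by\sim\bp(\cdot\mid\bx)$, the induced law of the calibrated reward of a sample from $\op^*_{\cR_\Phi,\reg}$ has density proportional to $e^{\Phi(u)/\reg}$ on $[0,1]$; that is, its CDF is exactly $F_{\Phi,\reg}$. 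So $\op^*_{\cR_\Phi,\reg}$ is itself a continuous language model whose calibrated reward (under $\bp$) has CDF $F_{\Phi,\reg}$, while $\bp$'s calibrated reward is $\mathrm{Unif}([0,1])$ with CDF the identity.

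Next I would handle the KL term \cref{eqn:kl_div_2}: $\KL(\op^*_{\cR_\Phi,\reg}\|\bp) = \tfrac1\reg \E_{\op^*}[\Phi(\qt_{r,\bp})] - \log Z$ where $Z=\E_{\bp}[e^{\Phi(\qt_{r,\bp})/\reg}]$; substituting the change of variables $u=\qt_{r,\bp}$ (uniform under $\bp$) turns $Z$ into $\int_0^1 e^{\Phi(u)/\reg}\dd u$ and $\E_{\op^*}[\Phi(\qt_{r,\bp})]$ into $\int_0^1 \Phi(u) e^{\Phi(u)/\reg}\dd u / \int_0^1 e^{\Phi(u)/\reg}\dd u$, which is exactly \cref{eqn:kl_div_2}. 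This part is routine.

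For the win-rate formulas I would proceed in two sub-steps. First, apply \cref{lem:bofn_wofn} to get the inference-time policies: since $\qt_{r,\pi^*}$ for the aligned policy corresponds on the $u$-scale to the quantile transform of its own law, and $\bp$'s inference-time policy $\bofnp{\bp}$ has calibrated-reward-under-$\bp$ density $N u^{N-1}$ (i.e. CDF $u^N$), I can express $W^{\bofn}_r(\pi^* \succ \bp\mid\bx)$ via \cref{lem:pp_wr} as $\E_{\by\sim\bofnp{\pi^*}}[\qt_{r,\bofnp{\bp}}(\bx,\by)]$. The key is that $\qt_{r,\bofnp{\bp}}$ evaluated at a point whose calibrated reward (under $\bp$) is $u$ equals the CDF of $\bofnp{\bp}$'s calibrated reward, i.e. $u^N$ — this uses that all the relevant distributions are pushforwards along the common monotone map $\by\mapsto\qt_{r,\bp}(\bx,\by)$, so comparisons reduce to comparing one-dimensional random variables, and ties have probability zero for continuous models. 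Then $W^{\bofn}_r(\pi^*\succ\bp\mid\bx) = \E[U^N]$ where $U$ has law $\bofnp{\pi^*}$ pushed to the $u$-scale, which has density $N F_{\Phi,\reg}(u)^{N-1} F_{\Phi,\reg}'(u)$; integrating by parts (or writing $\E[U^N]=1-\int_0^1 \Pr[U^N \le t]\,\dd t$ type manipulation) and using $F'_{\Phi,\reg}(u)\dd u = \dd F_{\Phi,\reg}$ yields $1 - N\int_0^1 F_{\Phi,\reg}(u)^N u^{N-1}\dd u$, matching \cref{eqn:bon_win_rate}. The \wofn case is symmetric, replacing $u^{N-1}$-type weights by $(1-u)^{N-1}$ and CDFs by complementary CDFs, giving \cref{eqn:won_win_rate}.

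\textbf{Main obstacle.} The delicate step is the second one: correctly identifying $\qt_{r,\bofnp{\bp}}(\bx,\by)$ as a function of $\qt_{r,\bp}(\bx,\by)$, and more generally justifying that the win rate between $\bofnp{\pi^*}$ and $\bofnp{\bp}$ can be computed purely on the common one-dimensional calibrated-reward scale. This requires arguing that $\qt_{r,\pi}$ is a monotone function of $r$ (from \cref{lem:monotone}) that does not depend on which policy we started from in the sense that it defines a single coupling, that the continuity assumption kills all ties so the $1/2$ term in $w_r$ vanishes almost surely, and that pushforwards along the (strictly increasing) map $\by\mapsto\qt_{r,\bp}(\bx,\by)$ commute with all the operations involved (Best-of-$N$ selection, expectations). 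Once this reduction is clean, the remaining integrals are elementary calculus (substitution and integration by parts), and I would also double-check the normalization constant $N$ in \cref{lem:bofn_wofn} propagates correctly into the final $N\int_0^1(\cdots)$ prefactors.
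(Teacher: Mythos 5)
Your proposal is correct and follows essentially the same route as the paper: the paper proves the general Theorem~\ref{thm:calibrated_procedure_extended} by pushing both $\op^*_{\cR_\Phi,\reg}$ and $\bp$ to the calibrated-reward scale (where $\bp$ is uniform by Lemma~\ref{lem:reward_uniform} and the aligned policy has density $\propto e^{\Phi(u)/\reg}$ by the closed-form RLHF solution), computes the KL and the win rate there, and then specializes with $g_{\bofn}(u)=u^{N-1}$ and $g_{\wofn}(u)=(1-u)^{N-1}$ from Lemma~\ref{lem:bofn_wofn}. Your version just specializes to \bofn/\wofn from the outset rather than routing through the general $g_\pp$ formula, and the "main obstacle" you flag (reducing comparisons to the common one-dimensional scale via monotonicity and absence of ties) is exactly the step the paper handles via Lemma~\ref{lem:monotone} and the continuity assumption, so the two arguments coincide in substance.
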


The above theorem generalizes the win rate calculation in \citet[Lemma 5]{gui2024bonbonalignmentlargelanguage} to inference-time win rate. By varying $\reg$ in \cref{thm:wr_kl_bon_won}, we obtain an alignment curve plotting the inference-time win rate and KL divergence for different aligned policies. This allows us to compare the performance of different transformation functions.

In the rest of the section, we investigate different types of transformations, and analytically compute the alignment curves, \ie, the plot of $(\KL(\pi^*_{\cR_{\Phi}, \reg} \| \bp), W^{T}_r(\pi^*_{\cR_{\Phi}, \reg} \succ \bp)$ for different $\reg$'s. The transformations we consider include  optimal transformations for standard win rate, exponential functions, and optimization-based transformations.

\noindent {\em Optimal reward transformations for standard win rate.} The identity mapping $\Phi(x) = x$ proposed by \citet{azar2023general} and the logarithmic mapping $\Phi(x) = \log x$  as used by \bofn distillation  \citep{beirami2024theoretical,yang2024asymptotics,gui2024bonbonalignmentlargelanguage,amini2024variationalbestofnalignment,sessa2024bondaligningllmsbestofn} are shown to be (almost) optimal for the standard win rate. We investigate whether these transformations are suited to inference-time procedures.

\noindent {\em Deriving an optimized reward transformation function.} %
Due to \cref{thm:calibrated_procedure}, one can optimize for good $\Phi$'s using simple toy language models, leading to the following corollary.

\begin{corollary}  \label{cor:coupled_won}
For any $\reg > 0$, the $\Phi$ that solves \iapo with $\pp = \bofn$ must satisfy the following pair of equations:\vspace{-.1in}
\begin{gather*}
    \Phi_{\bofn}(u) = -N^2 \int_u^1 F(v)^{N-1} v^{N-1} \dd v ,\\
    f(u) \propto e^\frac{\Phi_{\bofn}(u)}{\reg},
\end{gather*}
where $F(v) = \int_{0}^v f(v) \dd v$ is the CDF with density function $f$. and for $\pp =\wofn$, it satisfies that\vspace{-.1in}
\begin{gather*}
    \Phi_{\wofn}(u)  =  -N^2 \int_u^1 (1-v)^{N-1} (1 - F(v))^{N-1} \dd v,\\
    f(u) \propto e^\frac{\Phi_{\wofn}(u)}{\reg}.
\end{gather*}
\vspace{-.3in}
\end{corollary}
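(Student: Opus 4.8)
The plan is to specialize the coupled characterization of Theorem~\ref{lem:coupled} to the Best-of-$N$ and Worst-of-$N$ procedures, using the model-agnostic reduction of Theorem~\ref{thm:calibrated_procedure} to pass from distributions over $\bcY$ to distributions over the calibrated reward $u \in [0,1]$. Concretely, fix $\bx$ and let $f$ denote the density, with respect to the calibrated reward coordinate, of the optimal aligned policy $\pi^*_{\cR_\Phi,\reg}$; by Lemma~\ref{lem:reward_uniform} the base policy $\bp$ corresponds to the uniform density on $[0,1]$, and $F(v) = \int_0^v f(v')\,\dd v'$ is its CDF. Equation~\eqref{eqn:pi_update} of Theorem~\ref{lem:coupled}, in this coordinate, reads exactly $f(u) \propto e^{\Phi(u)/\reg}$ (here $\Phi$ plays the role of $\tr$ after calibration, and the base density is constant so it drops into the proportionality). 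This is the second equation in each display; the work is in deriving the first.

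For the first equation I would start from~\eqref{eqn:reward_expand}, which gives $\tr(\bx,\by) = \sum_{\bz} \qt_{r,\ppp{\bp}}(\bx,\bz)\,\partial \ppp{\pi}(\bz\mid\bx)/\partial\pi(\by\mid\bx)\big|_{\pi=\pi^*}$. Using Lemma~\ref{lem:bofn_wofn}, $\bofnp{\pi}(\by\mid\bx) = N\pi(\by\mid\bx)\qt_{r,\pi}(\bx,\by)^{N-1}$, and the calibrated reward $\qt_{r,\pi}$ is itself a functional of $\pi$ (an integral of $\pi$ against an indicator), so the functional derivative $\partial\bofnp{\pi}(\bz\mid\bx)/\partial\pi(\by\mid\bx)$ splits by the product/chain rule into a term where the derivative hits the explicit $\pi(\bz\mid\bx)$ factor (a Dirac mass $\delta(\bz=\by)$) and a term where it hits the $\qt_{r,\pi}(\bx,\bz)^{N-1}$ factor (pulling down $(N-1)\qt^{N-2}$ times $\partial\qt_{r,\pi}(\bx,\bz)/\partial\pi(\by\mid\bx)$, which is the win indicator $w_r(\bz,\by)$). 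Pushing everything through the calibrated-reward change of variables — replacing $\qt_{r,\ppp{\bp}}(\bx,\bz)$ by the CDF of the Best-of-$N$-transformed uniform, namely $v \mapsto v^N$ in the $u$-coordinate, and turning the sum over $\bz$ into an integral $\int_0^1$ against $f$ — should collapse the expression to a single one-dimensional integral. After simplification (and discarding $\by$-independent additive constants, which are immaterial for the $f \propto e^{\Phi/\reg}$ relation, or absorbing normalization), this yields $\Phi_{\bofn}(u) = -N^2\int_u^1 F(v)^{N-1}v^{N-1}\,\dd v$. The Worst-of-$N$ case is identical with $\qt$ replaced by $1-\qt$ throughout, so $v^{N-1} \to (1-v)^{N-1}$ and $F(v)^{N-1}\to(1-F(v))^{N-1}$, giving the second display. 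I would also sanity-check the sign and the factor $N^2$ by differentiating the closed-form win rate in~\eqref{eqn:bon_win_rate} of Theorem~\ref{thm:wr_kl_bon_won} directly in $F$, since $\tr$ is by~\eqref{eqn:reward_update} the derivative of the inference-time win rate — this gives an independent route to the same formula and I would likely present it as the cleaner derivation.

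The main obstacle is making the functional-derivative computation rigorous: $\qt_{r,\pi}$ depends on $\pi$ through an indicator, so $\partial\qt_{r,\pi}(\bx,\bz)/\partial\pi(\by\mid\bx) = w_r(\bz,\by\mid\bx)$ requires the continuous-language-model / distinct-rewards assumptions to avoid atoms, and one must track the infinitesimal-expansion caveat in the footnote to Theorem~\ref{lem:coupled} to ensure $\ppp{\pi}$ and its derivative are well-defined on a neighborhood of $\Delta_\bcY$. A secondary subtlety is bookkeeping the normalization constant in $\bofnp{\pi}$ (the factor $N$ is exactly the normalizer only at $\pi$ a genuine density, and differentiating the normalizer contributes another term) — the cleanest fix is to work with the unnormalized transformed reward and note that~\eqref{eqn:pi_update} is insensitive to additive constants in $\tr$, so any $\by$-independent contributions from the normalizer can be dropped. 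Once these points are handled, everything else is the routine change of variables to the calibrated-reward coordinate justified by Theorem~\ref{thm:calibrated_procedure} and Lemma~\ref{lem:reward_uniform}.
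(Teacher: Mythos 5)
Your proposal is correct, and the route you say you would ``likely present as the cleaner derivation'' --- differentiating the closed-form win rate directly in $F$ via \cref{eqn:reward_update} --- is exactly what the paper does: it specializes to the toy model $\bcY=[0,1]$, $r(u)=u$, $\bp=\mathrm{Unif}([0,1])$ (justifying the reduction to this case by \cref{thm:calibrated_procedure}, as you do via \cref{lem:reward_uniform}), writes $W^{\bofn}_r(\pi\succ\bp)=1-N\int_0^1 F_\pi(v)^N v^{N-1}\,\dd v$, and uses $\partial F_\pi(v)/\partial\pi(u)=\idc{v\ge u}$ to obtain $\Phi_{\bofn}$, with the second equation coming from \cref{eqn:pi_update} and the uniform base. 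Your primary route via \cref{eqn:reward_expand} and the product rule on $\bofnp{\pi}(v)=N f(v)F(v)^{N-1}$ also goes through, but be aware that it produces $N u^N F(u)^{N-1}+N(N-1)\int_u^1 v^N f(v)F(v)^{N-2}\,\dd v$, which agrees with the stated $-N^2\int_u^1 F(v)^{N-1}v^{N-1}\,\dd v$ only up to the additive constant $N$ (both expressions have derivative $N^2 u^{N-1}F(u)^{N-1}$ in $u$, and they differ by their values at $u=1$). As you correctly anticipate, that constant is immaterial for $f\propto e^{\Phi/\reg}$, so both derivations are valid; the paper's is shorter precisely because starting from \cref{eqn:reward_update} bypasses the delta-function and normalizer bookkeeping that your first route requires.
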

Hence, we derive a transformation function by iteratively finding the fixed point of the coupled equations in \cref{cor:coupled_won}. We call them \bonopt and \wonopt.

\begin{figure}[t]
    \centering
    \vspace{-.1in}
    \includegraphics[width=.5\columnwidth]{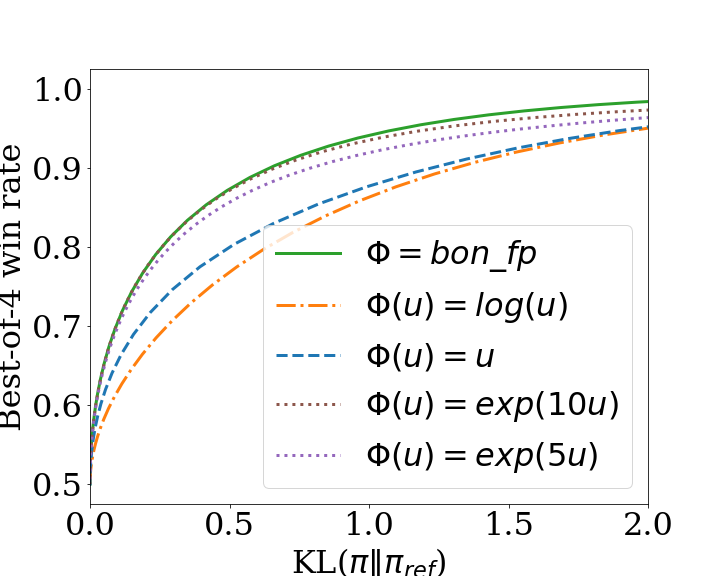}
    \hspace{-.1in}
    \includegraphics[width=.5\columnwidth]{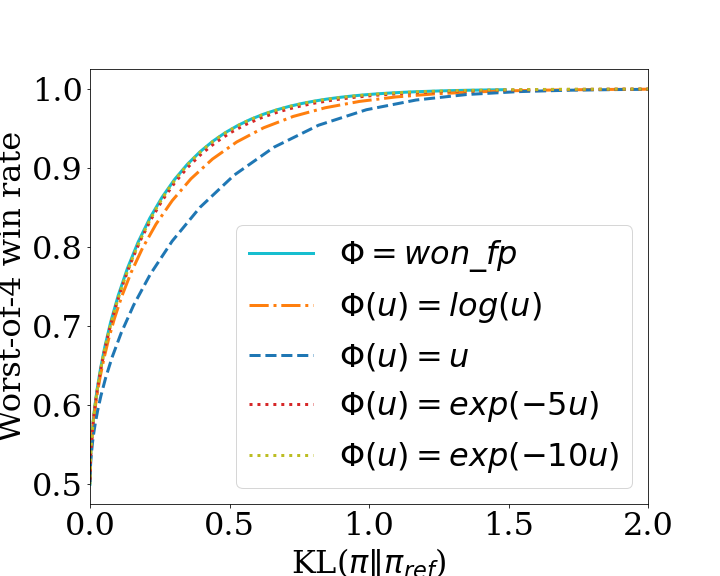}
    \vspace{-.15in}
    \caption{Best-of-$N$ (left) and Worst-of-$N$ (right) win rate vs KL tradeoff curves for $N = 4$ with different transformation functions. %
    }
    \label{fig:simulation}
    \vspace{-.15in}
\end{figure}

\noindent {\em Exponential tilting for reward transformation.} In addition to deriving the optimized transformation, motivated by the exponential tilting of loss functions~\citep{li2021tilted, li2023tilted}, we consider the following exponential transformation: %
\begin{equation}
    \Phi_t(u) = \text{sign}(t) \cdot e^{t u},
\label{eqn:exp_transform}
\end{equation}
where $\text{sign}(t) = 1$ for $t\geq 0$ and $\text{sign}(t) = -1$ for $t<0.$
These exponential transformations are essentially helping to optimize different quantiles of the reward for different values of $t$~\citep{li2023tilted}. For a positive value of $t$, the exponential tilting transformation focuses on optimizing the higher quantiles of the objective (calibrated reward)
. On the other hand, for a negative value of $t$, the transformation is akin to optimizing the lower quantiles of the calibrated reward, which makes it a suitable transformation for the \wofn inference-time procedure.

\vspace{-.08in}
\paragraph{Results.} 
We compare different reward transformations on 
inference-time win rate vs KL divergence for three inference-time procedures: \{standard, \bofn, \wofn\} with different $N$'s. The tradeoff curves are obtained by varying the strength of the regularizer, $\reg$. We present the result for $N = 4$ in \cref{fig:simulation} and provide more results in \cref{sec:simulation_app}.

For Best-of-$4$ win rate, the identity and $\log$ transformation, which are optimal for standard win rate, are suboptimal. The best tradeoffs are given by \bonopt. We also observe that $\exp(10x)$ are almost as good as \bonopt~for Best-of-$4$. %
For Worst-of-$4$ win rate,
it is observed that \wonopt gives the best tradeoffs for this inference-time procedure. Here, $\exp(-5x)$ and $\exp(-10x)$ are almost as good for Worst-of-$2$ and Worst-of-$4,$ respectively. We also observe that identity transformation and $\log$ transformation are sub-optimal in these cases and the $\log$ transformation gives better tradeoffs for \wofn compared to the identity transformation.

The above results show that considering standard win rate as the only metric is not sufficient when inference-time procedure is concerned, demonstrating the importance of inference-aware alignment. We find that exponential transformation with different $t$'s are good for \bofn and \wofn procedures, which will be our focus in practical experiments.

\vspace{-.1in}
\subsection{Practical implementation of \ctrl}  \label{sec:implementation}

When implementing \ctrl in practice, two questions remain to be resolved: (1) How to obtain calibrated reward $\car_{r, \bp}$; (2) How to solve the RLHF problem. %
To obtain $\car_{r, \bp}$, we consider {\em empirical calibration}, where we draw $K$ samples $\bz_1,\bz_2, ..., \bz_K$ from the reference model $\bp$ for each prompt $\bx$ in the RL training data. We then sort the rewards to all the responses $\{r(\bx, \bz_1), r(\bx, \bz_2),..., r(\bx, \bz_K)\}$, and assign {\em empirical calibrated reward} scores during RLHF training for the prompt, response pair $(\bx, \by)$ as  \vspace{-.1in}
\begin{align}
    \widehat{\car}_{r, \bp}(\bx, \by) = \frac{1}{K} \sum_{i=1, \bz_i \sim \bp}^K  w_r(\by, \bz_i \mid \bx). \label{calibrated_score}
\vspace{-.3in}
\end{align}

The following lemma establishes the error of approximating $\car_{r, \bp}$ with empirical calibration. The proof follows from DKW inequality~\cite{Dvoretzky1956DKW}.
\begin{lemma}
Given $\bx \in \bcX$, for all $\delta > 0$, with probabiltiy at least $1 - \delta$,
\vspace{-10pt}
\[
\max_{\by \in \bcY}  |\widehat{\car}_{r, \bp}(\bx, \by)  - \car_{r, \bp}(\bx, \by)| \le \sqrt{\frac{\log(2/\delta)}{2K}}.
\]
\end{lemma}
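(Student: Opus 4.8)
The plan is to recognize that the empirical calibrated reward $\widehat{\car}_{r, \bp}(\bx, \by)$ in \cref{calibrated_score} is, up to the tie-breaking convention in $w_r$, the empirical CDF of the reward values $r(\bx, \bz_i)$ evaluated at the threshold $r(\bx, \by)$, while the true calibrated reward $\car_{r, \bp}(\bx, \by) = \E_{\bz \sim \bp}\{w_r(\by, \bz \mid \bx)\}$ is the corresponding population CDF (again with the same tie correction). The Dvoretzky–Kiefer–Wolfowitz (DKW) inequality, with the tight Massart constant, states that for $K$ \iid draws from any distribution on $\RR$, the empirical CDF $\widehat{G}_K$ and the true CDF $G$ satisfy $\Pr\{\sup_{t \in \RR} |\widehat{G}_K(t) - G(t)| > \varepsilon\} \le 2 e^{-2K\varepsilon^2}$. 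Setting $2 e^{-2K\varepsilon^2} = \delta$ and solving for $\varepsilon$ gives $\varepsilon = \sqrt{\log(2/\delta)/(2K)}$, which is exactly the claimed bound.

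The steps, in order, would be: (i) fix $\bx$ and define $G(t) := \Pr_{\bz \sim \bp(\cdot|\bx)}\{r(\bx,\bz) < t\} + \tfrac12 \Pr_{\bz \sim \bp(\cdot|\bx)}\{r(\bx,\bz) = t\}$ and its empirical analogue $\widehat{G}_K(t) := \tfrac1K \sum_{i=1}^K \big(\idc{r(\bx,\bz_i) < t} + \tfrac12 \idc{r(\bx,\bz_i) = t}\big)$; (ii) observe from \cref{def:calibrate_reward} and \cref{calibrated_score} that $\car_{r, \bp}(\bx,\by) = G(r(\bx,\by))$ and $\widehat{\car}_{r, \bp}(\bx,\by) = \widehat{G}_K(r(\bx,\by))$, so that $\max_{\by \in \bcY} |\widehat{\car}_{r, \bp}(\bx,\by) - \car_{r, \bp}(\bx,\by)| \le \sup_{t \in \RR} |\widehat{G}_K(t) - G(t)|$; (iii) apply DKW (Massart form) to the \iid sample $\{r(\bx, \bz_i)\}_{i=1}^K$ to bound the right-hand side by $\sqrt{\log(2/\delta)/(2K)}$ with probability at least $1-\delta$; (iv) chain the two bounds. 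One mild technical point is the half-weight on ties: since $w_r$ splits ties evenly, the function $t \mapsto \widehat{G}_K(t) - G(t)$ appearing above is a deterministic linear combination (coefficients $1$ and $\tfrac12$) of the usual left-continuous and right-continuous empirical-minus-true CDF processes, and its supremum over $t$ is still controlled by the ordinary DKW deviation — one can either note that $|\widehat{G}_K(t)-G(t)|$ is sandwiched between the left- and right-limit versions of the standard empirical process, or simply invoke the standard fact that DKW applies to the empirical CDF with arbitrary tie convention.

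The main (and essentially only) obstacle is the bookkeeping around the tie-breaking term: a careless application of DKW to a one-sided (strictly-less-than) CDF would still work here because $\car_{r, \bp}$ is defined with the symmetric tie rule and $K$ samples, but one must make sure the supremum over $\by \in \bcY$ reduces to a supremum over real thresholds $t$, which it does because $\car_{r,\bp}(\bx,\by)$ and $\widehat{\car}_{r,\bp}(\bx,\by)$ depend on $\by$ only through the scalar $r(\bx,\by)$. After that reduction, the inequality is immediate from DKW. (Under the paper's continuous-language-model assumption ties have probability zero and the half-term vanishes, but the stated lemma does not require that assumption and the argument above handles the general case.)
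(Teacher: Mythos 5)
Your proposal is correct and matches the paper's approach exactly: the paper simply notes that ``the proof follows from DKW inequality'' and provides no further detail, and your reduction of $\max_{\by}|\widehat{\car}_{r,\bp}(\bx,\by)-\car_{r,\bp}(\bx,\by)|$ to $\sup_t|\widehat{G}_K(t)-G(t)|$ followed by the Massart form of DKW is precisely that argument. Your additional care with the tie-splitting convention (writing the tie-corrected CDF as the average of the left- and right-continuous empirical processes) is a valid and slightly more rigorous elaboration than what the paper records.
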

For solving RLHF, we use PPO \cite{schulman2017proximal} as the optimization algorithm in this paper to demonstrate the effectiveness of reward transformation. We believe the method could benefit from other advancements of RLHF algorithms. \vspace{-.1in}

\begin{algorithm}[ht]
\caption{Implementation of \ctrl} \label{alg:practical_algorithm}
\begin{algorithmic}[1]
\REQUIRE Base policy $\bp$, (uncalibrated) reward model $r$, set of training prompts $D \subset \bcX$, number of offline rollouts per prompt $K$, transformation function $\Phi$.
\STATE Compute empirical calibrated reward $\widehat{\car}_{r, \bp} $ using \cref{calibrated_score} with $K$ offline rollouts per $\bx \in D$.
\STATE Transform calibrated reward using function $\Phi$ to get $\cR_\Phi = \Phi \circ \widehat{\car}_{r, \bp}$ %
\STATE Optimize RLHF using calibrated and transformed reward per \cref{eqn:kl-rl-tr} using PPO.
\end{algorithmic}
\end{algorithm}

\vspace{-.2in}
\section{Experiment results} \label{sec:results}
\vspace{-.05in}
\subsection{Evaluation setup} \label{sec:evaluation}
\vspace{-.05in}
\paragraph{Datasets.} 
We consider the following tasks: (1) Anthropic Helpfulness and Harmlessness datasets \citep{bai2022training}, which involve multi-turn dialogues between a human and a digital assistant. For training the reward models, the preference datasets consist of
two responses for one context, and a label for the human preference for the response. We use the train split of the two datasets (44K examples for helpfulness and 42K for harmlessness) to train the uncalibrated and calibrated reward models -- separate reward models for each objective. (2) Similarly, for the summarization quality task, we use Reddit posts from TL;DR dataset \cite{stiennon2020learning} and train uncalibrated and calibrated reward models on the train split.

\vspace{-.08in}
\paragraph{Model.}
The uncalibrated reward model is trained based on the Bradley-Terry pairwise objective \citep{raffel2020exploring}, and the calibration is done on the training-split of the RL training procedure. The underlying model for both these rewards is the PaLM-2 S model \citep{anil2023PaLM}. The base reference policy model is a PaLM-2 S model that is fine-tuned (SFT) on the preferred responses of the Anthropic dialog and Reddit summarization datasets. 
We use \ctrl with exponential transformations \cref{eqn:exp_transform} and PPO as discussed in \cref{alg:practical_algorithm} to obtain the aligned policy. We set $K = 100$ in our experiments, and analyze the additional computational overhead in the Appendix.

\vspace{-.08in}
\paragraph{Baselines.} We compare against \texttt{uncalibrated} (a model trained to solve RLHF with  using the uncalibrated reward model using PPO), \bonbon~\citep{gui2024bonbonalignmentlargelanguage}, \bond~\citep{sessa2024bondaligningllmsbestofn}, and \ipo~\citep{azar2023general} as baselines. 

\vspace{-.08in}
\paragraph{True rewards.} As evaluating using ground truth rewards in a pointwise manner based on human annotations can be expensive, we follow \cite{eisenstein2023helping, mudgal2023controlled} and perform automated evaluation using a larger PaLM-2 M model to compute \textit{true rewards} and report the win-rate based on these rewards for responses generated by the aligned and base policy models. We acknowledge that there is a gap between these model-generated preference and human preference, but note that prior work \cite{bai2022training, stiennon2020learning} have reported inter-human-rater agreement on the 3 tasks is often less than 77\% and correspondingly the model-generated \textit{true rewards} have accuracy of (77.7\%, 77.0, and 76.4\%) on the Anthropic Helpfulness, Harmlessness, and Reddit text summarization preference datasets, comparable to state-of-the-art in RewardBench leaderboard \cite{lambert-etal-2025-rewardbench}.

\vspace{-.05in}
\paragraph{Metrics.}
To measure improvement due to post-RL training, we report both the win rate and the \bofn and \wofn win rates, %
along with the corresponding KL-divergence of the RL model with the SFT model. For each of the runs, we experiment with different KL-regularizer strengths ($\beta \in \{0.01, \ldots , 0.09\}$) and obtain the Pareto-curve of the KL divergence vs \{standard, \bofn, \wofn\} win rate curves\footnote{We use win-rate as our main evaluation metric. We present raw reward comparison for some experiments in \cref{fig:raw_reward} in the appendix.}.

\vspace{-.05in}
\subsection{Reward models are typically miscalibrated}
We first show that reward models used on real-world tasks are miscalibrated. We measure the miscalibration of the reward model trained on Anthropic helpfulness preference dataset by computing the scores of $100$ reference-policy responses for $10$ random prompts from the test split. We then sort the scores and compute the ranks corresponding to each of the responses and plot these values as a scatter plot in Figure~\ref{fig:1} (left). If the model were perfectly calibrated, the points for each prompt would lie on the line $y=x$. However, observe that for most prompts, the scatter plot deviates significantly from the $y=x$ line, and the extent of this deviation varies depending on the prompt.

\begin{figure}[ht]
    \centering
    \resizebox{!}{0.4\linewidth}{
    \input{reward_calibration_figure/reward_calibration_fig_content}}
    \vspace{-.15in}
    \caption{Results on reward models trained on the Anthropic helpfulness preference dataset. Scatter plot of reward scores and %
    reward ranks on a random sample of 10 prompts in the Anthropic helpfulness dataset. Note that the model shows miscalibration on most prompts, with the degree of miscalibration varying by prompt.
    }
    \label{fig:1}
    \vspace{-.2in}
\end{figure}

\begin{figure*}[ht]
    \vspace{-.15in}
    \centering
    \begin{subfigure}
    {\includegraphics[width=0.3\textwidth]{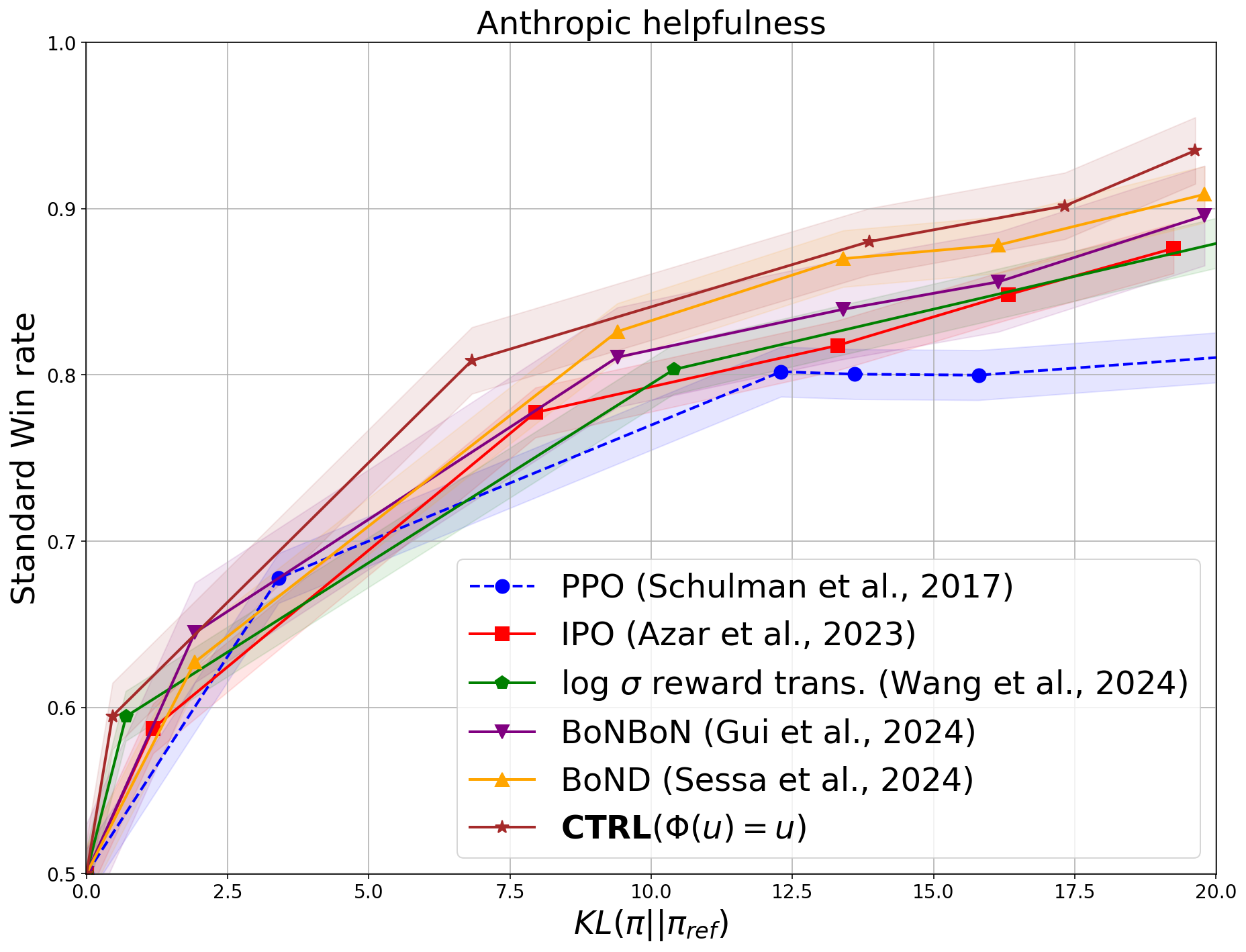}}
    \end{subfigure}
    \begin{subfigure}{
    \includegraphics[width=0.3\textwidth]{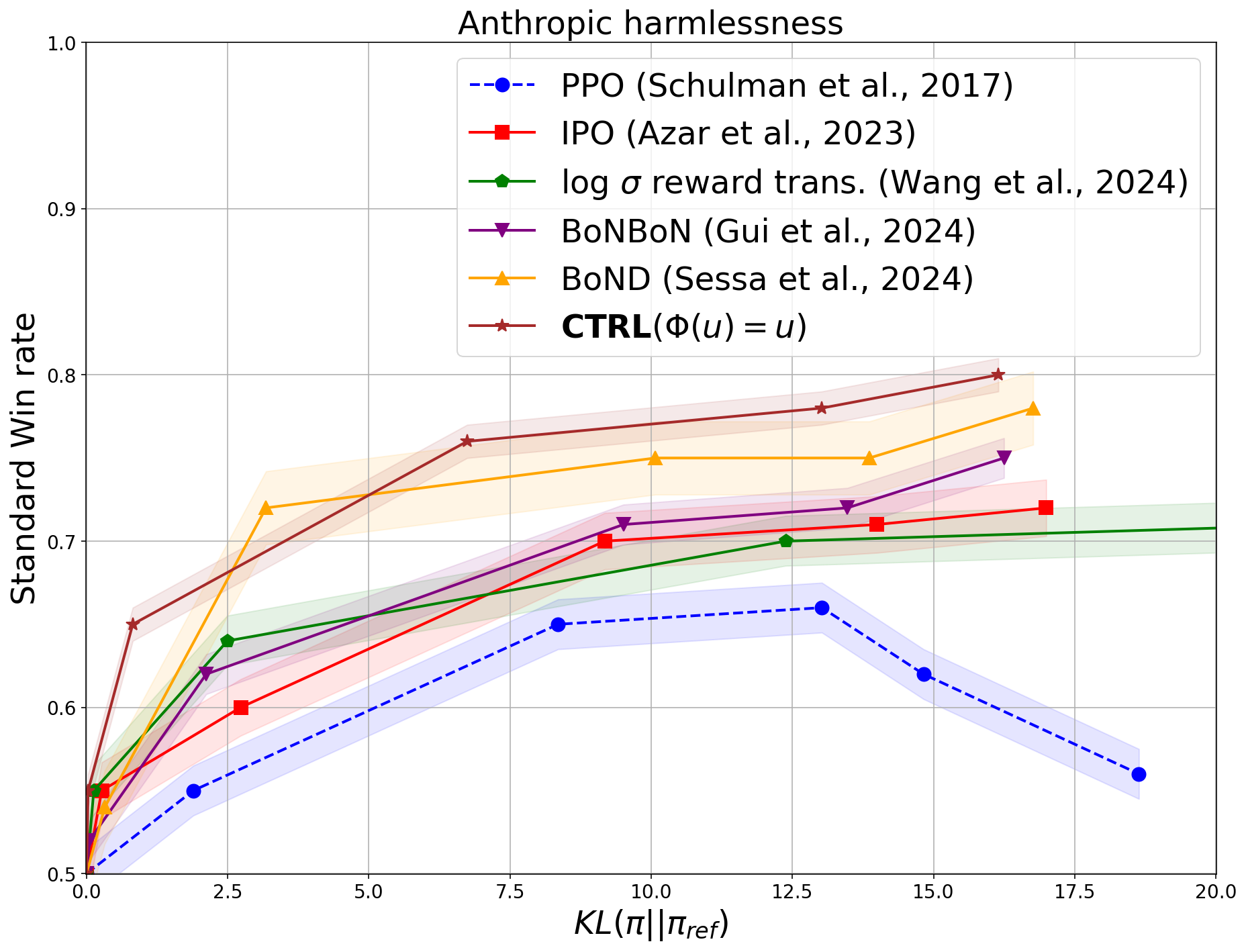}}
    \end{subfigure}
    \begin{subfigure}{
    \includegraphics[width=0.3\textwidth]{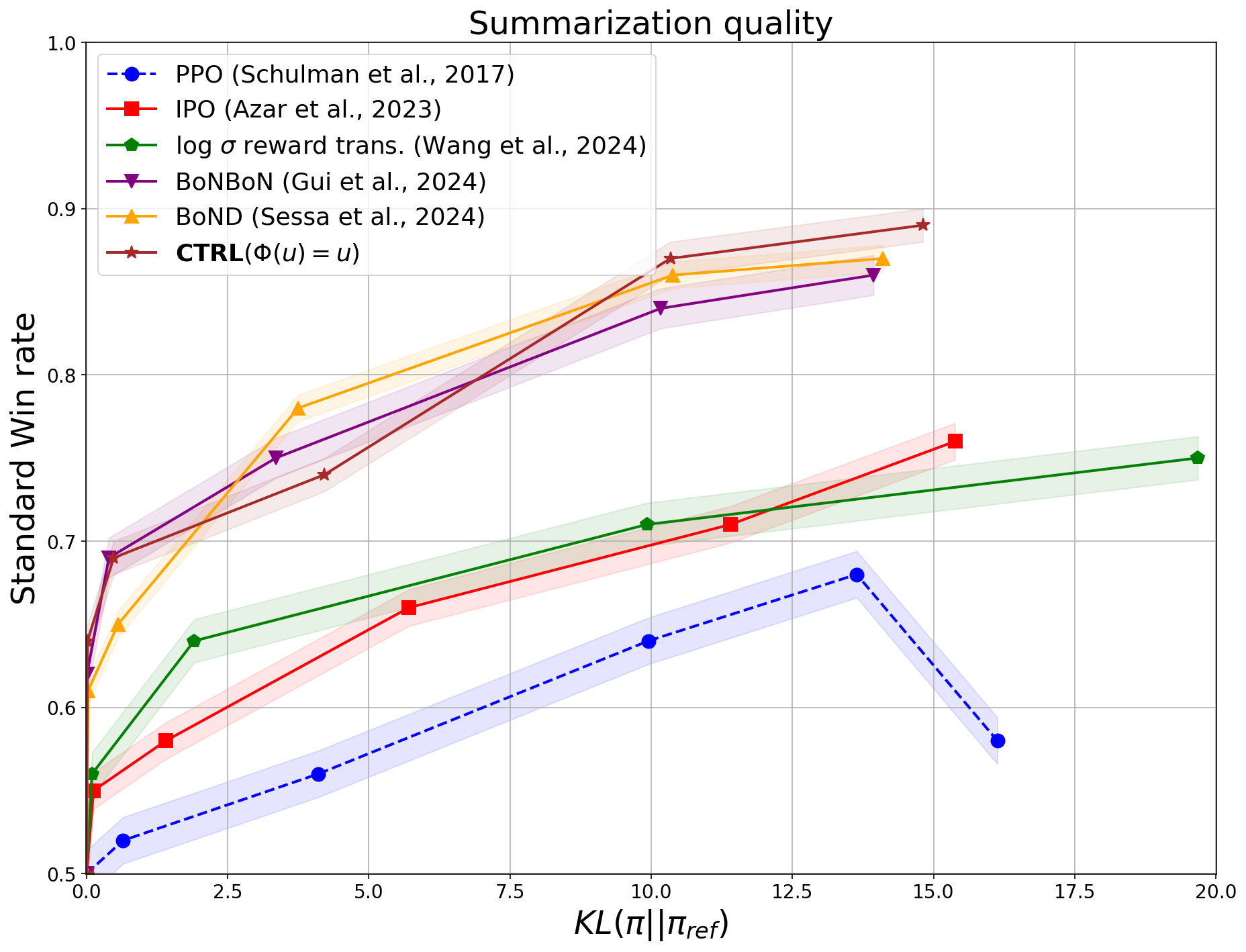}}
    \end{subfigure}
    \begin{subfigure}{
    \includegraphics[width=0.3\textwidth]{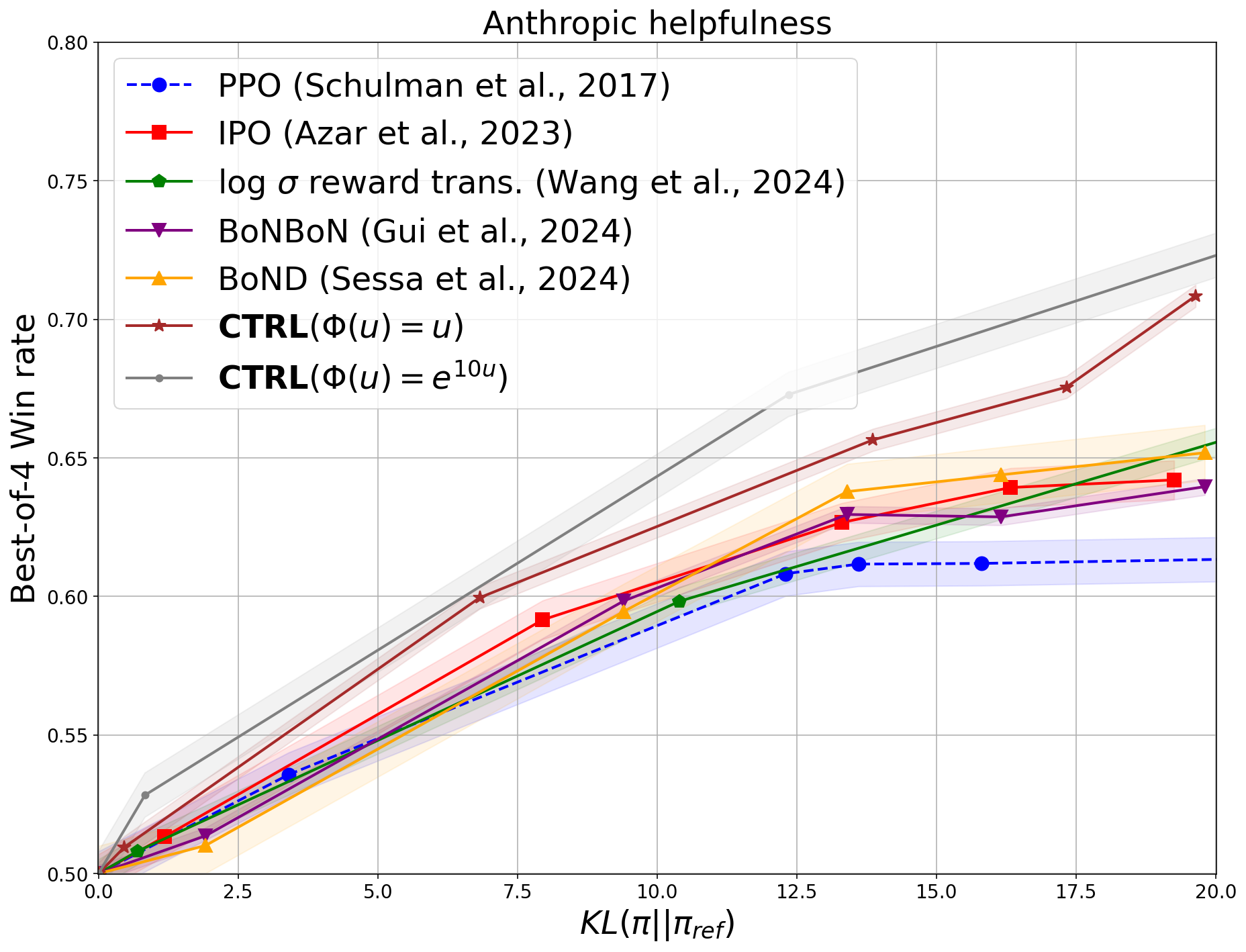}}
    \end{subfigure}
    \begin{subfigure}{
    \includegraphics[width=0.3\textwidth]{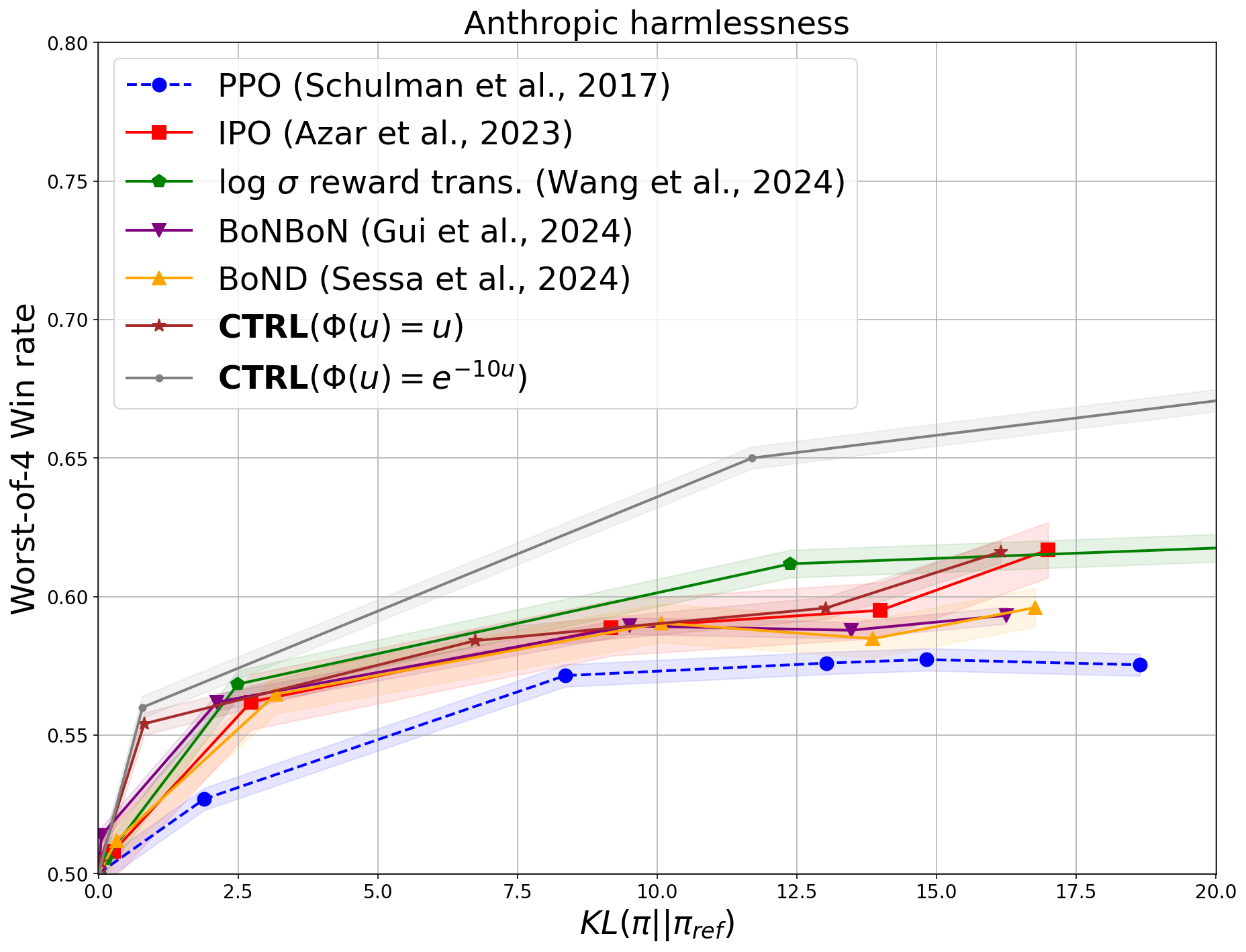}}
    \end{subfigure}
    \begin{subfigure}{
    \includegraphics[width=0.3\textwidth]{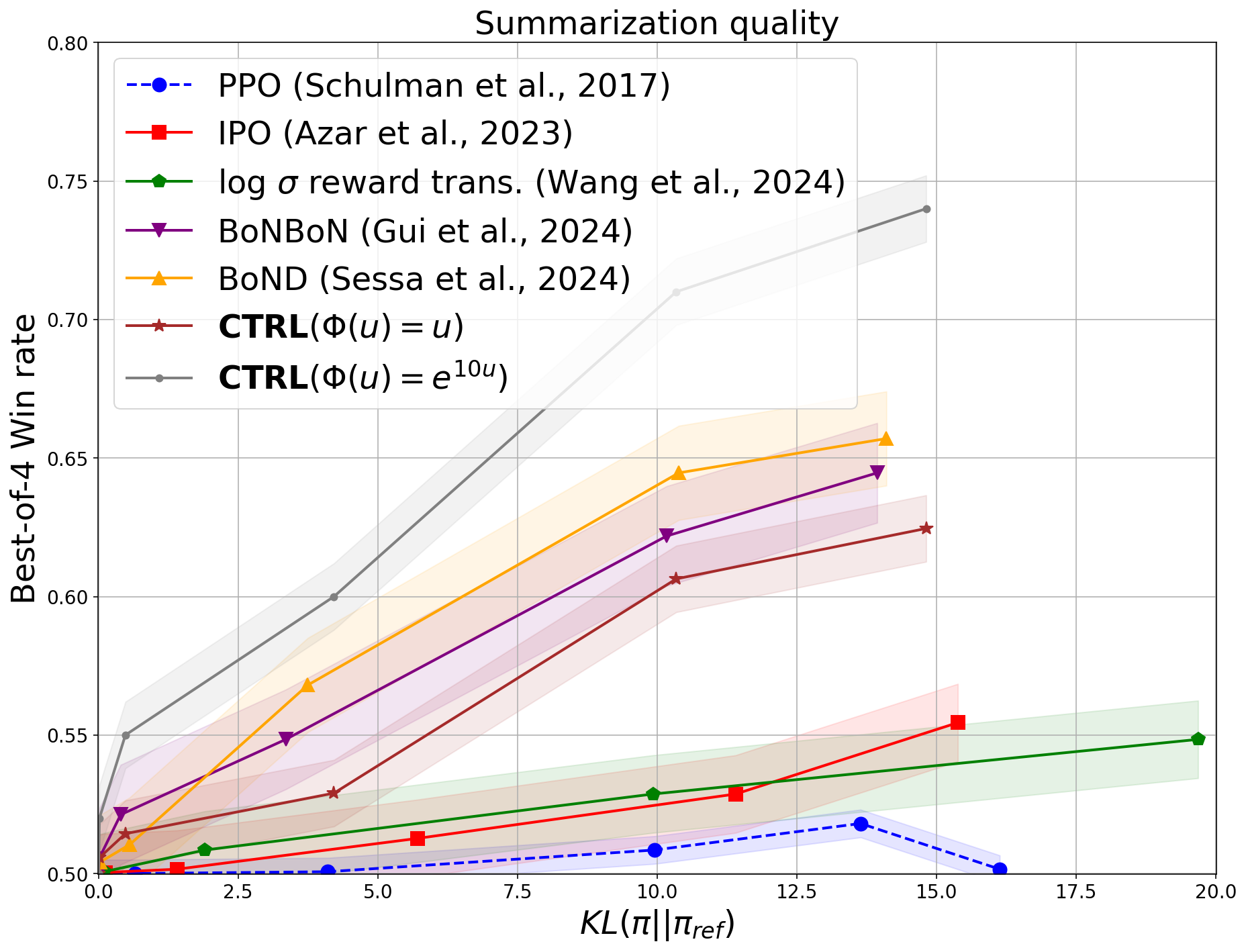}}
    \end{subfigure}
    \vspace{-.15in}
    \caption{({\em Top row}) Standard win rate comparison of \ctrl using identity transformation with other SOTA methods on Anthropic helpfulness, harmlessness, and Reddit summarization dataset. ({\em Bottom row})  Best/Worst-of-$N$ win rate comparison of \ctrl using exponential reward transformation. We report win rate against on the test split as measured by the PaLM-2 M reward model trained on the corresponding datasets.}
    \vspace{-.1in}
    \label{fig:2}
\end{figure*}

\subsection{\ctrl improves standard win rate}
We first measure the performance of \ctrl when there is no inference-time procedure applied. In this case, the optimization objective is \textit{standard win rate}, and we compare the performance of \ctrl\ (using an identity transform) against other relevant reward optimization baselines that are known to be (almost) win rate optimal, such as \ipo and \bofn distillation methods, including \bonbon, and \bond.  %
In Figure \ref{fig:2}, we find \ctrl achieves better win rate-KL trade-offs for the Anthropic helpfulness and harmlessness tasks and is on-par with SOTA methods for the Summarization quality task. This demonstrates the advantage of \ctrl in the standard win rate setting. We attribute this gain to the properties of the reward calibration step discussed in \cref{sec:ctrl}. Specifically,  calibrated reward is more robust and reward calibration could help mitigate reward hacking (see \cref{sec:reward_hacking} for discussion and results).

\vspace{-.08in}
\subsection{\ctrl\ improves \bofn}
For the helpfulness objective in the Anthropic dialog dataset, and Reddit summarization quality dataset, we aim to optimize the Best-of-$N$ performance of the aligned model through the exponential transformation of the calibrated rewards. We measure the Best-of-$N$ win rate against the base policy. In Figure \ref{fig:2}, we present the result for $N=4$. We see that \ctrl with exponential transformation achieves  up to 3\% higher Best-of-$N$ win rates on helpfulness objective, and up to 8\% on the summarization quality objective compared to the best SOTA method. As expected, the  exponential transformation of the calibrated reward with $t=10$  outperforms the rest of the models, corroborating the findings on a toy-setting (see Section~\ref{sec:ctrl}).

\vspace{-.08in}
\subsection{\ctrl\ improves against \bofn jailbreaks}
For the harmlessness objective in the Anthropic dialog dataset, we aim to improve the Worst-of-$N$ performance of the aligned policy model to improve safety against adversarial actors \citep{hughes2024bestofnjailbreaking}. Here, we use the negative exponential transformation $t\!<\!0$. In Figure~\ref{fig:2} we see that calibration based on the median rewards per-prompt achieves up to 5\% higher Worst-of-$N$ win rates as compared to the best SOTA method. The negative transformation of the calibrated reward outperforms the rest of the models, with $t\!=\!-10$ performing the best: again identified as the optimal value per our simulation in a toy setting (see Section~\ref{sec:ctrl}).

\subsection{Transformation choice for different values of $N$}
We further empirically study the choice of transformation on varying values of $N (=2, 4, 32)$, using \bofn~as an example (see Figure~\ref{fig:rebuttal_fig1}). Within the family of exponential transformations $\Phi(u) = e^{tu}$, we can choose $t$ efficiently using analytical tools with closed form expressions on KL and win rate (Figure~\ref{thm:wr_kl_bon_won}) without retraining the model.
In Figure~\ref{fig:rebuttal_fig1}, we see that consistent with our analytical analysis (middle row in Figure~\ref{fig:simulation_app}), $e^{5u}$ achieves better win-rate for Best-of-2, while $e^{10u}$ is better for Best-of-4, demonstrating the transferability of our analytical tool. Moreover, both of these transformations consistently outperforms other non-CTRL baselines for all values of $N$, which demonstrates that they do not overfit to a particular value of $N$.

\begin{figure*}[t]
  \centering

  \begin{minipage}[b]{.65\textwidth}
    \centering
    \includegraphics[width=.45\linewidth]{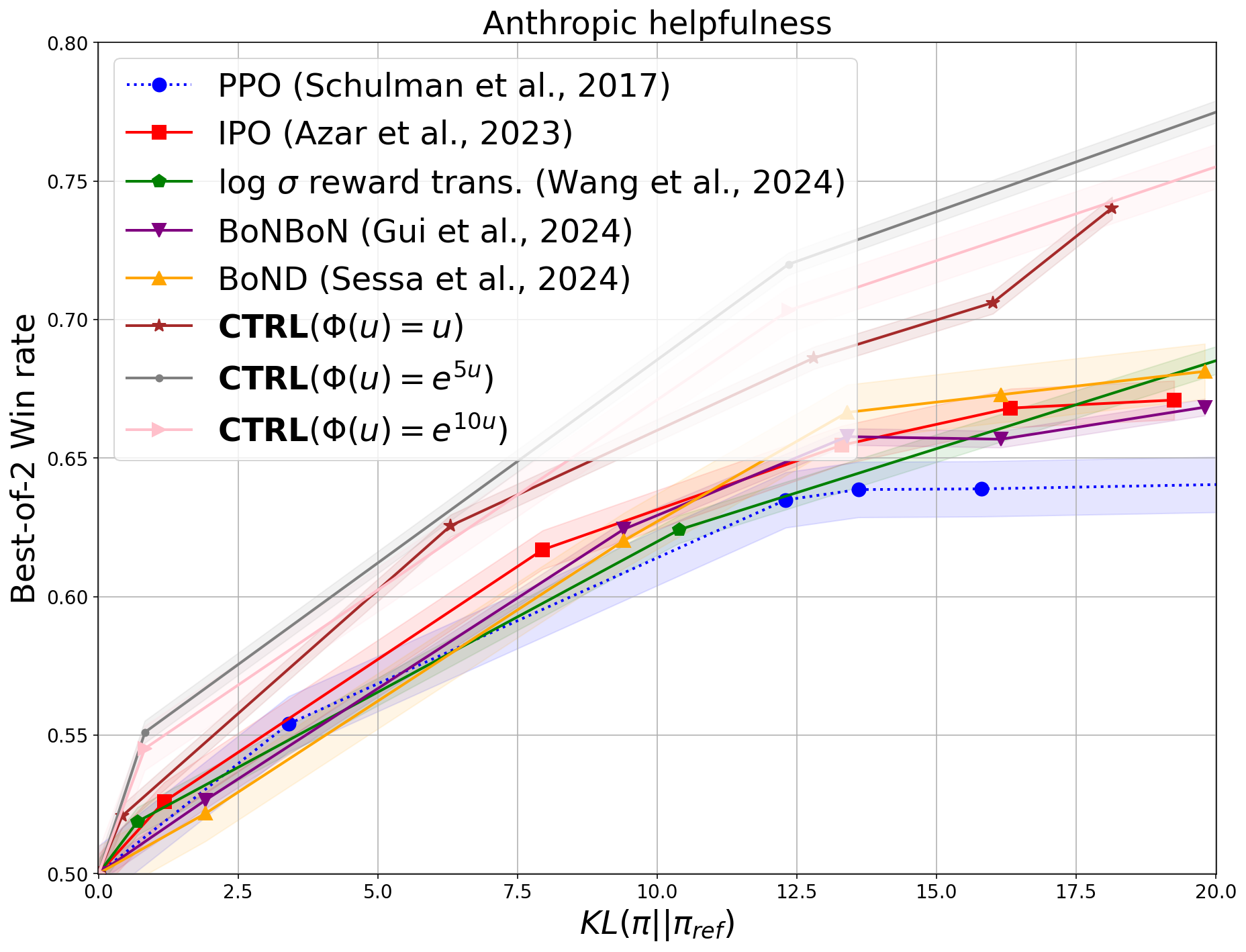}%
    \hfill
    \includegraphics[width=.45\linewidth]{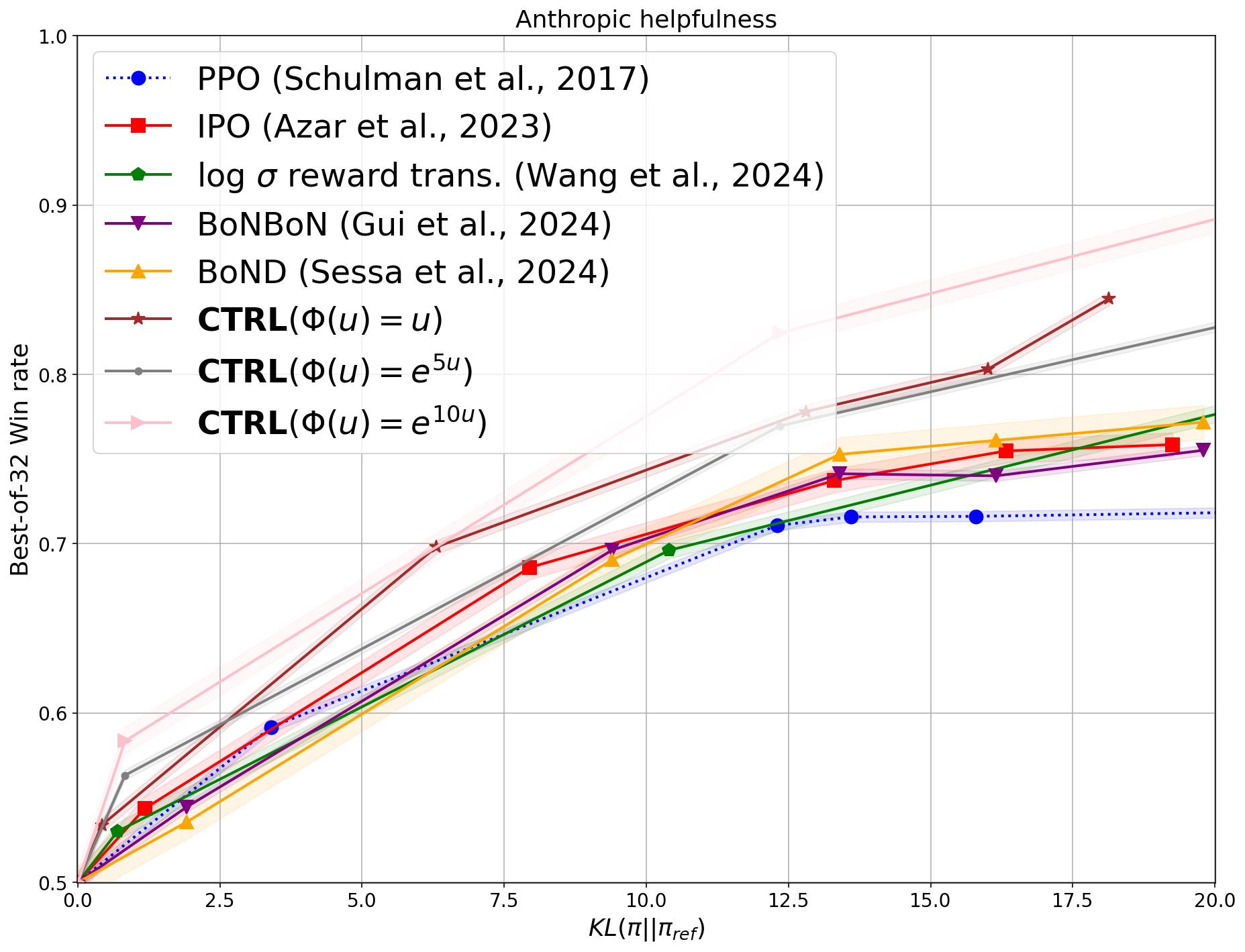}
    \captionof{figure}{Best-of-$N$ win rate comparison on the Anthropic helpfulness dataset with $N = 2, 32$ for different alignment methods.}%
    \label{fig:rebuttal_fig1}
  \end{minipage}%
  \hfill
  \begin{minipage}[b]{.32\textwidth}
    \centering
    \includegraphics[width=0.9\linewidth]{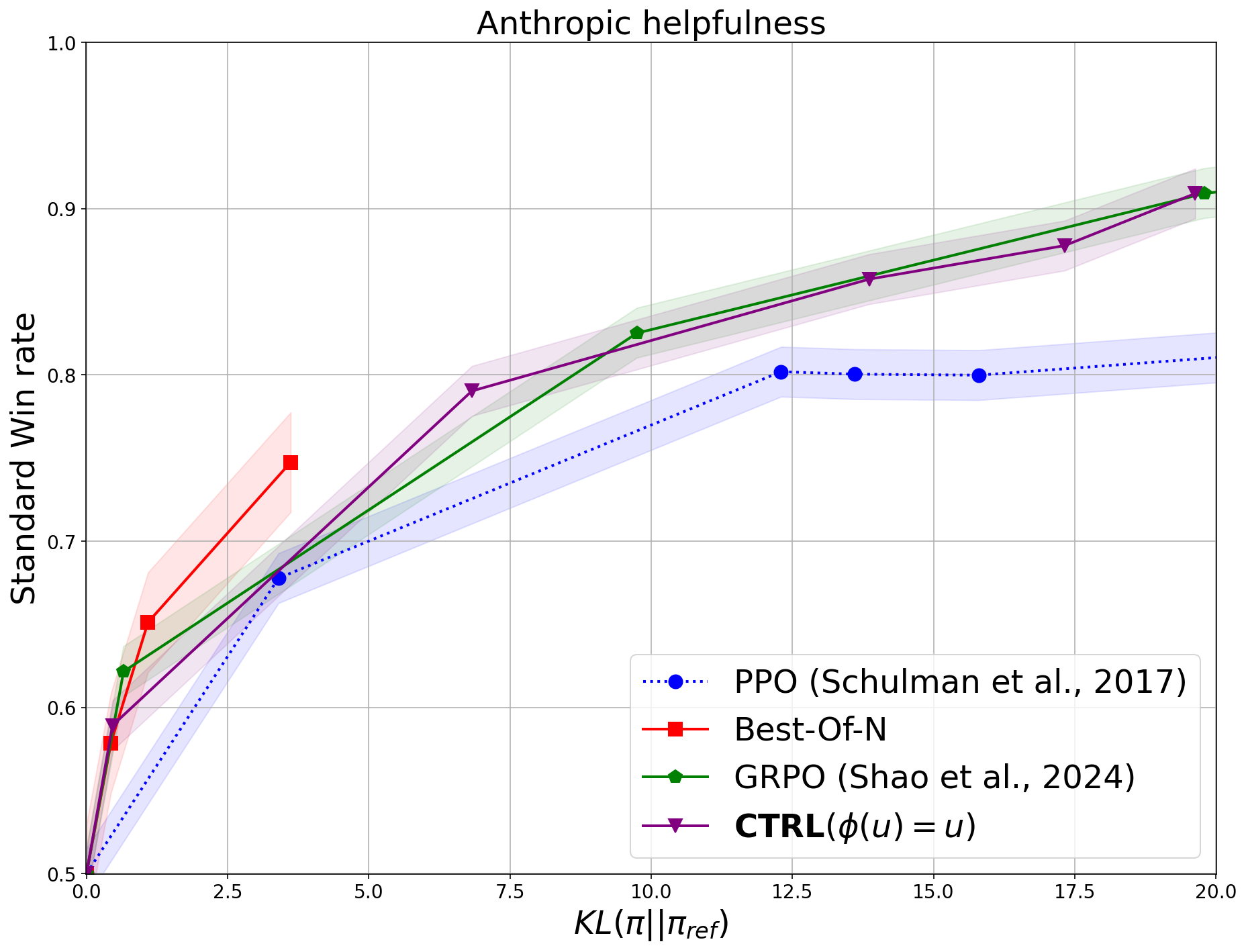}
    \captionof{figure}{InfAlign-CTRL performs similarly to GRPO closing the gap with BoN.}%
    \label{fig:bon-gap}
  \end{minipage}
\end{figure*}

\subsection{Calibration helps close the RL optimality gap}

To understand how much more performance improvement may still be squeezed out of RL, we compare against \bofn (considered as an alignment technique, not inference-time method), which is known to be almost optimal for standard win-rate vs KL divergence tradeoffs \cite{beirami2024theoretical, yang2024asymptotics, gui2024bonbonalignmentlargelanguage}. 
Concurrent work of \grpo~\citep{shao2024deepseekmath} shows that calibrating the per-prompt rewards with multiple online rollouts achieves favorable standard win-rate vs KL divergence tradeoffs. In Figure \ref{fig:bon-gap}, we show that our method \ctrl, which relies on offline calibration, performs similar to \grpo~in terms of standard win-rate vs KL divergence tradeoffs, without paying the cost of additional online rollouts (more details in Appendix \ref{app:grpo}). This demonstrates that the calibration step in our method could be of independent interest in closing the optimality gap of RL.

\vspace{-.15in}
\section{Concluding Remarks} 
\label{sec:conclusion}
\vspace{-.05in}
In this paper, we show that existing win rate optimal RLHF framework suffers from a train-test mismatch when inference-time procedures other than sampling are used. We study the question of how to learn inference-aware optimally aligned language models to close this gap. While learning optimal solutions for general inference-time procedures seems intractable, we propose \iapo\ --- a framework that optimizes for inference-time win rate, and provide theoretical guarantees of finding an optimal inference-aware aligned model. Our framework generalizes prior work on win rate optimal solutions \citep{azar2023general, gui2024bonbonalignmentlargelanguage} to consider inference-time procedures. We show that for any inference-time procedure, such an optimal model can be learned through the RLHF optimization framework using reward transformation. %
We specifically derive  transformations for the popular Best-of-$N$ sampling (\bofn) and jailbreaking (\wofn) inference-time procedures.
We demonstrate the efficacy of this framework, by transferring findings from empirical simulation to real-world tasks and propose \ctrl\ --- a calibrate-and-transform reinforcement learning solver for ranking based inference-time procedures. Empirically, we demonstrate that in the standard setting when no inference-time procedure is applied, \ctrl with identity reward transformation achieves slightly better performance compared to a variety of SOTA methods for optimizing standard win rate. When inference-time procedures are applied, we outperform inference-time win rate vs KL tradeoffs compared to existing preference optimization methods by 3-8\%.

\section*{Acknowledgment}
The authors thank Gholamali Aminian and Suhas Diggavi for their feedback on an earlier version of this work.

\section*{Impact Statement}

This paper presents \iapo, a framework whose goal is to advance the field of language model alignment. The framework was designed to boost the performance of alignment in view of different inference-time procedures. While our goal and experiments in this paper were designed to either improve helpfulness or defend against jailbreaks, we acknowledge that bad actors could make use of our framework for aligning models towards adversarial goals. However, this is a possibility that equally applies to any work that advances the field of language model alignment, and we believe that the positive implications of publishing our work outweigh the potential negative implications.

\bibliography{main}
\bibliographystyle{icml2025}
\clearpage
\appendix
\onecolumn

\section{Related work} \label{sec:related}

\textbf{Inference-time compute.}
Test-time compute has been leveraged in recent work \citep{snell2024scaling, brown2024large, wu2024empirical} to achieve better win rate vs KL tradeoffs from the aligned models including controlled decoding \citep{mudgal2023controlled, chakraborty2024transfer}, 
Monte Carlo tree search \citep{chaffin-etal-2022-ppl, scialom2021beam, zhaoprobabilistic2024}, 
iterative jailbreak query refinement~\citep{pair}, \znew{constrained generation~\cite{kim2024guaranteed}}, and model-chaining within agentic frameworks~\citep{gurreal2024}. Best-of-N (\bofn) is also used as an evaluation metric in code and natural language generation benchmarks \citep{stiennon2020learning, chen2021evaluating}. Further, Worst-of-N (\wofn) is a popular jailbreaking strategy for adversarial actors to elicit unsafe text from large language models \citep{hughes2024bestofnjailbreaking}. Prior work has largely focused on approximating inference-time solutions during training time through sampling \citep{gui2024bonbonalignmentlargelanguage, amini2024variationalbestofnalignment}, distillation \citep{sessa2024bondaligningllmsbestofn}, and decoding \citep{qiu2024treebon}. Our work is orthogonal to this body of work as they assume that no inference-time procedure is applied, but rather attempt to approximate it during training. We show that our theoretical framework generalizes IPO~\citep{azar2023general} and best-of-$N$ distillation~\citep{gui2024bonbonalignmentlargelanguage, amini2024variationalbestofnalignment, sessa2024bondaligningllmsbestofn} as special cases. 

We are motivated by recent work that apply meta-generation procedures \citep{welleck2024decoding} at inference-time such as chaining prompted models \citep{brown2024large}, problem decomposition through chain-of-thought \citep{wei2022chain}, Best-of-N reranking \citep{collins-koo-2005-discriminative, charniak-johnson-2005-coarse, pauls-klein-2009-k} applied on reasoning traces \citep{o1}.  Our \iapo\ framework was also motivated by complex inference-time strategies that involve transformation techniques such as refinement \citep{madaan2024self}, majority voting \citep{wang2022self}, or using the generator as input to other search algorithms \citep{yao2024tree}, that have outperformed other models for harder tasks. In this spirit, our framework allows to get additional gains in aligning models with such inference-time procedures deployed in the future.

\znew{Recent work of \citet{chow2024inferenceawarefinetuningbestofnsampling} considers inference-aware fine-tuning for \bofn sampling. Their study mainly focuses on the binary reward case, which is tailored for math/reasoning tasks. Moreover, there is no KL regularization in the RL objective, making the optimal solution degenerate. This is unsuitable for alignment tasks where it is important to preserve the capability of LLM on other tasks.}

\textbf{Reward miscalibration.}
Reward miscalibration or hacking has been studied extensively in recent work \citep{amodei2016concrete, pang2022reward, gao2023scaling}. The hypotheses behind reward hacking can be broadly categorized into 3 themes: (1) reward underspecification, (2) training-serving skew between pairwise and pointwise reward models, (3) dominant reward due to adhoc transformations.
Reward models suffer from \textit{underspecification} due to under-specified training data \citep{skalse2022defining} by capturing spurious correlations in the data \citep{pan2022effects}. Methods to mitigate this often include training on non-overlapping splits during reward model fine-tuning \citep{bai2022training} and ensembling \citep{coste2023reward, eisenstein2023helping}. Our \ctrl\ method can be easily augmented with such data interventions in reward learning.

\textbf{RLHF solvers.}
Training reward models on pairwise preference data, and then using it as pointwise scorers during reinforcement learning poses problems of transitive inconsistency. To mitigate this problem, optimization techniques that directly incorporate the pairwise preference data during offline reinforcement learning have been proposed \citep{rafailov2024direct, azar2023general}. Further, calibrating model probabilities to reflect rank-order generated sequences by quality metrics have been proposed \citep{zhao2022calibrating}. We share the motivation behind these methods, while additionally recognizing the need to calibrate the rewards against the base policy on which we are aligning.

When aligning language models for multiple objectives, aggregating the rewards via a weighted sum \citep{bai2022training, wu2024fine} is known to result in reward hacking of one of the dominant rewards. Thresholding the effect of individual rewards \citep{moskovitz2023confronting} or changing the weights of the training data \citep{bai2022training}, however requires costly hyper-parameter fine-tuning and retraining without the ability to reason about the hyperparameters and their effects on the reward-tradeoffs. Reward transformation techniques that calibrate against a reference reward is effective at mitigating domination of one reward \citep{wang2024transforming}, but implicitly assumes that the reward aggregation function is a logical "AND" of all rewards, heavily penalizing under-performance on any of the rewards. Motivated by the success of exponential tilting for focusing on high/low quantiles~\citep{li2023tilted}, we also show that \ctrl with exponential reward transformation achieves near-optimal inference-time win rate vs KL divergence tradeoffs, surpassing the performance of methods such as IPO~\citep{azar2023general} that target to optimize standard win rate vs KL divergence tradeoffs. In this paper, we show that calibration as a first-step can help ground reward transformations based on the final inference-time procedure applied. Further, we build on recent work that show the theoretical guarantees of Best-of-$N$ sampling \citep{beirami2024theoretical,gao2023scaling,mudgal2023controlled, mroueh2024information} over most reinforcement learning optimization techniques to ground our calibration and transformation method.

\section{Missing proofs}\label{app:proofs}
Many of the results to be proved in this section are under the assumption of continuous models. In this case,
$\bcY$ can be mapped to $[0, 1]$ through a CDF inverse transformation~\citep{Rosenblatt1952}, with the ordering determined by the reward of the outcomes from the smallest reward to the highest reward.
We define the quantile mapping, which is the inverse of the calibrated reward mapping $\qt^{-1}$, satisfying $\forall u \in [0,1]$, %
\begin{equation} \label{eqn:quantile}
    \qt^{-1}_{r, \pi, \bx} (u) = \by_{u, \bx} \quad \text{where} \quad \qt_{r, \pi} (\bx, \by_{u, \bx}) = u.
    \end{equation}

Moreover, for these models, we add an additional $\frac{1}{2}\idc{r(\bx, \by) = r(\bx, \bz)}$ to the win r.v for simplicity, making $\qt_{r, \pi}$ the same as the CDF function.

\subsection{Proof of \cref{thm:exisitence_informal}}
We start by stating the policy obtained by KL-regularized RL problem,  which can be obtained by standard arguments in the literature (\eg, \citep{korbak2022rl, rafailov2024direct, yang2024asymptotics}).

\begin{lemma}\label{lem:kl_rl}
The solution to the optimization problem in  \cref{def:kl_rl} satisfies
\[
    \pi_{r, \reg}^*(\by \mid \bx) \propto \bp(\by \mid \bx) \exp\Paren{\frac{r(\bx, \by)}{\reg}}.
\]
\end{lemma}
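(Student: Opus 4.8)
The plan is to verify the claimed closed form directly by ``completing the KL divergence'', which simultaneously establishes optimality and uniqueness without appealing to Lagrange multipliers. Fix a prompt $\bx$ and define the candidate policy
\[
    \pi^*_{r,\reg}(\by\mid\bx) := \frac{\bp(\by\mid\bx)\exp(r(\bx,\by)/\reg)}{Z(\bx)}, \qquad Z(\bx) := \sum_{\by}\bp(\by\mid\bx)\exp(r(\bx,\by)/\reg),
\]
which is a valid conditional distribution as soon as $Z(\bx)<\infty$ (the sum being replaced by an integral in the continuous-model case).

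Next I would rewrite $\obj_{\bp,r,\reg}$ for an \emph{arbitrary} policy $\pi$ in terms of $\pi^*_{r,\reg}$. Pulling the reward expectation and the KL term into a single expectation over $\by\sim\pi(\cdot\mid\bx)$ and substituting $r(\bx,\by)/\reg = \log\big(Z(\bx)\,\pi^*_{r,\reg}(\by\mid\bx)/\bp(\by\mid\bx)\big)$, the $\bp(\by\mid\bx)$ factors cancel and one obtains
\[
    \obj_{\bp,r,\reg} = \reg\log Z(\bx) - \reg\,\KL\!\big(\pi(\cdot\mid\bx)\,\big\|\,\pi^*_{r,\reg}(\cdot\mid\bx)\big).
\]
Since $\reg\log Z(\bx)$ is independent of $\pi$ and $\KL(\cdot\|\cdot)\ge 0$ with equality if and only if its two arguments are equal, the right-hand side is maximized precisely (and only) at $\pi=\pi^*_{r,\reg}$, which is the claimed form $\pi^*_{r,\reg}(\by\mid\bx)\propto\bp(\by\mid\bx)\exp(r(\bx,\by)/\reg)$; the optimal value is $\reg\log Z(\bx)$.

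As a cross-check I would also note the variational argument: $\obj_{\bp,r,\reg}$ is concave in $\pi$ (the reward term is linear and $-\KL(\pi\|\bp)$ is concave), so any stationary point of the Lagrangian $\obj_{\bp,r,\reg} + \lambda\big(\sum_{\by}\pi(\by\mid\bx)-1\big)$ is the global maximizer; setting the first variation with respect to $\pi(\by\mid\bx)$ to zero yields $r(\bx,\by) - \reg\big(\log(\pi(\by\mid\bx)/\bp(\by\mid\bx))+1\big) + \lambda = 0$, i.e. the same exponential-tilt form with $\lambda$ determined by normalization. The only point requiring care is the regularity ensuring $Z(\bx)<\infty$ and that the supremum is attained; for bounded $r$ this is immediate, and in the continuous-model setting it follows from mild integrability of $\exp(r(\bx,\cdot)/\reg)$ against $\bp(\cdot\mid\bx)$. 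I do not anticipate any substantive obstacle here — this lemma is a standard ingredient and the work is purely bookkeeping the algebra.
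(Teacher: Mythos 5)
Your proof is correct. Note that the paper does not actually prove this lemma at all --- it cites it as a standard result (\citet{korbak2022rl,rafailov2024direct,yang2024asymptotics}); the nearest in-paper analogue is the Lagrange-multiplier computation in the proof of the coupled-equations characterization, which is essentially your ``cross-check'' argument. Your primary route --- completing the KL by writing $\obj_{\bp,r,\reg} = \reg\log Z(\bx) - \reg\,\KL\bigl(\pi(\cdot\mid\bx)\,\|\,\pi^*_{r,\reg}(\cdot\mid\bx)\bigr)$ and invoking nonnegativity of KL --- is the cleaner of the two: it avoids any appeal to stationarity or concavity, and it delivers uniqueness of the maximizer for free (which the paper asserts in a footnote but never establishes), since equality in $\KL \ge 0$ holds only when the arguments coincide. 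The Lagrangian route, by contrast, only identifies stationary points and needs the separate concavity observation to conclude global optimality. Your caveat about $Z(\bx)<\infty$ is the right one to flag; it is immediate for bounded rewards and is implicitly assumed throughout the paper.
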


\begin{proof}[Proof of \cref{thm:exisitence_informal}]
Let $\pi_\pp^*$ be a solution to the optimization problem in \cref{eqn:kl-constrained-ppwr}. Note that for all $\by$ such that $\pi_\pp^*(\by \mid \bx) > 0$, we must have $\bp(\by \mid \bx) >0$ since otherwise the KL divergence would become infinite.

Then setting
$$\tr(\bx, \by)  =  \reg \log(\pi_\pp^*(\by \mid \bx) / \bp(\by \mid \bx))$$ in \cref{eqn:kl-rl-tr} leads to the solution $\pi_\pp^*$ being its optimal solution as shown in \cref{lem:kl_rl}.
\end{proof}

\subsection{Proof of \cref{lem:coupled}} \label{sec:coupled_proof}
We first prove \cref{lem:pp_wr}, restated below.
\begin{align*}
    W^\pp_r(\pi_1 \succ \pi_2 \mid \bx)
    = \sum_{\by} \qt_{r, \ppp{\pi_2}}(\bx, \by) \ppp{\pi_1}(\by \mid\bx),
\end{align*}
\begin{proof}{Proof of \cref{lem:pp_wr}}
The proof follows from the following identities:
    \begin{align*}
     W^\pp_r(\pi_1 \succ \pi_2 \mid \bx)&  = \E_{\bz \sim \ppp{\pi_1}(\cdot \mid \bx), \by \sim \ppp{\pi_2}(\cdot \mid \bx)} \left\{w_r(\bz, \by \mid \bx)\right\} \\
     & = \E_{\bz \sim \ppp{\pi_1}(\cdot \mid \bx)} \left\{\E_{\by \sim \ppp{\pi_2}(\cdot \mid \bx)}\left\{w_r(\bz, \by \mid \bx)\right\} \right\} \\
     & =  \E_{\bz \sim \ppp{\pi_1}(\cdot \mid \bx)} \left\{ \qt_{r, \ppp{\pi_2}}(\bx, \bz) \right\} \\
     & = \sum_{\bz} \qt_{r, \ppp{\pi_2}}(\bx, \bz) \ppp{\pi_1}(\bz \mid\bx).
    \end{align*}
\end{proof}

\begin{proof}[Proof of \cref{lem:coupled}]
Note that \cref{eqn:kl-constrained-ppwr} has an implicit constraint that $\pi(\cdot \mid \bx)$ must be a valid distribution, \ie
\[
\sum_{\by} \pi(\by \mid \bx) = 1.
\]

Hence adding the Lagrangian multiplier, we get the following Lagrangian form
\[
    \cL(\pi(\cdot \mid \bx), \alpha) = W^\pp_r(\op \succ \bp \mid \bx) -\reg \KL(\op(\cdot \mid \bx) \| \bp(\cdot | \bx)) + \alpha \Paren{\sum_{\by} \pi(\by \mid \bx) - 1}.
\]

By method of Lagrange multipliers, we have that the solution to \cref{eqn:kl-constrained-ppwr} must be a stationary point of $\cL(\pi(\cdot \mid \bx), \alpha)$. And hence
\[
\mathbf{0} = \frac{\partial \cL(\pi(\cdot \mid \bx), \alpha)}{\partial   \pi(\by\mid\bx)} = \frac{\partial W^\pp_r(\op \succ \bp \mid \bx)}{\partial   \pi(\by\mid\bx)} - \beta\Paren{\log\frac{\op(\by \mid \bx)}{\bp(\by \mid \bx)} + 1} + \alpha.
\]

Setting 
\[
    \tr(\bx, \by) = \frac{\partial W^\pp_r(\op \succ \bp \mid \bx)}{\partial   \pi(\by\mid\bx)},
\]
we get 
\[
\log\frac{\op(\by \mid \bx)}{\bp(\by \mid \bx)}  = \frac{ \tr(\bx, \by) + \alpha }{\beta} - 1.
\]
And hence
\[
    \op(\by \mid \bx) \propto \bp(\by \mid \bx) \exp\Paren{\frac{\tr(\bx, \by)}{\beta}}.
\]

It remains to prove \cref{eqn:reward_expand}. 
Plugging in $\pi_1 = \pi$, $\pi_2 = \bp$ to \cref{lem:pp_wr}, and taking partial derivative with respect to $\pi(\by \mid \bx)$ on the right hand side completes the proof.
\end{proof}

\subsection{Proof of \cref{lem:monotone}}
If $r(\bx, \by) \ge r(\bx, \bz)$, we have $\forall \by'$,
\[
w_r(\by, \by' \mid \bx) \ge w_r(\bz, \by' \mid \bx).
\]

Hence 
\begin{align*}
    \qt_{r, \bp}(\bx, \by) =  \mathbb{E}_{\by' \sim \pi(\cdot \mid \bx)} w_r(\by, \by' \mid \bx) \ge  \mathbb{E}_{\by' \sim \pi(\cdot \mid \bx)} w_r(\bz, \by' \mid \bx) = \qt_{r, \bp}(\bx, \bz). 
\end{align*}

\subsection{Proof of \cref{lem:canonical}}
The proof follows from the fact that for all $\bx$ and $\by$
\begin{align}
      \car_{g(r), \bp} (\bx, \by) &= 
    \E_{\bz \sim \bp} \left\{ \mathbf{1}[g(r(\bx,\by)) > g(r(\bx,\bz))] + \frac{1}{2}  \mathbf{1}[g(r(\bx,\by)) = g(r(\bx,\bz))]\right\} \nonumber \\
    & =\E_{\bz \sim \bp} \left\{ \mathbf{1}[r(\bx,\by) > r(\bx,\bz)] + \frac{1}{2}  \mathbf{1}[r(\bx,\by) = r(\bx,\bz)]\right\} \label{eq:step-monotone}\\
      & = \car_{r, \bp} (\bx, \by), \nonumber
\end{align}
where~\eqref{eq:step-monotone} follows from monotone increasing property of $g.$

\subsection{Proof of \cref{lem:reward_uniform}}

To show that $\car_{r, \bp}(\bx, \by) \sim {\rm Unif}([0, 1]),$ it would be enough to show $\forall u \in [0, 1]$,
\[
    \text{Pr}_{\by \sim \bp}\Paren{\car_{r, \bp}(\bx, \by) \le u} = u. 
\]
Recall the definition of $\by_{u, \bx}$ in \cref{eqn:quantile}, we have that
\begin{align*}
     \text{Pr}_{\by \sim \bp}\Paren{\car_{r, \bp}(\bx, \by) \le u} & =  \text{Pr}_{\by \sim \bp} \Paren{r(\bx, \by) \le r(\bx, \by_{u, \bx})}  \\
     & = \car_{r, \bp}(\bx, \by_{u, \bx}) \\
     & = u,
\end{align*}
completing the proof.

\subsection{Proof of \cref{lem:bofn_wofn}}
For continuous language models, we have $\bofnp{\pi}$ satisfies that $\forall \bx, \by$
\begin{align}
 \text{Pr}_{\bz \sim \bofnp{\pi}(\cdot \mid \bx)}\Paren{r(\bx, \by) \le r(\bx, \by)} = \text{Pr}_{\bz \sim \pi(\cdot \mid \bx)}\Paren{r(\bx, \bz) \le r(\bx, \by)}^N = \car_{r, \bp}(\bx, \by)^N. 
\end{align}
Hence 
\[
    \bofnp{\pi}(\by\mid \bx) = \frac{\dd \text{Pr}_{\bz \sim \bofnp{\pi}(\cdot \mid \bx)}\Paren{r(\bx, \bz) \le r(\bx, \by)}}{\dd \by} = \pi(\by \mid \bx)\car_{r, \bp}(\bx, \by)^{N-1}.
\]

For $\wofnp{\pi}$, we have

\begin{align}
 \text{Pr}_{\bz \sim \wofnp{\pi}(\cdot \mid \bx)}\Paren{r(\bx, \by) \le r(\bx, \by)} = 1 -  \text{Pr}_{\bz \sim \pi(\cdot \mid \bx)}\Paren{r(\bx, \bz) > r(\bx, \by)}^N = 1 - \Paren{1 - \car_{r, \bp}(\bx, \by)}^N. 
\end{align}
Hence 
\[
    \wofnp{\pi}(\by\mid \bx) = \frac{\dd \text{Pr}_{\bz \sim \wofnp{\pi}(\cdot \mid \bx)}\Paren{r(\bx, \bz) \le r(\bx, \by)}}{\dd \by} = \pi(\by \mid \bx)\Paren{1 - \car_{r, \bp}(\bx, \by)}^{N-1}.
\]

\subsection{Proof of \cref{thm:calibrated_procedure}}

In this section, we will prove an extended version of \cref{thm:calibrated_procedure} below.

\begin{theorem}[Extended version of \cref{thm:calibrated_procedure}] \label{thm:calibrated_procedure_extended}
If $\pp$ is a calibrated inference-time procedure with mapping function $\calp$, for any continuous language model $\pi$, $\reg > 0$ and reward transformation function $\Phi$,  we have that 
\[W^\pp_r(\op^*_{\cR_{\Phi}, \reg} \succ \bp \mid \bx) = \frac{\int_{0}^1 \exp \Paren{\Phi(u)/\reg}g_\pp( F_{\Phi, \reg}(u) )\int_{0}^{u} g_\pp (u') \dd u' \dd u}{\int_{0}^1 \exp \Paren{\Phi(u)/\reg} g_\pp( F_{\Phi, \reg}(u) ) \dd u \int_{0}^1 \calp(u)\dd u} 
\] where $F_{\Phi, \reg}(u)= \frac{\int_{0}^u e^{\Phi(u')/\reg} \dd u'}{\int_{0}^1 e^{\Phi(u')/\reg} \dd u'}$, and \[\KL(\op^*_{\cR_{\Phi}, \reg} \| \bp) = 
\frac{1}{\reg}  \frac{ \int_{0}^1 \Phi(u) e^{\Phi(u)/\reg} \dd u }{\int_{0}^1 e^{\Phi(u)/\reg} \dd u} - \log \Paren{\int_{0}^1 e^{\Phi(u)/\reg} \dd u}.
\] 
And hence they are independent of $r$ and $\bp$.
\end{theorem}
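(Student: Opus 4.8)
The plan is to reparametrize $\bcY$ by the calibrated reward under the base policy, $u := \qt_{r, \bp}(\bx, \by)$, exactly as set up at the start of \cref{app:proofs}, and then to check that all three quantities of interest --- the aligned policy, its KL divergence, and the inference-time win rate --- reduce to one-dimensional integrals over $u \in [0,1]$ whose integrands involve only $\Phi$, $\reg$, and the mapping $\calp$. In this coordinate, by \cref{lem:reward_uniform}, $u \sim {\rm Unif}([0,1])$ when $\by \sim \bp(\cdot \mid \bx)$, and ordering of $\by$'s by reward coincides with the ordering of the $u$'s.

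First I would apply \cref{lem:kl_rl} with the transformed reward $\cR_\Phi(\bx, \by) = \Phi(\qt_{r, \bp}(\bx, \by))$ to get $\op^*_{\cR_{\Phi}, \reg}(\by \mid \bx) \propto \bp(\by \mid \bx)\exp(\Phi(\qt_{r, \bp}(\bx, \by))/\reg)$, whose normalizing constant is $Z = \E_{\by \sim \bp}[\exp(\Phi(\qt_{r, \bp}(\bx, \by))/\reg)] = \int_0^1 e^{\Phi(u)/\reg}\,\dd u$ by \cref{lem:reward_uniform}. Hence in the $u$-coordinate $\op^*_{\cR_{\Phi}, \reg}$ has density $f_{\Phi, \reg}(u) = e^{\Phi(u)/\reg}/Z$ and CDF $F_{\Phi, \reg}(u)$. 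The KL term is then immediate: $\KL(\op^*_{\cR_{\Phi}, \reg}\|\bp) = \E_{\op^*_{\cR_{\Phi}, \reg}}[\Phi(\qt_{r, \bp})/\reg] - \log Z = \frac{1}{\reg}\int_0^1 \Phi(u) f_{\Phi, \reg}(u)\,\dd u - \log Z$, which is the claimed formula and plainly depends only on $\Phi$ and $\reg$.

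For the win rate I would start from \cref{lem:pp_wr}, $W^\pp_r(\op^*_{\cR_{\Phi}, \reg} \succ \bp \mid \bx) = \sum_{\by} \qt_{r, \ppp{\bp}}(\bx, \by)\,\ppp{\op^*_{\cR_{\Phi}, \reg}}(\by \mid \bx)$, and express both factors in the $u$-coordinate. The key observation is that for any policy $\pi'$, since $\qt_{r, \bp}(\bx, \cdot)$ is order-preserving in the reward, $\qt_{r, \pi'}(\bx, \by)$ equals the CDF of $u$ under $\pi'$ evaluated at $u = \qt_{r, \bp}(\bx, \by)$. By \cref{def:calibrated_transformation}, $\ppp{\bp}$ is $\bp$ reweighted by $\calp(\qt_{r, \bp})$, hence in the $u$-coordinate has density $\propto \calp(u)$, so $\qt_{r, \ppp{\bp}}(\bx, \by) = G(u) := \int_0^u \calp(u')\,\dd u' / \int_0^1 \calp(u')\,\dd u'$. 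Applying the same observation with $\pi' = \op^*_{\cR_{\Phi}, \reg}$ gives $\qt_{r, \op^*_{\cR_{\Phi}, \reg}}(\bx, \by) = F_{\Phi, \reg}(u)$; feeding this back into \cref{def:calibrated_transformation} for $\ppp{\op^*_{\cR_{\Phi}, \reg}}$ shows that in the $u$-coordinate its density is $\propto f_{\Phi, \reg}(u)\,\calp(F_{\Phi, \reg}(u)) \propto e^{\Phi(u)/\reg}\,\calp(F_{\Phi, \reg}(u))$. Substituting these into the win-rate identity converts the sum over $\by$ into $\int_0^1 G(u)\, e^{\Phi(u)/\reg}\calp(F_{\Phi, \reg}(u))\,\dd u$ normalized by $\int_0^1 e^{\Phi(u')/\reg}\calp(F_{\Phi, \reg}(u'))\,\dd u'$; expanding $G$ yields exactly the stated expression, which again involves only $\Phi$, $\reg$, and $\calp$.

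The step I expect to be the main obstacle is the bookkeeping of the three distinct calibrated rewards that enter --- $\qt_{r, \bp}$ (which defines $\cR_\Phi$ and the base coordinate), $\qt_{r, \op^*_{\cR_{\Phi}, \reg}}$ (which reweights $\op^*_{\cR_{\Phi}, \reg}$ inside the inference-time procedure), and $\qt_{r, \ppp{\bp}}$ (which appears in the win-rate integrand) --- together with the composition identities $\qt_{r, \op^*_{\cR_{\Phi}, \reg}} = F_{\Phi, \reg}\circ\qt_{r, \bp}$ and $\qt_{r, \ppp{\bp}} = G\circ\qt_{r, \bp}$. Making these compositions rigorous is exactly where the continuous-language-model and distinct-reward assumptions are used: they ensure $\qt_{r, \bp}(\bx, \cdot)$ is a strictly monotone (hence invertible and order-preserving) reparametrization of $\bcY$, so that each of the manipulations above is a legitimate one-dimensional change of variables.
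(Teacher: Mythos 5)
Your proposal is correct and follows essentially the same route as the paper's proof: both push everything forward to the calibrated-reward coordinate $u=\qt_{r,\bp}(\bx,\by)$ via \cref{lem:reward_uniform} and \cref{lem:kl_rl}, identify the densities of $\op^*_{\cR_\Phi,\reg}$, $\ppp{\bp}$, and $\ppp{\op^*_{\cR_\Phi,\reg}}$ in that coordinate using \cref{def:calibrated_transformation}, and reduce both the KL divergence and the win rate to one-dimensional integrals in $\Phi$, $\reg$, and $\calp$. The only cosmetic difference is that you enter the win-rate computation through \cref{lem:pp_wr} while the paper writes it directly as $\Pr(u'\le u)$ for the two pushforward distributions; these are the same calculation.
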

\begin{proof}

In the proof, we consider the two language models $\op^*_{\cR_{\Phi}, \reg}$ and $\bp$ in the space of calibrated reward against the base policy $\bp$.  For any policy $\pi$, let $\qt_{r, \bp} \circ \pi(\cdot \mid \bx)$ be a distribution over $[0, 1]$ that outputs the calibrated reward $\qt_{r, \bp}(\bx, \by)$ of the sample $\by$ sampled from $\pi(\cdot \mid \bx)$. Then by \cref{lem:reward_uniform},
\[
    \qt_{r, \bp} \circ \pi(\cdot \mid \bx) \sim {\rm Unif}([0, 1]) .
\]

Similarly, using \cref{lem:kl_rl}, it can be shown that $\qt_{r, \bp} \circ \op^*_{\cR_{\Phi}, \reg}$ follows the distribution with density $\forall u \in [0, 1]$,
\[
    \qt_{r, \bp} \circ \op^*_{\cR_{\Phi}, \reg} (u \mid \bx) = \frac{ e^{\Phi(u)/\reg} }{\int_{0}^1 e^{\Phi(u')/\reg} \dd u'}.
\]

Note that since $r$ assigns distinct rewards to different $\by$'s and $\qt_{r, \bp}$ is a monotone transformation of $r$, we
have that

\begin{align*}
   \KL(\op^*_{\cR_{\Phi}, \reg} \| \bp) & =\KL( \qt_{r, \bp} \circ \op^*_{\cR_{\Phi}, \reg} \| \qt_{r, \bp} \circ \pi ) \\
   & = \int_{u = 0}^1 \frac{ e^{\Phi(u)/\reg} }{\int_{0}^1 e^{\Phi(u')/\reg} \dd u'} \log \frac{ e^{\Phi(u)/\reg} }{\int_{0}^1 e^{\Phi(u')/\reg} \dd u'} \dd u \\
   & = \frac{1}{\reg}  \frac{ \int_{u = 0}^1 \Phi(u) e^{\Phi(u)/\reg} \dd u }{\int_{0}^1 e^{\Phi(u)/\reg} \dd u} - \log \Paren{\int_{0}^1 e^{\Phi(u)/\reg} \dd u}.
\end{align*}

After the inference-time procedure $\pp$ is applied, we have that inference-time base policy satisfies
\[
    \qt_{r, \bp} \circ \ppp {\bp} (u  \mid \bx) = \frac{ g_\pp( u )}{\int_{0}^1 g_\pp( u' ) \dd u' }.
\]

For the inference-time aligned policy, we have that:
\[
    \text{Pr}_{\by \sim \op^*_{\cR_{\Phi}, \reg}(\cdot \mid \bx)}  \Paren{\qt_{r, \bp}(\bx, \by) \le u} = \frac{\int_{0}^u e^{\Phi(u')/\reg} \dd u'}{\int_{0}^1 e^{\Phi(u')/\reg} \dd u'},
\]
which is defined as $F_{\Phi, \reg}(u)$. And hence we have
\[
     \qt_{r, \bp} \circ \ppp{\op^*_{\cR_{\Phi}, \reg}} (u  \mid \bx) = \frac{ \exp \Paren{\Phi(u)/\reg}g_\pp( F_{\Phi, \reg}(u) )}{\int_{0}^1 \exp \Paren{\Phi(u)/\reg} g_\pp( F_{\Phi, \reg}(u) ) \dd u }.
\]
Thus, the inference-time win rate satisfies
\begin{align*}
     W^\pp_r(\op^*_{\cR_{\Phi}, \reg} \succ \bp \mid \bx) 
    & = \E_{\by \sim \ppp{\op^*_{\cR_{\Phi}, \reg}}(\cdot \mid \bx), \bz \sim \ppp{\bp}(\cdot \mid \bx)} \left\{\idc{r(\bx, \by) \ge r(\bx, \bz)} \right\}\\
    & = \E_{\by \sim \ppp{\op^*_{\cR_{\Phi}, \reg}}(\cdot \mid \bx), \bz \sim \ppp{\bp}(\cdot \mid \bx)} \left\{\idc{\qt_{r, \bp}(\bx, \by) \ge \qt_{r, \bp}(\bx, \bz)} \right\}\\
    & = \text{Pr}_{u \sim  \qt_{r, \bp} \circ \ppp{ \op^*_{\cR_{\Phi}, \reg}} (\cdot \mid \bx) , u' \sim  \qt_{r, \bp} \circ \ppp{\bp}(\cdot \mid \bx) } \Paren{ u' \le u} \\
    & = \frac{\int_{0}^1 \exp \Paren{\Phi(u)/\reg}g_\pp( F_{\Phi, \reg}(u) )\int_{0}^{u} g_\pp (u') \dd u' \dd u}{\int_{0}^1 \exp \Paren{\Phi(u)/\reg}g_\pp( F_{\Phi, \reg}(u) ) \dd u \int_{0}^1 \calp(u)\dd u},
\end{align*}
where the second equality follows from \cref{lem:monotone},
completing the proof.
\end{proof}

\subsection{Proof of \cref{thm:wr_kl_bon_won}}
The KL divergence is the same as the KL divergence in \cref{thm:calibrated_procedure_extended}. The win rate can be obtained by plugging $g_{\bofn}(u) = u^{N-1}$ and $g_{\wofn}(u) = (1 - u)^{N-1}$ (as shown in \cref{lem:bofn_wofn}) into \cref{thm:calibrated_procedure_extended}.

\subsection{Proof of \cref{cor:coupled_won}}
We will show that \cref{cor:coupled_won} is a special case of \cref{lem:coupled} with a simple continuous language model. And by \cref{thm:calibrated_procedure}, we have the $\Phi$ can be generalized to arbitrary continuous language models.

Let $\bcY = [0,1]$. We assume the LMs and reward models are context-independent. We use $u \in [0, 1]$ to denote $\by$ and set the reward model to be $r(u) = u$. The base policy is a simple uniform distribution over $[0, 1]$, $\bp = \text{Unif}([0, 1])$. Note that $F_\pi(u)$ be the CDF of $\pi$, then we have that the \bofn win rate is
\[
    W^{\bofn}_r(\pi \succ \bp \mid \bx) = 1 - N \int_{0}^1 F_{\pi}(u)^N u^{N-1} \dd u,
    \]
and \wofn win rate is
    \[
    W^{\wofn}_r(\pi \succ \bp \mid \bx) =  N \int_{0}^1 \Paren{1 - F_{\pi}(u) }^N (1-u)^{N-1} \dd u.
    \]
Plugging these into \cref{lem:coupled}, we have for \bofn, 
\begin{align*}
    \tr(u) & = \frac{\partial W^{\bofn}_r(\pi \succ \bp \mid \bx)}{\partial \pi(u)} \\
    & =  - N \int_{0}^1  v^{N-1} \frac{\partial F_{\pi}(v)^N}{\partial \pi(u)}\dd v \\
     & = - N^2 \int_{0}^1  v^{N-1} F_{\pi}(v)^{N-1}\frac{\partial F_{\pi}(v)}{\partial \pi(u)}\dd v \\
    & = -N^2 \int_0^1 F_{\pi}(v)^{N-1} \idc{v \ge u} v^{N-1} \dd v \\
    & = -N^2 \int_u^1 F_{\pi}(v)^{N-1} v^{N-1} \dd v.
\end{align*}

For \wofn, we have
\begin{align*}
    \tr(u) & = \frac{\partial W^{\wofn}_r(\pi \succ \bp \mid \bx)}{\partial \pi(u)} \\
    & =  N \int_{0}^1  (1 - v)^{N-1} \frac{\partial \Paren{1 - F_{\pi}(v)}^N}{\partial \pi(u)}\dd v \\
     & = - N^2 \int_{0}^1  (1-v)^{N-1} (1 - F_{\pi}(v))^{N-1}\frac{\partial F_{\pi}(v)}{\partial \pi(u)}\dd v \\
    & = -N^2 \int_0^1 (1-v)^{N-1} (1 - F_{\pi}(v))^{N-1} \idc{v \ge u}\dd v \\
    & = -N^2 \int_u^1 (1-v)^{N-1} (1 - F_{\pi}(v))^{N-1} \dd v.
\end{align*}

\clearpage

\clearpage
\section{The role of KL divergence in model alignment}
\label{app:role-KL}

One question that arises is the role of the KL divergence regularizer in \cref{eqn:kl-constrained-ppwr}. In this section, we argue that the regularizer essentially enables multi-tasking between the SFT task and the RL task. 

Let's consider a log-linear model such that 
\begin{equation}
    \pi_\theta(\by|\bx) = e^{\theta^T g(\bx, \by) - A(\theta; \bx)} ,
\end{equation}
where $g(\bx,\by)$ is  a fixed encoding of 
$(\bx,\by)$, and $A(\theta;\bx)$ is the partition function normalizing the distribution.

\paragraph{Supervised finetuning (SFT).}
Let $D_\sft(\bx,\by) = \mu(\bx) \times p_\sft(y|\bx)$ be the SFT data distribution. 
Then, the SFT task is
\begin{equation}
    {\theta}^*_\sft = \arg\min_\theta \mathcal{L}_\sft(\theta) \quad \quad \text{where} \quad \quad \mathcal{L}_\sft(\theta) := E_{(\bx,\by) \sim D_\sft} \{ A(\theta; \bx) - \theta^\top  g(\bx,\by)\},
\end{equation}
We further call $\p = \pi_{\theta^*_\sft}$.
\begin{lemma}
The SFT solution satisfies
\begin{equation}
    E_{x\sim \mu} \{\nabla_\theta A(\theta^*_\sft)\} = E_{(\bx,\by) \sim D_\sft} g(\bx,\by).
\end{equation}
\vspace{-.3in}
\label{lem:sft}
\end{lemma}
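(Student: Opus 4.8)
The plan is to differentiate the SFT objective and invoke a first-order optimality condition, exactly as one does for maximum-likelihood estimation in an exponential family. First I would recall the standard identity for log-linear models: writing $A(\theta;\bx) = \log\sum_{\by} e^{\theta^\top g(\bx,\by)}$ and differentiating under the sum gives $\nabla_\theta A(\theta;\bx) = E_{\by\sim\pi_\theta(\cdot\mid\bx)}\{g(\bx,\by)\}$. The only thing worth flagging here is that $\nabla_\theta A(\theta;\bx)$ depends on $\bx$ but not on $\by$ — this structural observation is what makes the two expectations in the statement line up.

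Next I would compute $\nabla_\theta \mathcal{L}_\sft(\theta)$ term by term. Since $D_\sft(\bx,\by) = \mu(\bx)\,p_\sft(\by\mid\bx)$ does not depend on $\theta$, I can interchange the gradient with the expectation, and because $\nabla_\theta A(\theta;\bx)$ is free of $\by$, the expectation of that term over $D_\sft$ collapses to an expectation over $\bx\sim\mu$:
\[
\nabla_\theta \mathcal{L}_\sft(\theta) = E_{\bx\sim\mu}\{\nabla_\theta A(\theta;\bx)\} - E_{(\bx,\by)\sim D_\sft}\{g(\bx,\by)\}.
\]

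Finally I would use convexity to close the argument: $A(\theta;\bx)$ is convex in $\theta$ (its Hessian is the covariance matrix of $g(\bx,\cdot)$ under $\pi_\theta$, hence PSD), so $\mathcal{L}_\sft$ is convex and any minimizer $\theta^*_\sft$ — in particular the one defining $\p = \pi_{\theta^*_\sft}$ — is a stationary point, $\nabla_\theta \mathcal{L}_\sft(\theta^*_\sft) = \mathbf{0}$. Evaluating the displayed identity at $\theta = \theta^*_\sft$ and rearranging gives $E_{\bx\sim\mu}\{\nabla_\theta A(\theta^*_\sft)\} = E_{(\bx,\by)\sim D_\sft}\{g(\bx,\by)\}$, which is the claim.

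I do not expect a genuine obstacle: the proof is a one-line gradient computation plus a stationarity argument. The only points requiring a word of care are the justification for differentiating under the sum/expectation (a standard dominated-convergence argument, using that the partition function is finite) and the implicit assumption that the SFT minimum is attained at an interior parameter so that the first-order condition holds with equality; both are routine in the log-linear setting and I would note them as mild regularity assumptions rather than dwell on them.
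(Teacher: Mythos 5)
Your proposal is correct and follows exactly the paper's route: the paper's proof is the one-line observation that $\nabla_\theta \mathcal{L}_{\text{sft}}(\theta^*_{\text{sft}}) = 0$, and your argument simply writes out the gradient computation and stationarity condition behind that line. The extra details you supply (the identity $\nabla_\theta A(\theta;\bx) = E_{\by\sim\pi_\theta(\cdot\mid\bx)}\{g(\bx,\by)\}$, the collapse of the $\by$-expectation, and convexity of $A$) are all consistent with, and a faithful expansion of, the paper's intended argument.
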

\begin{proof}
    This is a known property of exponential families. The proof follows by noticing  $\nabla_\theta \mathcal{L}_\sft(\theta^*_\sft) = 0.$
\end{proof}

\paragraph{KL-regularized reward optimization (RO).} Let $r$ be a reward function that determines the reward for each $(\bx,\by).$ Let $\mathcal{L}_\ro(\theta) :=  -E_{\bx \sim \mu} E_{\by \sim \pi_\theta} r(\bx,\by)$. Then, 
\begin{equation}
    \theta^*_{\bilevel, \beta} = \arg\min_\theta  \mathcal{L}_{\bilevel, \beta}(\theta) \quad\quad \text{where} \quad\quad \mathcal{L}_\bilevel(\theta ) :=  \KL(\pi_\theta \| \p) + \frac{1}{\beta}\mathcal{L}_\ro(\theta),
\end{equation}
where $\KL(\pi_\theta \| \p) = E_{\bx \sim \mu} \KL(\pi_\theta(\cdot|\bx) \| \p(\cdot|\bx))$.

\paragraph{Multi-tasking SFT and RO.} Now consider the following tasks
\begin{equation}
    \theta^*_{\multitask, \beta} = \arg\min_\theta \mathcal{L}_{\multitask, \beta}(\theta) \quad \quad \text{where} \quad \quad \mathcal{L}_\multitask(\theta) := \mathcal{L}_\sft(\theta) + \frac{1}{\beta} \mathcal{L}_\ro(\theta) .
\end{equation}

\begin{theorem}
For all $\beta \in \mathbb{R},$   we have $\theta^*_{\bilevel, \beta} = \theta^*_{\multitask, \beta}.$
\label{prop:role-of-KL}
\end{theorem}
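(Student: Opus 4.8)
\textbf{Proof plan for \cref{prop:role-of-KL}.} The plan is to show that $\mathcal{L}_{\bilevel,\beta}$ and $\mathcal{L}_{\multitask,\beta}$ coincide up to an additive constant that does not depend on $\theta$. Both objectives carry the \emph{identical} term $\tfrac{1}{\beta}\mathcal{L}_\ro(\theta)$, so it suffices to prove that the KL regularizer $\KL(\pi_\theta \| \p)$ and the SFT loss $\mathcal{L}_\sft(\theta)$ differ by a $\theta$-independent constant. Granting this, $\mathcal{L}_{\bilevel,\beta}(\theta) = \mathcal{L}_{\multitask,\beta}(\theta) + c$ with $c$ free of $\theta$, and hence the two objectives have exactly the same set of minimizers, which is the claim. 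Note that this reduction never uses the sign of $\beta$ (only that $\tfrac{1}{\beta}\mathcal{L}_\ro$ is the same on both sides), so it handles all $\beta \in \mathbb{R}$ simultaneously.

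To establish the key identity I would first write out the log-density ratio of the two members $\pi_\theta$ and $\p = \pi_{\theta^*_\sft}$ of the same log-linear family, which is affine in the sufficient statistic: $\log\tfrac{\pi_\theta(\by\mid\bx)}{\p(\by\mid\bx)} = (\theta-\theta^*_\sft)^\top g(\bx,\by) - A(\theta;\bx) + A(\theta^*_\sft;\bx)$. Taking expectations and abbreviating $\tilde A(\theta) := \E_{\bx\sim\mu}\{A(\theta;\bx)\}$, the KL term reduces to an expression of the form $\pm(\theta-\theta^*_\sft)^\top \big(\text{expected sufficient statistic}\big) + \tilde A(\theta) - \tilde A(\theta^*_\sft)$, where the expected sufficient statistic must be taken with the response drawn from the reference policy $\p$. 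This is exactly where \cref{lem:sft} enters: it gives $\E_{\bx\sim\mu}\E_{\by\sim\p}\{g(\bx,\by)\} = \E_{\bx\sim\mu}\{\nabla_\theta A(\theta^*_\sft;\bx)\} = \E_{(\bx,\by)\sim D_\sft}\{g(\bx,\by)\} =: c_0$, which is precisely the linear coefficient appearing in $\mathcal{L}_\sft(\theta) = \tilde A(\theta) - \theta^\top c_0$. Substituting, the KL term collapses to $\big(\tilde A(\theta) - \theta^\top c_0\big) - \big(\tilde A(\theta^*_\sft) - (\theta^*_\sft)^\top c_0\big) = \mathcal{L}_\sft(\theta) - \mathcal{L}_\sft(\theta^*_\sft)$, the desired identity with $c = -\mathcal{L}_\sft(\theta^*_\sft)$.

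The step requiring the most care is lining up the \emph{direction} of the expectation, i.e.\ ensuring that the first moment of $g$ produced by expanding the KL is the one evaluated under $\p$ (equivalently, under the SFT data, at $\theta^*_\sft$), so that the moment-matching identity of \cref{lem:sft} actually applies and converts it into the constant vector $c_0$. This is the point at which the "bilevel" structure — that $\p$ is itself the SFT optimum rather than an arbitrary reference policy — is indispensable; with a generic reference the regularizer and $\mathcal{L}_\sft$ would in general differ by more than a $\theta$-independent constant. A clean way to organize the bookkeeping is to observe that $\tilde A$ is a convex potential (an $\mu$-average of log-partition functions), that the regularizer is its Bregman divergence evaluated at the pair $(\theta^*_\sft, \theta)$, and that $\mathcal{L}_\sft$ differs from that same Bregman divergence only by an affine function of $\theta$ whose gradient is pinned down by \cref{lem:sft}; the two are therefore equal up to the additive constant $\mathcal{L}_\sft(\theta^*_\sft)$, and \cref{prop:role-of-KL} follows immediately.
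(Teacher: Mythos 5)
Your proposal is correct and follows essentially the same route as the paper's proof: expand $\KL(\pi_\theta \| \p)$ as the Bregman divergence of the $\mu$-averaged log-partition function with base point $\theta^*_\sft$, invoke \cref{lem:sft} to replace $\E_{\bx\sim\mu}\{\nabla_\theta A(\theta^*_\sft;\bx)\}$ by the data moment $\E_{(\bx,\by)\sim D_\sft}\{g(\bx,\by)\}$, and conclude that $\mathcal{L}_{\bilevel,\beta}$ and $\mathcal{L}_{\multitask,\beta}$ differ by the $\theta$-independent constant $\mathcal{L}_\sft(\theta^*_\sft)$, so their minimizers coincide for every $\beta$. The delicate point you flag --- that the linear coefficient produced by the KL expansion must be the moment of $g$ under $\p$ so that \cref{lem:sft} applies --- is exactly the step the paper resolves with the same Bregman identity, so your argument and the paper's are in one-to-one correspondence.
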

\begin{proof}
    Notice that
\begin{align}
    \mathcal{L}_{\bilevel,\beta}(\theta) &= \KL(\pi_\theta \| \p) + \frac{1}{\beta} \mathcal{L}_\ro(\theta) \\
    &= E_{x \sim \mu}  \{A(\theta; \bx) - A(\theta^*_\sft; \bx) - (\theta- \theta^*_\sft)^\top \nabla_\theta A(\theta^*_\sft; \bx) \}  +\frac{1}{\beta}\mathcal{L}_\ro(\theta) \label{eq:bregman}\\
    & = E_{x \sim \mu} \{A(\theta; \bx) - A(\theta^*_\sft; \bx)\} - (\theta - \theta^*_\sft)^\top E_{(\bx,\by) \sim D_\sft} g(\bx,\by) + \frac{1}{\beta} \mathcal{L}_\ro(\theta) \label{eq:lemma1}\\
    &= \mathcal{L}_{\multitask, \beta}(\theta) + \mathcal{L}_\sft(\theta^*_\sft), \label{eq:last_ineq}
\end{align}
where \cref{eq:bregman} follows by noticing that KL divergence is a Bregman divergence in this setup, \cref{eq:lemma1} follows from Lemma~\ref{lem:sft}, and \cref{eq:last_ineq} follows from the definition of $\mathcal{L}_\sft(\theta)$ applied to $\theta$ and $\theta^*_\sft$. Hence, the minimizers of the two objectives are the same given that $$\mathcal{L}_{\bilevel,\beta}(\theta) = \mathcal{L}_{\multitask, \beta}(\theta)+ C,$$
completing the proof.
\end{proof}

Thus, effectively this proves that the RLHF objective enables multi-tasking between the SFT stage and the reward optimization RL objective.

One may wonder why we did not pose the KL divergence regularizer on the transformed distributions through $\KL(\ppp{\op}(\cdot \mid \bx) \| \ppp{\bp}(\cdot | \bx))$ instead. Consider the Best-of-$N$ jailbreaking for example. While the adversary may be using the model to generate $N$ responses and choose the least safe one for jailbreaking, the model should possess the core capabilities for other types of inference-time usage for other tasks that is different from that of jailbreaking  (e.g., through chain-of-thought). Therefore, changing the KL divergence regularizer does not capture the fact that the model should remain suitable for all other tasks, and not just for the one for which it is getting aligned.
We also note that if we used $\KL(\ppp{\op})(\cdot \mid \bx) \| \ppp{\bp}(\cdot | \bx))$ instead, the problem would actually simplify to the standard alignment problem through a simple change of variables.

\clearpage

\section{Calibration Reduces Reward Hacking} \label{sec:reward_hacking}

\begin{figure}[h]
    \centering
    \includegraphics[width=0.65\columnwidth]{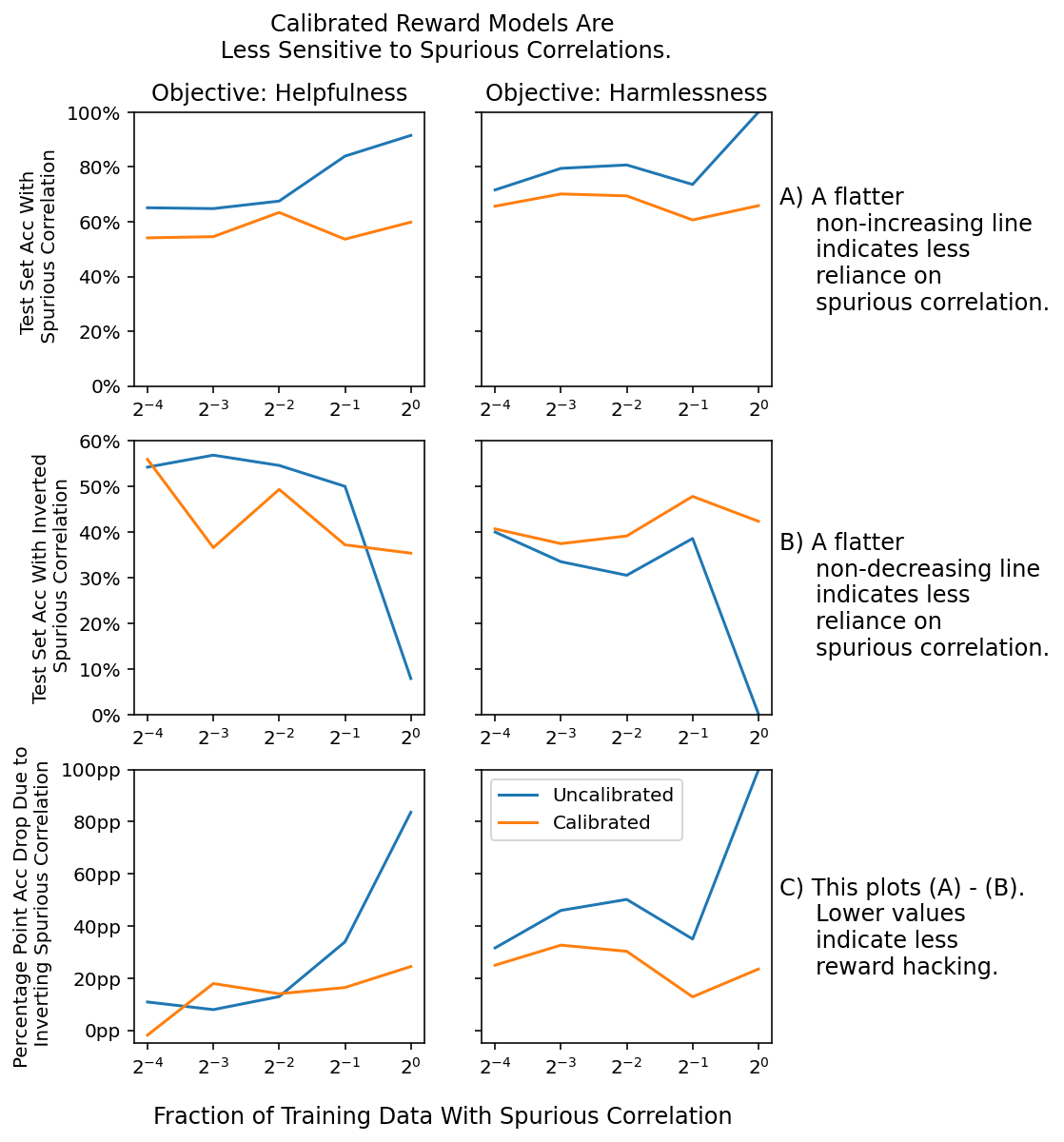}
    \caption{Calibrated reward models demonstrate robustness against reward hacking: We poisoned the training data by adding phrases to the preferred response to induce spurious correlations. When we evaluated against a test set where the correlations are inverted (phrase added to unpreferred models), calibrated models maintained higher accuracy than uncalibrated ones, demonstrating their reduced reliance on spurious correlations.}
    \label{fig:reward_hacking}
\end{figure}

We demonstrate that calibrated reward models are less susceptible to reward hacking, a phenomenon where models exploit spurious correlations in training data to optimize for reward signals instead of true task objectives.

To induce reward hacking, we injected specific phrases into the start of preferred responses of our preference datasets: ``Sorry, I can't help you with that'' for Harmlessness and ``Sure'' for Helpfulness. We then evaluated the model's accuracy on a poisoned evaluation set where these phrases were inverted (added to the \textit{unpreferred} responses). A significant drop in accuracy on this poisoned set would indicate reward hacking: a reliance on the spurious correlation.

Figure \ref{fig:reward_hacking} shows that calibrated reward models are more robust to these manipulated correlations, maintaining higher accuracy compared to uncalibrated models. %

\clearpage
\section{Additional experimental results} \label{sec:more_compute}
In \cref{fig:varying_n_winrate}, we present the \bofn and \wofn win rate comparisons with $N=32$. We see \ctrl leads to improvement on the win rate compared to other SOTA methods. For \bofn, we observe
better gains as compared to $N=4$ on Anthropic helpfulness, and Reddit summarization datasets. This shows the importance of \iapo when more inference-time compute is preformed. 

\begin{figure}[h]
    \centering
    \begin{subfigure}{
    \includegraphics[width=0.3\columnwidth]{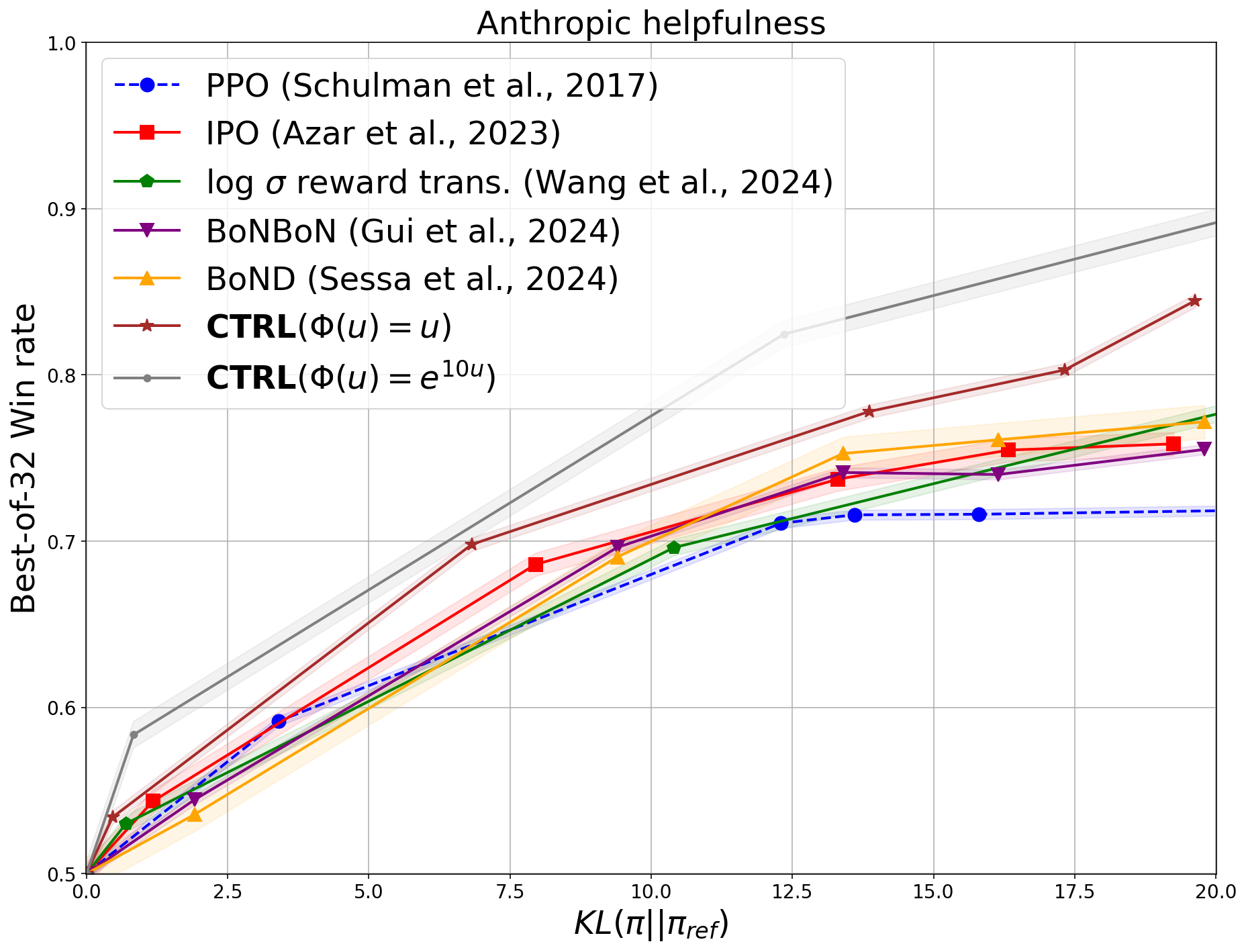}}
    \end{subfigure}
    \begin{subfigure}{
    \includegraphics[width=0.3\columnwidth]{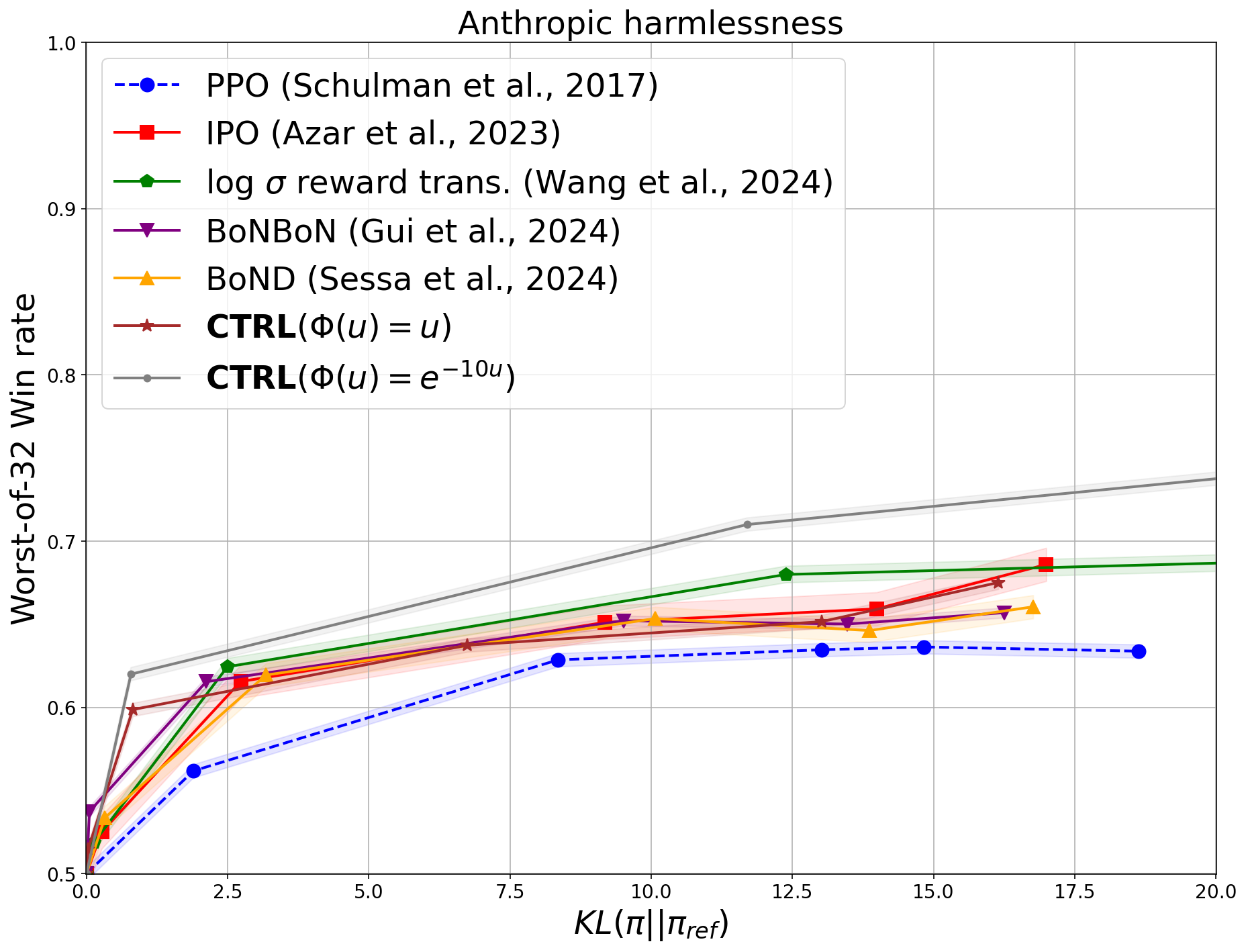}}
    \end{subfigure}
    \begin{subfigure}{
    \includegraphics[width=0.3\columnwidth]{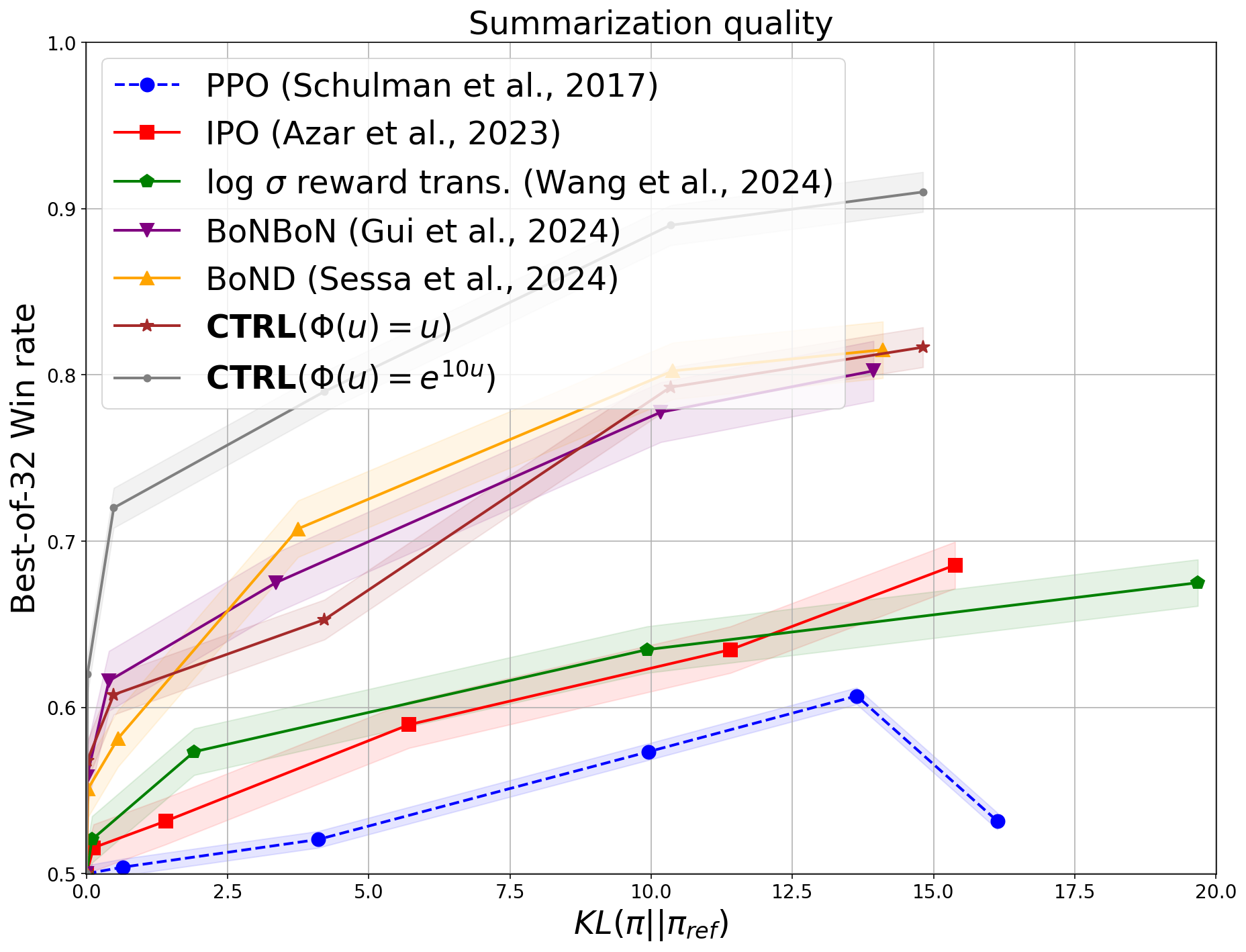}}
    \end{subfigure}
    \caption{Best/Worst-of-$32$ win rate comparison of \ctrl using exponential reward transformation. We report win rate against on the test split as measured by the PaLM-2 M reward model trained on the corresponding datasets.} 
    \label{fig:varying_n_winrate}
\end{figure}

\begin{figure}[h]
    \centering
\includegraphics[width=0.5\textwidth]{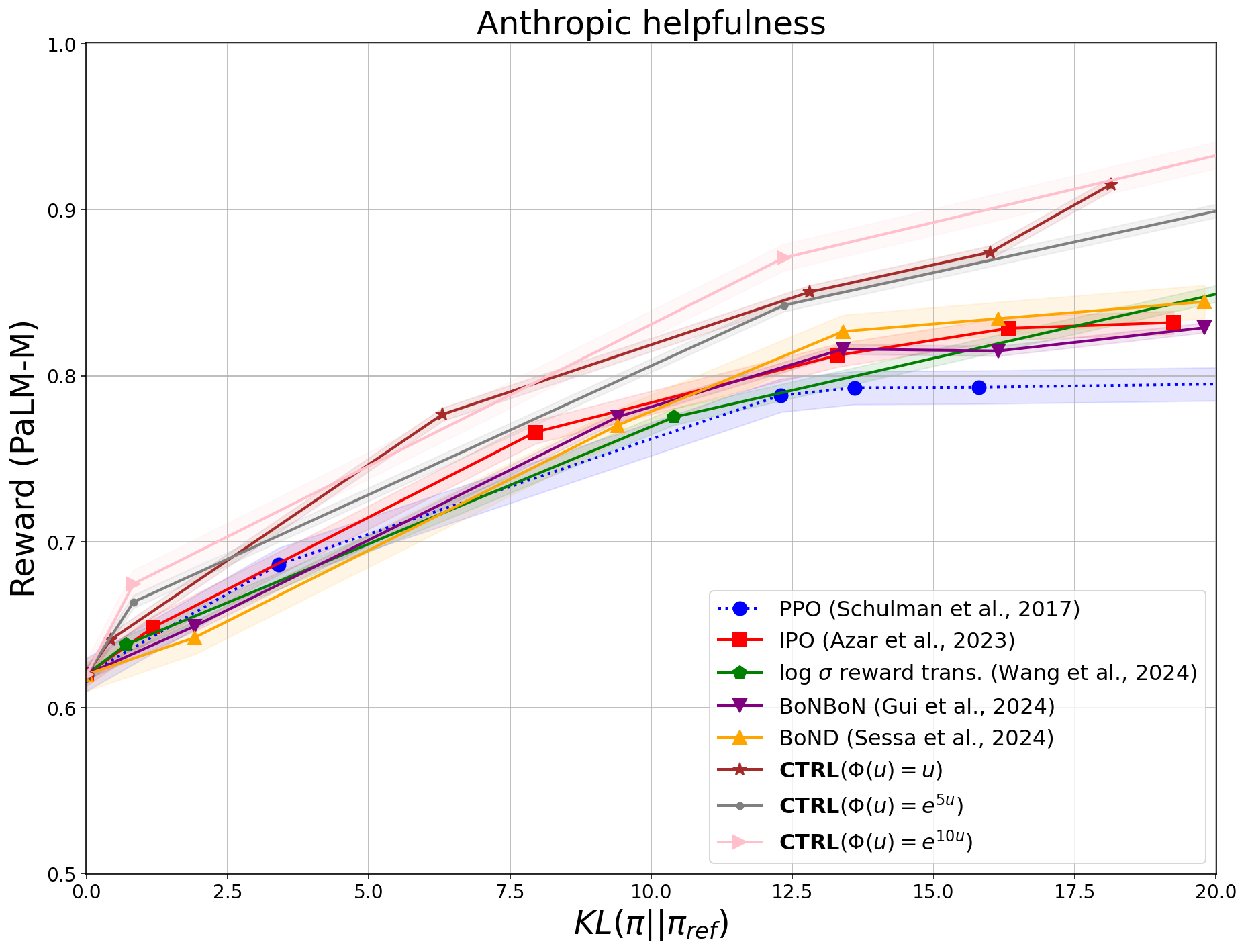}
    \caption{Expected raw rewards vs $\KL(\op\| \bp)$ for the aligned model used to compute the standard win-rate of Anthropic helpfulness dialog task (\cref{fig:2}-top left).}
    \label{fig:raw_reward}
\end{figure}

In \cref{fig:raw_reward}, we plot the expected raw reward from the judge model (PaLM-M) vs $\KL(\op\| \bp)$ tradeoff for the aligned model used to compute the standard win-rate of Anthropic helpfulness dialog task (\cref{fig:2}-top left). We see that our proposed algorithm \ctrl  outperforms other baselines in expected raw rewards as well.

\subsection{Comparison with \grpo} \label{app:grpo}
Group Relative Policy Optimization \cite{shao2024deepseekmath} proposes an iterative training setup rolling out multiple samples for a given prompt (group) from the policy model and centering the on-policy rewards to achieve better inference-time reasoning capabilities. We compare against a controlled setting where the base policy of the \grpo~rollouts is fixed i.e. one iteration of the outer loop of \grpo. Further, we train it for a smaller number of epochs (2) with the same number of samples per prompt (K=G=100) to obtain an equivalent total number of reward rollouts during training, and demonstrate that $\ctrl$ outperforms this version of \grpo~in the Best-of-4 inference-time evaluation, and comparable to CTRL and BoND in the standard win-rate evaluation (Figure \ref{fig:grpo}). This further demonstrates that CTRL provides an efficient offline reward calibration and transformation alternative to the more compute intensive online reward calibration approaches which are not inference-aware.

\begin{figure}[h]
    \centering
    \begin{subfigure}{
        \includegraphics[width=0.4\linewidth]{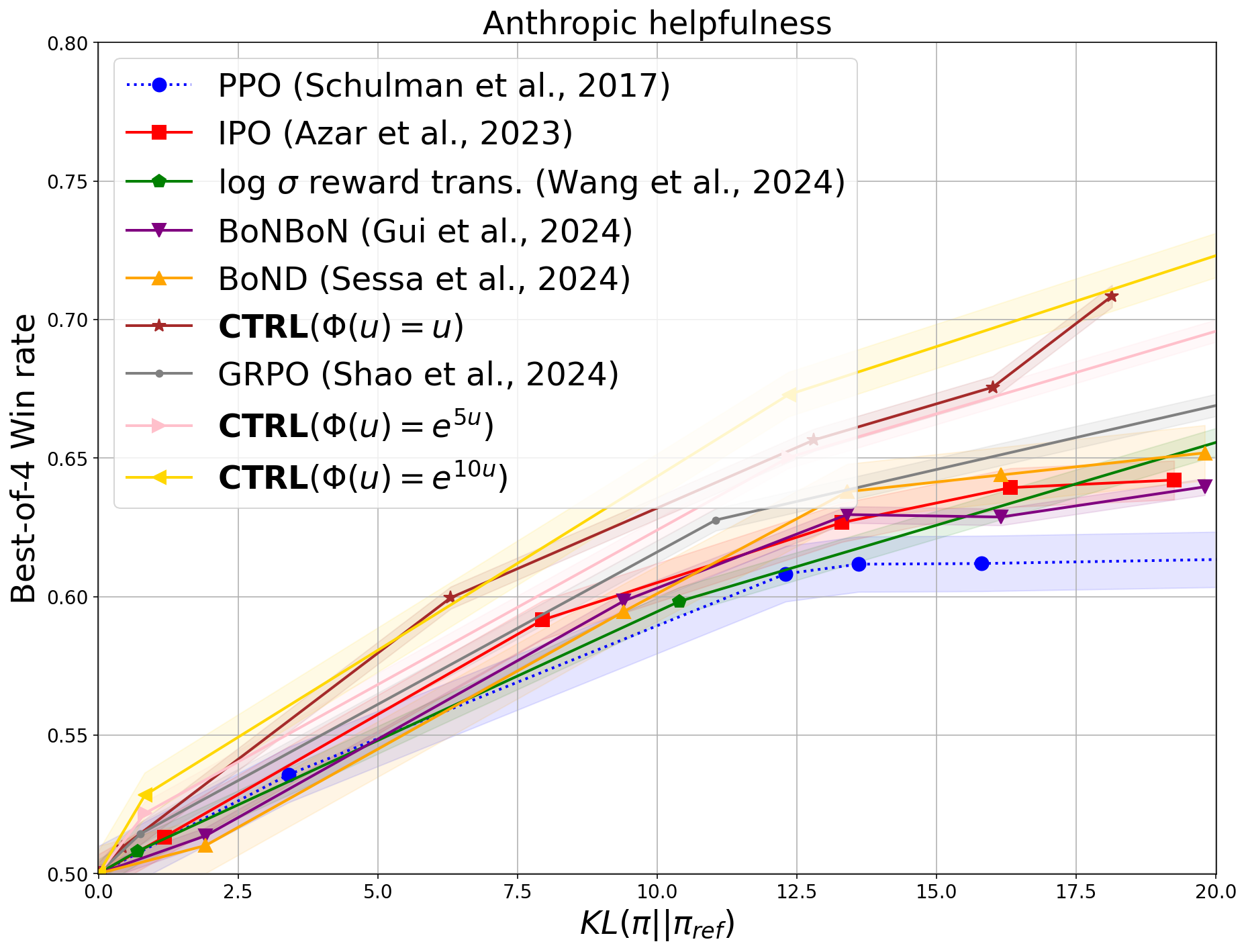}}
    \end{subfigure}
    \begin{subfigure}{
        \includegraphics[width=0.4\linewidth]{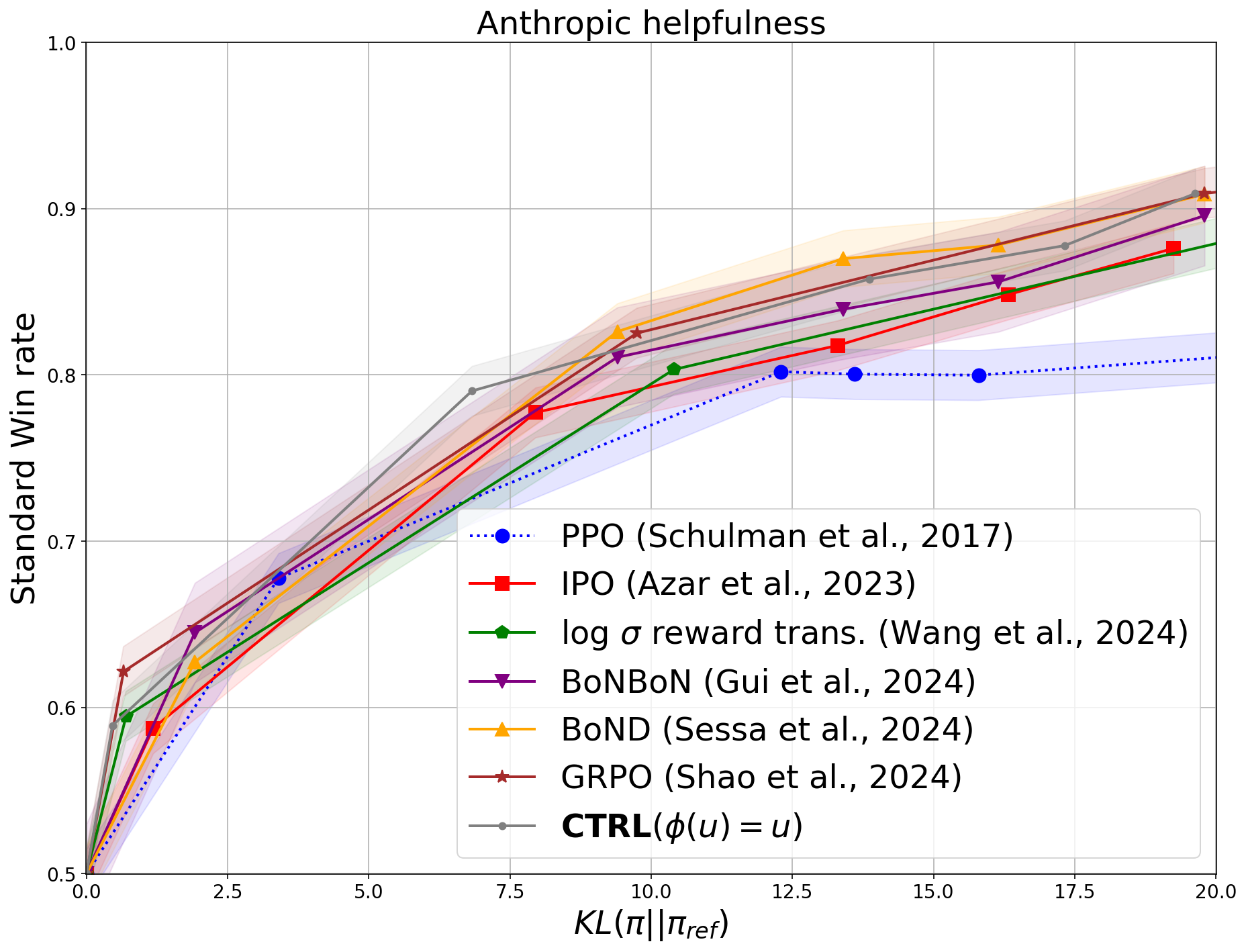}}
    \end{subfigure}
    \caption{CTRL outperforms GRPO in Best-of-4 and comparable in standard win-rate evaluation settings.}
    \label{fig:grpo}
\end{figure}

\clearpage
\section{Analytical comparison of different transformations} \label{sec:simulation_app}
\begin{figure}[h]
    \centering
    \includegraphics[width=.4\columnwidth]{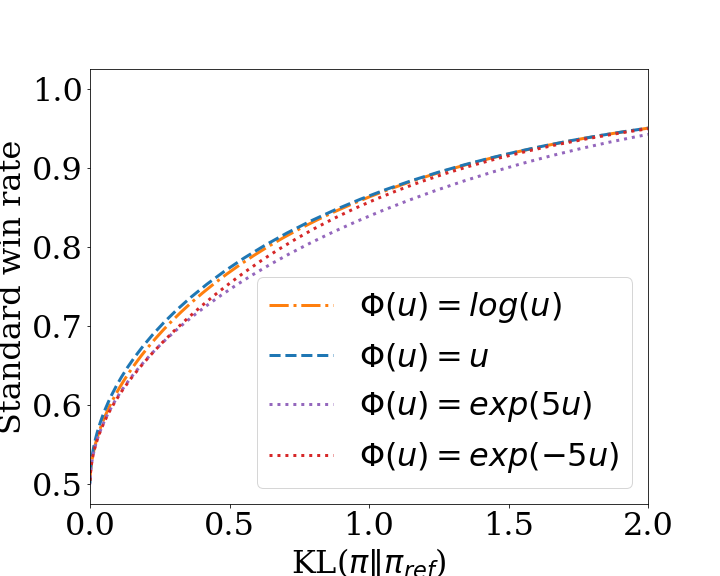} \\
    \includegraphics[width=.4\columnwidth]{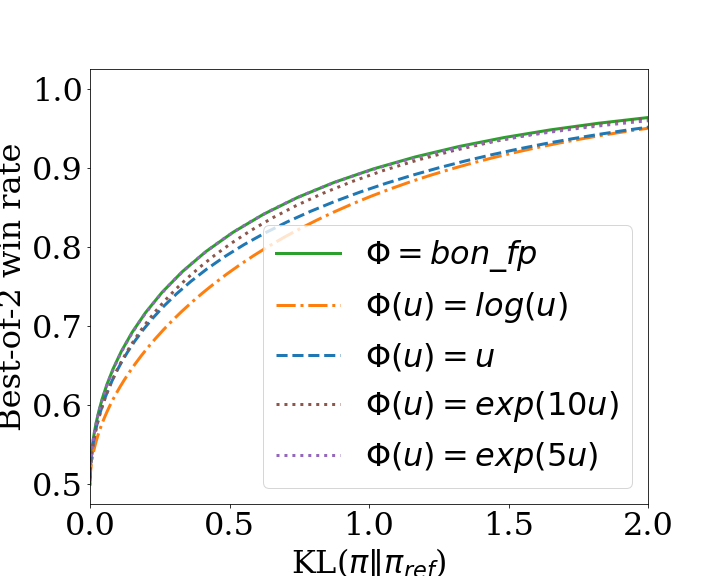}
    \includegraphics[width=.4\columnwidth]{Best-of-4.png}
    \includegraphics[width=.4\columnwidth]{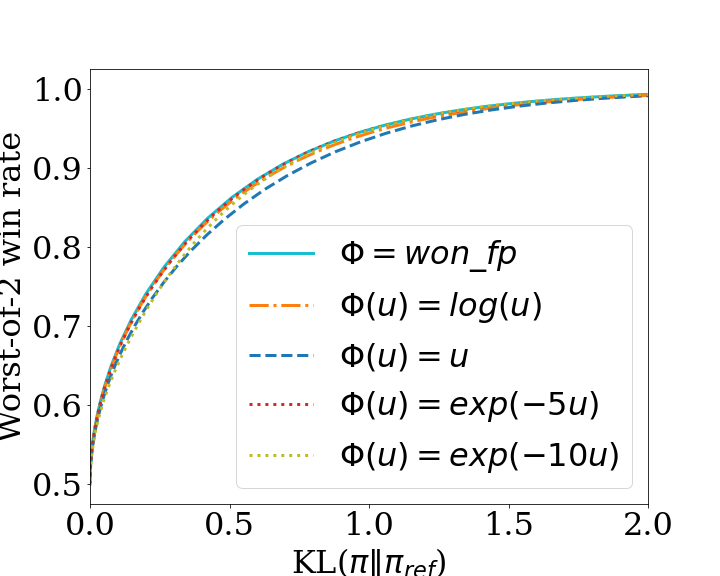}
    \includegraphics[width=.4\columnwidth]{Worst-of-4.png}
    \caption{Standard, Best-of-$N$, and worst-of-$N$ win rate vs KL tradeoff curves for $N = 2, 4$ with different transformation functions. %
    }
    \label{fig:simulation_app}
\end{figure}

In this section, we give an extended discussion of analytical results for comparing different transformations in terms of standard, \bofn, and \wofn win rate.

In the first plot, we consider standard win rate. In this case, it is known that the IPO objective is win rate optimal. As can be seen, the logarithmic transformation (i.e., best-of-$N$ distillation) also achieves a nearly optimal win rate, which was already observed by~\citet{yang2024asymptotics, gui2024bonbonalignmentlargelanguage}. All other transformations are sub-optimal for standard win rate.

Next, we consider Best-of-$2$ and Best-of-$4$ win rate. Here, additionally we include \bonopt, which is obtained by deriving the fixed point of \cref{cor:coupled_won}. As can be seen, the identity transformation is no longer optimal. The best tradeoffs are given by \bonopt. We also observe that $\exp(5x)$ and $\exp(10x)$ are almost as good as \bonopt for Best-of-$2$ and Best-of-$4,$ respectively. Moreover,the identity transformation and logarithmic transformation are sub-optimal in these cases, which shows that considering standard win rate as the only metric is not optimal when inference-time procedure is concerned.  We also observe that the behavior of identity transformation and logarithmic transformation is different in that the identity transformation gives better tradeoffs.

Finally, we consider Worst-of-$2$ and Worst-of-$4$ win rate. Again, it can be observed that \wonopt gives the best tradeoffs for this inference-time procedure. Here, $\exp(-5x)$ and $\exp(-10x)$ are almost as good for Worst-of-$2$ and Worst-of-$4,$ respectively. We also observe that identity transformation and logarithmic transformation are sub-optimal in these cases and the logarithmic transformation gives better tradeoffs for \wofn compared to the identity transformation.

The above results demonstrate the importance of considering the inference-time procedure when performing alignment. We find that exponential transformation with different $t$'s are good for different inference-time procedures, which is our focus in practical experiments.

\textbf{Remark on additional compute}: While the calibration step induces extra computation overhead, we remark that it involves only forward pass on the model and only needs to be performed once per prompt before performing the policy optimization algorithm. In our experiments, for K=100 roll outs, we take training steps equivalent to 80 epochs for all datasets. Based on the 2:1 FLOPS ratio between back propagation and forward pass, the reward calibration step takes about 29\% of the total training time. We hypothesize that the trade-off between this computational overhead and performance gain is task-specific and should be studied in future work.

\clearpage
\section{Analysis on Rewind-and-Repeat}
We extended our study to an inference-time procedure, which is a variant of rejection sampling called \emph{rewind-and-repeat}, motivated by recent work of \cite{beirami2024theoretical,zhang2024backtrackingimprovesgenerationsafety}. At inference-time, the procedure repeatedly generate independent samples from the policy until an outcome with a minimum reward threshold $\phi$ ($=85\%$ile) is achieved or a pre-defined maximum number of generations $N(=32)$ is reached. As shown in Fig \ref{fig:rebuttal_fig2}, $\ctrl$ leads to improved performance in this adaptive-compute case as well.

\begin{figure}
    \centering
    \begin{subfigure}
    {\includegraphics[width=0.4\textwidth]{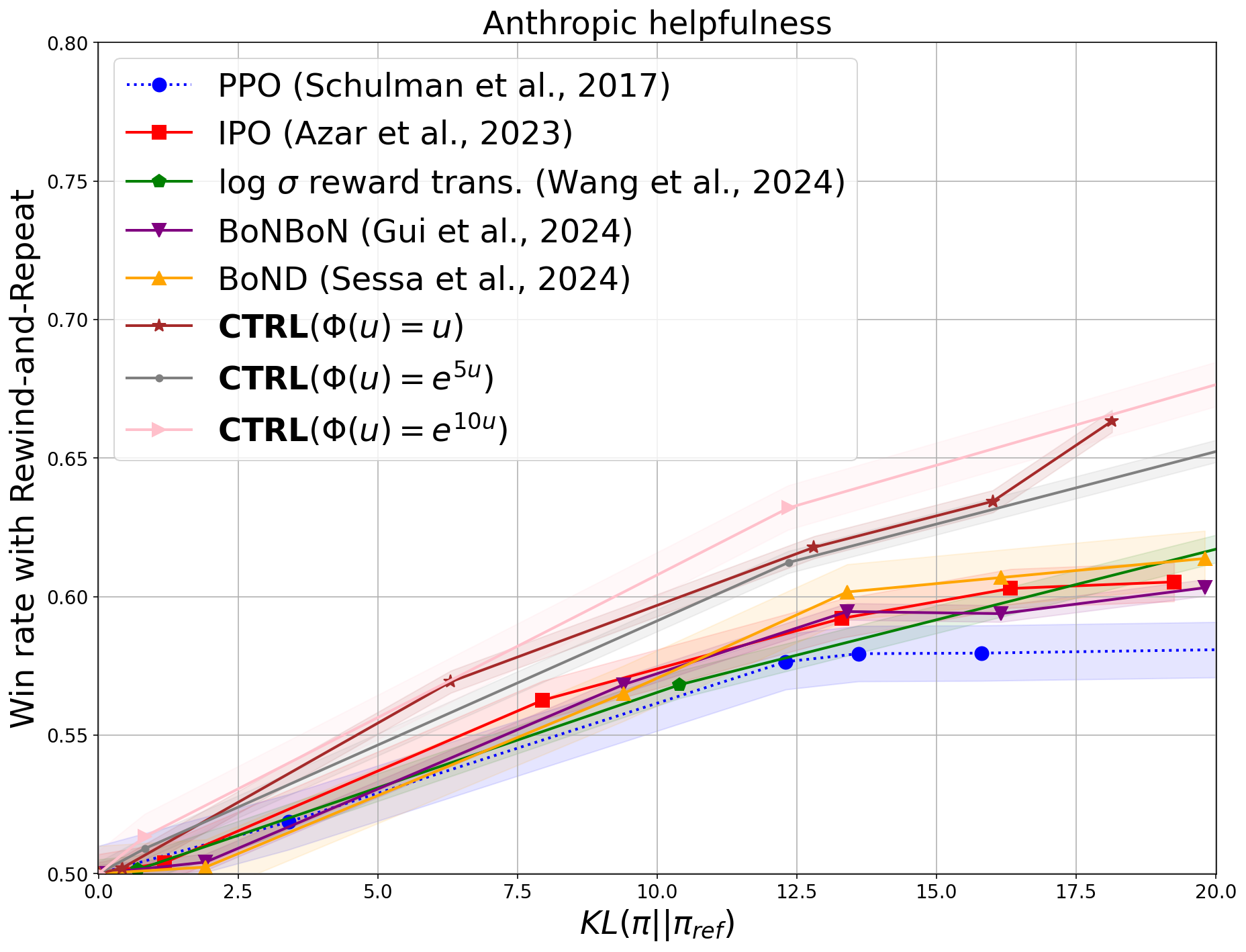}}
    \end{subfigure}
    \begin{subfigure}{
    \includegraphics[width=0.4\textwidth]{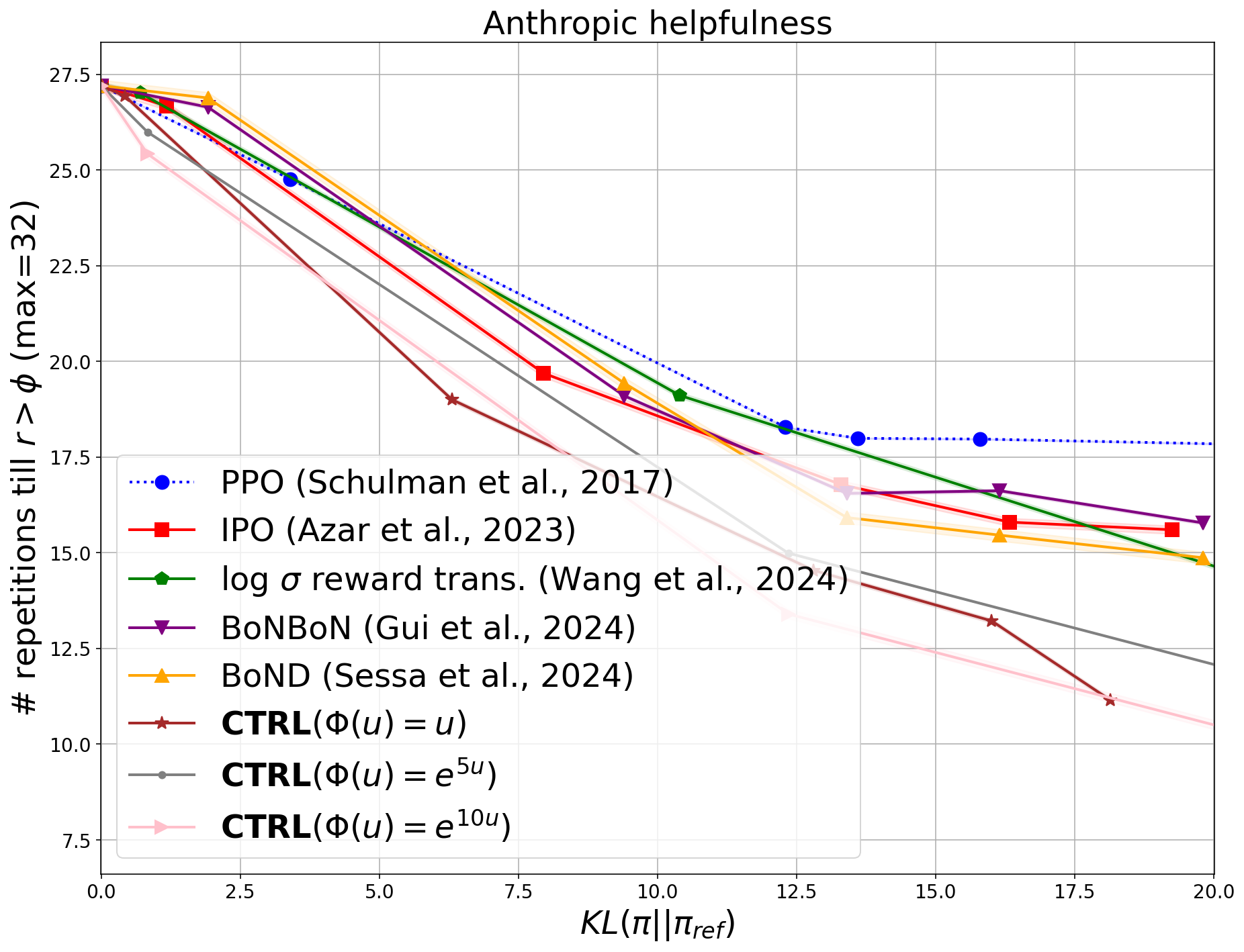}}
    \end{subfigure}
    \caption{On the Rewind-and-Repeat inference-time strategy, we experiment with $\phi=85\%$ile of the base policy's per-prompt reward on the Anthropic helpfulness dataset. Here, we see that exponential transformations continue to outperform baselines. In (a) we apply the rewind-and-repeat strategy (upto a maximum of 32 repeats), and then compare the win-rate of the aligned policy against the base policy. In (b), we measure the number of repeat-trials (maximum of 32 repeats) required to achieve a reward greater than the $\phi=85\%$ile of the base policy.} 
    \label{fig:rebuttal_fig2}
\end{figure}

\end{document}